\documentclass[12pt]{article}
\usepackage{amsmath, amssymb, amsthm, amsfonts}
\usepackage{graphicx,psfrag,epsf, subfigure}
\usepackage{enumerate}
\usepackage{natbib}
\usepackage{fullpage}
\usepackage{url} % not crucial - just used below for the URL 
\usepackage[ruled,vlined]{algorithm2e}
\usepackage[colorlinks=true, linkcolor=blue, urlcolor=blue, citecolor=blue]{hyperref}
%\pdfminorversion=4
% NOTE: To produce blinded version, replace "0" with "1" below.
\newcommand{\blind}{1}
\newcommand\blue{\textcolor{blue}}

% DON'T change margins - should be 1 inch all around.
%\addtolength{\oddsidemargin}{-.5in}%
%\addtolength{\evensidemargin}{-.5in}%
%\addtolength{\textwidth}{1in}%
%\addtolength{\textheight}{-.3in}%
%\addtolength{\topmargin}{-.8in}%

\def\P{\mathbb{P}}
\def\I{\mathbf{I}}

\def\a{\alpha}
\def\R{\mathbb{R}}

\def\E{\mathbb{E}}
\def\N{\mathbb{N}}
\let\hat\widehat
\let\tilde\widetilde

\newtheorem{assumption}{Assumption}

\newtheorem{theorem}{Theorem}[section]
\newtheorem{proposition}[theorem]{Proposition}
\newtheorem{lemma}[theorem]{Lemma}
\newtheorem{corollary}[theorem]{Corollary}
\newtheorem{remark}{Remark}[section]

\begin{document}

\def\spacingset#1{\renewcommand{\baselinestretch}%
{#1}\small\normalsize} \spacingset{1}

%%%%%%%%%%%%%%%%%%%%%%%%%%%%%%%%%%%%%%%%%%%%%%%%%%%%%%%%%%%%%%%%%%%%%%%%%%%%%%

\if1\blind
{
  \title{\bf Online Covariance Matrix Estimation in Stochastic Gradient Descent}
  \author{Wanrong Zhu\\
    Department of Statistics, University of Chicago\\
    and \\
    Xi Chen  \\
    Leonard N. Stern School of Business, New York University\\
and\\
Wei Biao Wu  \\
 Department of Statistics, University of Chicago\\
}
 \date{}
  \maketitle
} \fi

\if0\blind
{
  \bigskip
  \bigskip
  \bigskip
  \begin{center}
    {\LARGE\bf Online Covariance Matrix Estimation in Stochastic Gradient Descent\par} 
  \end{center}
  \medskip
} \fi

\bigskip
\begin{abstract}
	The stochastic gradient descent (SGD) algorithm is widely used for parameter estimation, especially for huge data sets and  online learning. While this recursive algorithm is popular for computation and memory efficiency,  quantifying variability and randomness of the solutions has been rarely studied. This paper aims at conducting statistical inference of SGD-based estimates in an online setting. In particular, we propose a fully online estimator for the covariance matrix of averaged SGD iterates (ASGD) only using the iterates from SGD. We formally establish our online estimator's consistency and show that the convergence rate is comparable to offline counterparts. Based on the classic asymptotic normality results of ASGD, we construct asymptotically valid confidence intervals for model parameters. Upon receiving new observations, we can quickly update the covariance matrix estimate and the confidence intervals.  This approach fits in an online setting and takes full advantage of SGD: efficiency in computation and memory. 
\end{abstract}

\noindent%
{\it Keywords:} Statistical inference, averaging stochastic gradient descent, asymptotic normality, recursive  

%\vfill

%\newpage
\spacingset{1.5} % DON'T change the spacing!
\section{Introduction}
\label{sec:intro}
Model parameter estimation through optimization of an objective function is a fundamental problem in statistics and machine learning. Here we consider the classic setting where the true model parameter $x^{*}\in \R^{d}$ can be characterized as the minimizer of a convex objective function $F: \R^{d}\rightarrow \R$, i.e.,
\begin{equation}\label{eq:obj}
x^{*} = \underset{x\in \R^{d}}{\arg\min} F(x).
\end{equation}
The objective function $F(x)$ is defined as $F(x) = \E_{\xi \sim \Pi}{f(x, \xi)}$, where $f(x, \xi)$ is a noisy measurement of $F(x)$ and $\xi$ is a random variable following the distribution $\Pi$. 

In recent years, huge data sets and streaming data arise frequently. Classic deterministic optimization methods that require storing all the data are not appealing due to expensive memory cost and computational inefficiency. To resolve these issues, one can apply the Robbins-Monro algorithm \citep{robbins1951stochastic, kiefer1952stochastic},  also known as Stochastic Gradient Descent (SGD), especially for online learning \citep{bottou1998online, mairal2010online, hoffman2010online}. Setting $x_{0}$ as the initial point, the $i$-th iteration of the SGD algorithm takes the following form
\begin{equation}\label{eq:SGDite}
x_{i} = x_{i-1} - \eta_{i}\nabla{f(x_{i-1}, \xi_{i})},\ i\ge 1,
\end{equation}
where $\{\xi_{i}\}_{i\ge 1}$ is a sequence of \emph{i.i.d} sample from the distribution $\Pi$, $\nabla{f}$ is the gradient of $f(x, \xi)$ with respect to the first argument $x$, and $\eta_{i}$ is the step size at the $i$-th step. This recursive adaptive algorithm performs one update at a time and does not need to remember outcomes in previous iterations. Therefore, it is computationally efficient,  memory friendly, and able to process data on the fly. 

Despite these advantages, SGD performs frequent updates with high variability, and the outcomes can fluctuate heavily. The crucial problem is to understand the variability and randomness of the solutions.
%Decision making only based on point estimators can be easily affected by the uncertainty from the randomness of the incoming data and the stochastic gradients. 
In this paper, we address the uncertainty quantification problem in the \emph{online setting} where data can arrive sequentially.
%and decisions must be made as data arrives.  
In particular, we propose a fully online approach to estimate the covariance matrix of SGD-based estimates only using the iterates from SGD. The efficient algorithm we propose is recursive. It performs an immediate update of the covariance estimate as new data arrives, which follows the spirit of SGD. 
%The computational and memory complexity at the update step is $O(d^2)$, which does not depend on the sample size. 
We can then conduct statistical inference with the estimated covariance matrix and construct confidence intervals for model parameters in a fully online fashion.
  
Before discussing our method, we provide a brief review of the literature on SGD. The asymptotic convergence of SGD iterates has been studied extensively in the early years \citep{blum1954approximation, dvoretzky1956, robbins1971convergence, ljung1977analysis, sacks1958, fabian1968asymptotic, lai2003stochastic}. To further investigate the asymptotic distribution of SGD, \citet{polyak1992acceleration} and \citet{ruppert1988efficient}  introduced the averaged SGD (ASGD), a simple modification where iterates are averaged, and established the asymptotic normality of the obtained estimate. Moreover, it is known that ASGD estimates achieve the optimal central limit theorem rate $\mathcal{O}_P(1/\sqrt n)$ by running SGD for $n$ iterations under certain regularity conditions. For linear stochastic approximation, \citet{mou2020linear} modified the Polyak-Ruppert covariance with an additional correction term concerning the constant step size. Differently from the SGD algorithm, \citet{toulis2017asymptotic} introduced implicit SGD procedures and analyzed the asymptotic distribution of the averaged implicit SGD iterates.  Convergence in non-asymptotic fashion has also been studied recently for SGD and its variants with different objective functions %for strongly convex objectives 
 \citep{rakhlin2011making, moulines2011non, hazan2014beyond,
	% step size decay (inverse of n), non-strongly-convex 
	bach2013non, 
	%constant step size, convergence of objective function.  adaptive step sizes  
	duchi2011adaptive, 
	%decreasing over n or constant.  
	kingma2014adam, shamir2013stochastic}. Our method and analysis rely on the averaged SGD and its asymptotic normality in later discussions. 

In addition to convergence and error bounds of SGD-based estimators, statistical inference problems based on SGD have recently started to gain more attention. Instead of focusing on point estimators, one is interested in assessing the uncertainty of the estimates through their confidence intervals/regions. \citet{chen2016statistical}  introduced the inference problem and proposed a batch-means method to construct asymptotically valid confidence intervals based on asymptotic normality of ASGD. \citet{fang2018online} and \citet{fang2019scalable} proposed bootstrap procedures for constructing confidence intervals through the perturbed-SGD. Meanwhile, variants of the SGD algorithm and corresponding inference in non-asymptotic fashion are studied in \citet{su2018uncertainty} and  \citet{liang2019statistical}. For online $l_{1}$ penalized problems, \citet{chao2019generalization} proposed a class of generalized regularized dual averaging and made uncertainty quantification possible for online sparse algorithms.

\subsection{Problem formulation} Our work in this paper is applicable to vanilla SGD, which is most widely used in practice. We use the ASGD iterate $$\bar{x}_{n} = n^{-1}\sum_{i=1}^{n}x_{i}$$ as the estimate for the model parameter at the $n$-th step. We set step size $\eta_{i} = \eta i^{-\alpha} (i\ge 1$) with $\eta>0$ and $\alpha\in(0.5, 1)$ as suggested by \citet{polyak1992acceleration}. Define 
\begin{equation}\label{sandwich}
A = \nabla^{2}F(x^{*}), \ S = \E\left([\nabla f(x^{*}, \xi)][\nabla f(x^{*}, \xi)]^{T}\right).
\end{equation}
From \citet{polyak1992acceleration}, under suitable conditions, $\bar{x}_{n}$ has the asymptotic normality:
\begin{equation}\label{eq:asym_norm}
\sqrt{n}(\bar{x}_{n} - x^{*}) \Rightarrow  N(0, \Sigma),
\end{equation}
where $\Sigma = A^{-1}SA^{-1}$,  which is known as the ``sandwich'' form of the covariance matrix. To leverage the asymptotic normality result for inference, it is critical to estimate the limiting covariance matrix $\Sigma$. Intuitively, one can estimate $S$ with a simple sample average $\hat S_{n} = n^{-1}\sum_{i=1}^{n}[\nabla f(x_{i-1}, \xi_{i})][\nabla f(x_{i-1}, \xi_{i})]^{T}$, and similarly estimate $A$ with $\hat A_{n} = n^{-1}\sum_{i=1}^{n}\nabla^{2}f(x_{i-1}, \xi_{i})$. Then the limiting covariance matrix $\Sigma$ can be estimated by the consistent plug-in estimator $\hat A_{n}^{-1}\hat S_{n}\hat A_{n}^{-1}$ (see \cite{chen2016statistical}). However, computation of the Hessian matrix of the loss function is not always available, e.g., %for legacy codes where only the SGD iterates are computed, 
	certain computations are not available in many existing codebases that only adopt SGD for optimization and in cases such as quantile regression, the Hessian matrix does not even exist. Also, the plug-in estimator may be computationally costly when $d$ is large since it involves matrix inversion with $O(d^3)$ time complexity in general. %Thus, in general, the online plug-in estimator has a time complexity of $O(nd^3)$.

Our goal is to obtain an online estimate of the covariance matrix of $\sqrt{n}\bar{x}_{n}$, only through the SGD iterates $\{x_{1}, x_{2},..., x_{n}\}$. Our approach is attractive in situations where the computation for $A^{-1}$ and $S$ are difficult, which is quite typical in practice. Also, the approach is efficient in both computation and memory due to its recursive property, i.e., the estimate at $n$-th step $\hat\Sigma_{n}$ can be updated from $\hat\Sigma_{n-1}$ within $O(d^{2})$ computation. With the estimate, we can perform uncertainty quantification and statistical inference with desirable computation and memory efficiency. The approach is useful for online learning, where the data is constantly arriving over time, such as streaming data. 
%In such online settings,  our method generates a sequence of accurate inference results improving over time.

%where a sequence of predictions is needed based on the correct answer to previous prediction tasks and possibly additional available information.

For the time-homogeneous Markov chain, $\{x_{i}\}_{i\in\mathbb{Z}}$ is a stationary process. Under certain short-range dependence conditions, we have
$$
\sqrt{n}\left(\bar{x}_{n} - \E x_{i}\right)\Rightarrow N(0, \sigma^{2}),
$$
where
 $$\sigma^{2} = \lim_{n\rightarrow\infty}\text{Var}(\sqrt{n}\bar{x}_{n}) = \sum_{i=-\infty}^\infty {\rm cov}(x_0, x_i)$$ is the long-run variance, and it plays a fundamental role in the statistical inference of stationary processes. To estimate the long-run variance, one can apply the batch-means method \citep{Glynn:91, Flegal:10,subsampling, resampling, kitamura1997empirical}. Given $x_{1}, ..., x_{n}$, let $1\le l_{n}\le n$ be the batch size. Based on batch-means $\sum_{k=i}^{i+l_{n}}x_{k}/l_{n} - \bar{x}_{n}$ for $1\le i\le n - l_{n} + 1$, one can estimate $\sigma^{2}$ by 
$$
\sigma_{n}^{2} = \frac{l_{n}}{n - l_{n} + 1}\sum_{i = 1}^{n-l_{n}+1}\left(\sum_{k=i}^{i+l_{n}-1}x_{k}/l_{n} - \bar{x}_{n}\right)^{2}.
$$
As an alternative, one can use the non-overlapping batch-means $\sum_{k=i}^{i+l_{n}}x_{k}/l_{n} - \bar{x}_{n}$ for $i = 1, 1 + l_{n}, 1+2l_{n}, ...$, to construct a similar estimate.
Properties of overlapping and non-overlapping batch-means estimators are discussed in \citet{subsampling} and \citet{resampling}.
In our problem, estimation of $\Sigma$ in \eqref{eq:asym_norm} becomes more complicated since SGD iterates form a non-stationary Markov Chain. 

To apply to SGD, \citet{chen2016statistical} modified the classic non-overlapping batch-means by allowing increasing batch sizes and showed that the modified batch-means estimator is consistent.
%However, a serious drawback of their approach is that it is not appilicable in actually online scenerious. It requires in advance the desired accuracy or the total number of steps. Moreover, the estimate can only be updated after receiveing the whole batch of data.  The algoirthm has to wait for a long time before an update since the batch size grows polynomially fast. 
 However, their approach is not in line along with the spirit of SGD, the fully online fashion. Their construction of covariance estimator $\hat\Sigma_{n}$ requires the information on the total number of iterations $n$ a priori. There is no simple algebraic relation between $\hat\Sigma_{n}$ and $\hat\Sigma_{n+1}$. In other words, when a new data point $x_{n+1}$ arrives later, their algorithm needs to re-compute their estimate from the beginning and cannot perform efficient sequential updating. So the approach is computationally expensive for online learning, where the dynamic training data is arriving over time, and the goal is to make sequential predictions; see Remark \ref{rm: BM} for a detailed discussion of \citet{chen2016statistical}. 

To address the above problems, we develop in this paper a fully \emph{online approach} for asymptotic covariance matrix estimation,  which we refer to as online batch means method. The construction does not require prior knowledge of the total sample size. Immediate updates from $\hat\Sigma_{n}$ to $\hat\Sigma_{n+1}$ can be performed recursively as new data is coming in, which fits our online setting. To achieve this goal, we design a novel construction of batches with time-varying size, which substantially extends the one in \citet{chen2016statistical}.  Similar to the recursive nature of SGD, our algorithm is also recursive and it updates the covariance matrix estimate once at a time only through the stochastic gradient 
%of the loss function at the current data point and results from last step 
within $O(d^2)$ computation. Note that since we are learning a $d\times d$ covariance matrix,  it requires at least $O(d^2)$ computation to update the covariance matrix estimates. In the important special case of marginal inference of each coordinate of the parameter vector, our online batch means estimator only needs to compute and store diagonals of the covariance matrix estimate, which only require $O(d)$ computation and $O(d)$ memory. The idea of online estimation is motivated by \citet{wu2009recursive},  who studied the estimation of long-run variances of stationary and ergodic processes. As mentioned above, the SGD iterates in (\ref{eq:SGDite}) form a non-homogeneous (non-stationary)  Markov Chain since the step size $\eta_{k}$ decays as $k$ increases, for example $\eta_{k} = \eta k^{-\a}$ for $\a \in (1/2, 1)$ as suggested by \citet{polyak1992acceleration}. Hence, the asymptotic behaviors of SGD and stationary processes are fundamentally different. The construction, which is associated with batch sizes, is novel and different for SGD iterates and stationary sequences. This non-stationarity also brings substantial difficulties in technical analysis. The convergence of our estimator is far from being trivial. We formally establish the consistency result and obtain the convergence rate of our online estimator in Section \ref{sec:theory}.

We summarize our contributions as follows. We propose a fully online approach to estimate the asymptotic covariance matrix of the ASGD solution and conduct statistical inference. 
%The fully online fashion is important for applications where data comes in a stream and real-time update of parameter estimation is needed before seeing future data. The advantage of our covariance estimate is: (1) The estimate can be updated step by step (online fashion); (2) Memory complexity is $O(d^2)$, which is independent of the sample size $n$; (3) The computational complexity at an update step is $O(d^2)$, and the total computational cost only scales linearly in $n$. 
 The fully online fashion allows efficient sequentially updating. It is important for online learning, where data comes in a stream and real-time update of predictions is needed before seeing future data. It has potential applications such as online advertisement placement and online web ranking \citep{richardson2007predicting, zhang2016bid}.  Our method is efficient in both computation and memory. In particular, the computational and memory complexity at the update step is $O(d^2)$, and the total computational cost only scales linearly in $n$. 
In terms of theoretical merits, the proposed estimator is the first fully online fashion estimator with rigorous convergence property for asymptotic covariance of ASGD. We show that the convergence rate of our online estimator is comparable to the offline counterparts.
%We also show that the convergence is even faster in a special case of mean estimation model. 

\subsection{Organization and notation}
The remainder of this paper is organized as follows. In Section \ref{sec:meth},  we propose the online estimator (two versions) for the asymptotic covariance matrix of ASGD iterates and corresponding algorithms. In Section \ref{sec:theory}, we show that the online estimator is consistent and obtains the desired convergence rate. Also, confidence intervals/regions based on our online estimator are constructed for statistical inference.  Section \ref{sec:exper} provides a simulation study to demonstrate the convergence rate of the online estimator and the asymptotically valid coverage of the confidence intervals.  Further discussion and future work are presented in Section \ref{sec:futu}.

Through out the paper, for a vector $\mathbf{a}=(a_{1},...,a_{d})\in \mathbb{R}^{d}$, $\|\mathbf{a}\|_{2}$ is defined as the vector $l_{2}$ norm $\|\mathbf{a}\|_{2}=\left(\sum_{i=1}^{d}a_{i}^{2}\right)^{1/2}$. For a matrix $\mathbf{A}=(a_{ij})\in \mathbb{R}^{d\times d}$,  we use $\|\mathbf{A}\|_{F}$ to denote its Frobenius norm $\|\mathbf{A}\|_{F}=\left(\sum_{i=1}^{d}\sum_{j=1}^{d}a_{ij}^{2}\right)^{1/2}$, and $\|\mathbf{A}\|_{2}$ to denote its operator norm $\|\mathbf{A}\|_{2}=\max_{\|x\|_{2}\le1}\|\mathbf{A}x\|_{2}$. When $\mathbf{A}$ is positive semi-definite, $\lambda_{A}$ denotes the largest eigenvalue of $\mathbf{A}$ and $tr(\mathbf{A})$ denotes its trace.  We use $\mathbf{I}_{d}$ to denote a $d\times d$ identity matrix. For positive sequences $\left\{a_{n}\right\}_{n\in\N}$ and $\left\{b_{n}\right\}_{n\in\N}$, $a_{n}\lesssim b_{n}$ means there exists some constant $C$ such that $a_{n}\le Cb_{n}$ for all large $n$. And $a_{n}\asymp b_{n}$ if both $a_{n}\lesssim b_{n}$ and $b_{n}\lesssim a_{n}$ hold. For $t\in \R$, $\left\lfloor t \right\rfloor$ is the largest integer less than or equal to $t$. 
%Constant $C$ will be of different values in different equations for notational simplicity. 
For notational simplicity, we use notation $C$ for constants which can take different values in different equations.
We define conditional expectation $\E_{n}(\cdot) = \E(\cdot|\mathcal{F}_{n})$, where $\mathcal{F}_{n}$ is $\sigma$-algebra generated by $\{\xi_{i}\}_{i\le n}$.  Moreover, we use $\Rightarrow$ to denote convergence in distribution.  

\section{Online Approach}
\label{sec:meth} 
%Before presenting the mathematical expression of covariance estimator, we briefly explain the intuition of construction as follows. To update the covariance estimate upon receiving a new data point $x_{i}$, we construct a new batch including previous data points from $t_i$ to $i$, i.e., $\{x_{t_{i}}, ..., x_{i}\}$. We compute a new local variance estimation term based on the batch and view it as the effect of the new data point $x_{i}$ on the final variance estimator. It is added to the covariance estimate with a novel re-weighting step. So upon receiving each new data point, we can update the covariance estimate. 
%In this section, we are going to introduce the construction details of our recursive covariance estimator and show that the estimate $\hat\Sigma_{n}$ at the $n$-th step can be updated through results in the $(n-1)$-th step and the new iterate $x_{n}$ at the $n$-th step within $O(d)$ computation. 
 We first introduce a time varying batch scheme used in our online approach. Consider infinite sequentially arriving SGD iterates $\{x_{i}\}_{i= 1, 2, ...}$ in \eqref{eq:SGDite}. Let $\{a_{m}\}_{m\in \N}$ be a strictly increasing integer-valued sequence with $a_{1} = 1$.   For the $i$-th iterate $x_{i}$, we consider a data block $B_{i}$ including iterates from past iterations $t_i$ to $i$, i.e., $$B_{i} = \{x_{t_{i}}, ..., x_{i}\},$$
where $t_{i}$ is the index of iterate we trace back to at the $i$-th step. The value of $t_{i}$ is determined by the sequence $\{a_{m}\}_{m\in \N}$ through $t_{i} = a_{m}$ when $i\in [a_{m}, a_{m+1})$. For example, $t_{i} = \left\lfloor \sqrt{i} \right\rfloor^{2}$ if $a_{m} = m^{2}$.  In this case we have:\\ $B_{1} = \{x_{1}\}$, $B_{2} = \{x_{1}, x_{2}\}$, $B_{3} = \{x_{1}, x_{2}, x_{3}\}$, \\
$B_{4} = \{x_{4}\}$, $B_{5} = \{x_{4}, x_{5}\}$, $B_{6} = \{x_{4}, x_{5}, x_{6}\}$, $B_{7} =\{x_{4}, x_{5}, x_{6}, x_{7}\} $, $B_{8} =\{x_{4}, x_{5}, x_{6}, x_{7}, x_{8}\}, $  \\
$B_{9} = \{x_{9}\}$, $B_{10} = \{x_{9}, x_{10}\}$, $B_{11} = \{x_{9}, x_{10}, x_{11}\}, ...$ .\\
We can see that the batch sizes are time-varying. The blocks $\{B_{i}: {a_{m}\le i < a_{m+1}}\}$ can also be viewed as the so-called “forward scans” in block subsampling  \citep{mcelroy2007computer, nordman2013}. That is, given non-overlapping blocks $\{x_{a_{m}},....,  x_{a_{m+1}-1}\}$, the forward scans are overlapping blocks of sequentially increasing length starting from $x_{a_{m}}$.

\subsection{Online covariance matrix estimator based on batch means}
Based on blocks $\{B_{i}\}_{i\in \N}$, the covariance matrix estimator is defined as the sum of squared block sums (centered) divided by the sum of block lengths, i.e., at the $n$-th step
\begin{equation}\label{est}  
\hat{\Sigma}_{n} = \frac{\sum_{i=1}^{n}\left(\sum_{k=t_{i}}^{i}x_{k}-l_{i}\bar{x}_{n}\right)\left(\sum_{k=t_{i}}^{i}x_{k}-l_{i}\bar{x}_{n}\right)^{T}}{\sum_{i=1}^{n}l_{i}},
\end{equation}
where $l_{i} =|B_{i}|= i - t_{i} + 1$ denotes the length of $B_{i}$.
%Based on the batch-means of $B_{i} = \{x_{t_{i}}, ..., x_{i}\}$,
%we compute an effect term  
%\begin{equation}\label{eq: effect}
%l_{i}\left(\sum_{k=t_{i}}^{i}x_{k}/l_{i}-\bar{x}_{n}\right)\left(\sum_{k=t_{i}}^{i}x_{k}/l_{i}-\bar{x}_{n}\right)^{T} ,
%\end{equation}
%where $l_{i} =|B_{i}|= i - t_{i} + 1$ denoting the length of $B_{i}$. The term can be considered the effect of $x_{i}$ on the covariance matrix estimate. It is added to the final covariance matrix estimate with a novel re-weighting step, i.e. weight $\frac{l_{i}}{\sum_{i=1}^{n}l_{i}}$. The  covariance matrix estimator $\hat{\Sigma}_{n}$ at $n$-th step is then defined as 
%\begin{equation}\label{est}  
%\hat{\Sigma}_{n} = \frac{\sum_{i=1}^{n}\left(\sum_{k=t_{i}}^{i}x_{k}-l_{i}\bar{x}_{n}\right)\left(\sum_{k=t_{i}}^{i}x_{k}-l_{i}\bar{x}_{n}\right)^{T}}{\sum_{i=1}^{n}l_{i}}.
%\end{equation}

The novel idea of constructing data block $B_{i}$, which only includes past iterates, is the key to make the algorithm fully online. 
%Our estimator is fundamentally different from that in \citet{chen2016statistical}. First, in terms of batch construction, \citet{chen2016statistical} construct nonoverlapping batches with increasing batch sizes. In practice, even with the trick of using predefined batch sizes one has to wait for polynomially more samples before an update. It is a big problem when n is large. While we construct batches $B_{i}$ for each coming data $x_{i}$ and immediate update can be performed. This novel idea is the key to make the algorithm fully online. 
Next, we will show that the estimate $\hat{\Sigma}_{n}$ can be computed recursively.
Let $W_{i}$ denote the sum of the block $B_{i} = \{x_{t_{i}}, ..., x_{i}\}$, i.e., 
\begin{equation}
W_{i} = \sum_{k=t_{i}}^{i}x_{k}.
\end{equation} 
 When 
 %$x_{i+1}$ is in the same motivating batch as $x_{i}$, i.e 
 $t_{i+1}=t_{i} = a_{m}$ for some $m$,  $ B_{i+1} = B_{i}\cup\{x_{i+1}\}$ and 
$$W_{i+1} = W_{i} + x_{i+1}, \ l_{i+1} = l_{i}+1.$$ 
When 
%$x_{i+1}$ belongs to the next motivating batch, i.e 
$t_{i+1}=a_{m+1}$ for some $m$, we start a new block  $B_{i+1} = \{x_{i+1}\}$ and 
$$W_{i+1} = x_{i+1}, \ l_{i+1} = 1.$$ 
We can see that both the batch sum $W_{i}$ and the batch length $l_{i}$ can be updated recursively. With the notation of $W_{i}$, the estimator in (\ref{est}) can be expressed as 
\begin{equation}\label{est2} 
\hat{\Sigma}_{n} = \frac{\sum_{i=1}^{n}W_{i}W_{i}^{T} +
	\sum_{i=1}^{n}l_{i}^{2}\bar{x}_{n}\bar{x}_{n}^{T}  -  \left(\sum_{i=1}^{n}l_{i}W_{i}\right)\bar{x}_{n}^{T} - \bar{x}_{n}\left(\sum_{i=1}^{n}l_{i}W_{i}\right)_{n}^{T}}{\sum_{i=1}^{n}l_{i}}   . 
\end{equation}
To further simplify the form, we introduce
\begin{equation}
\begin{split}
&V_{n} = \sum_{i=1}^{n}W_{i}W_{i}^{T}\ ,  \ P_{n} = \sum_{i = 1}^{n}l_{i}W_{i}.\\
&v_{n}=\sum_{i=1}^{n}l_{i},\ \ \ \text{and}\ \ q_{n} = \sum_{i=1}^{n}l_{i}^{2}.
\end{split}
\end{equation} 
They can be computed recursively since both $W_{i}$ and $l_{i}$ can be updated recursively. Now, $\hat{\Sigma}_{n}$ in \eqref{est} can be finally rewritten as
\begin{equation}\label{eq:estrec}
\hat{\Sigma}_{n} = \frac{V_{n}+q_{n}\bar{x}_{n}\bar{x}_{n}^{T}-P_{n}\bar{x}_{n}^{T} - \bar{x}_{n}P_{n}^{T} }{v_{n}}.
\end{equation} 
All five components in \eqref{eq:estrec}: $V_{n}, q_{n}, P_{n}, v_{n}, \bar{x}_{n}$ can be updated recursively. Thus,  $\hat{\Sigma}_{n}$ can be updated through  results in the $(n-1)$-th step and the new iterate $x_{n}$ within $O(d^{2})$ computation. 

To summarize, we propose Algorithm \ref{algo:recursive}. As shown in Algorithm \ref{algo:recursive}, the five components of $\hat\Sigma_{n+1}$ can be easily updated from their values in the $n$-th step. There is no need to store all the outcomes in the previous steps. The memory complexity is $O(d^2)$, independent of the sample size $n$. In the update step, the computational complexity is also $O(d^2)$. The total computational cost scales linearly in $n$. The algorithm is much more efficient compared to non-recursive methods and naturally fits online learning scenarios.

\begin{algorithm}[!t] 
	\textbf{Input:} function $f(\cdot)$, parameter $(\alpha, \eta)$, step size $\eta_{i}=\eta i^{-\a}$ for $i\ge 1$, pre-defined sequence $\{a_{m}\}_{m\in N}$.\\ 
	\textbf{Initialize:} $m_{0}= l_{0} = 0 , v_{0} = P_{0} = q_{0}= V_{0} =  W_{0} = \bar{x}_{0} = 0, x_{0}$\;
	
	\textbf{For} $n$ = 0, 1, 2, 3, ...\\
	\hspace{4.5ex} 
	\textbf{Receive:} new data $\xi_{n+1}$\\ 
	\hspace{4.5ex} \textbf{Do the following update:}\\
	\hspace{8.5ex}1. $x_{n+1} = x_{n} - \eta_{n+1}\nabla{f(x_{n}, \xi_{n+1})}$;\\
	\hspace{8.5ex}2. $\bar{x}_{n+1} = (n\bar{x}_{n}+x_{n+1})/(n+1)$;\\ 
	\hspace{8.5ex}3. \textbf{if} $n+1 =a_{m_{n}+1}$, \textbf{then}: \\
		\hspace{10.5ex}$m_{n+1} = m_{n} + 1$;  $l_{n+1} =1$; $W_{n+1} = x_{n+1}$;\\ 
		\hspace{8.5ex}\  \textbf{else:} \\
		\hspace{10.5ex}  $m_{n+1} = m_{n}$; $l_{n+1} =  l_{n} + 1$;  $W_{n+1} = W_{n} +  x_{n+1}$;\\
	\hspace{8.5ex}4. $q_{n+1} = q_{n}+l_{n+1}^{2}$;\\
	\hspace{8.5ex}5. $v_{n+1} = v_{n}+l_{n+1}$;\\ 
	\hspace{8.5ex}6. $V_{n+1} = V_{n} + W_{n+1}W_{n+1}^{T}$;\\
	\hspace{8.5ex}7. $P_{n+1} = P_{n} + l_{n+1}W_{n+1}$;\\
	\hspace{8.5ex}8. $S = V_{n+1}+q_{n+1}\bar{x}_{n+1}\bar{x}_{n+1}^{T} - P_{n+1}\bar{x}_{n+1}^{T} - \bar{x}_{n+1}P_{n+1}^{T} $;\\ 
	\hspace{4.5ex}  \textbf{Output:} ASGD estimator $\bar{x}_{n+1}$, estimated covariance $\hat{\Sigma}_{n+1} = S/v_{n+1}$
	\caption{Update ASGD iterate and covariance matrix estimate recursively}
	\label{algo:recursive}
\end{algorithm}

%\begin{algorithm}[!t] 
%	\textbf{Input:} objective function $f(\cdot)$, parameter $(\alpha, \eta)$, step size $\eta_{i}=\eta i^{\a}$ for $i\ge 1$, pre-defined sequence $\{a_{m}\}_{m\in \N}$.\\ 
%	\textbf{Initialize:} $n_{old} = m_{old} = 0 , v_{old}= q_{old}= V_{old} = P_{old}  =  W_{old} = \bar{x}_{old} = 0, x_{old} = x_{0}$;
%
%	\hspace{4.5ex} 
%	\textbf{Receive:} new data $\xi_{new}$\\ 
%	\hspace{4.5ex} \textbf{Do the following update:}\\
%	\hspace{8.5ex}1. $n_{new} = n_{old}+1$;\\
%	\hspace{8.5ex}2. $x_{new} = x_{old} - \eta_{n_{new}}\nabla{f(x_{old}, \xi_{n_{new}})}$;\\
%	\hspace{8.5ex}3. $\bar{x}_{new} = (n_{old}\bar{x}_{old}+x_{new})/(n_{new})$;\\
%	\hspace{8.5ex}4. \textbf{if} $n_{new} == a_{m_{old}+1}$, \textbf{then} \\
%	\hspace{10.5ex}$m_{new} = m_{old} + 1$;  $l_{new} =1$; $W_{new} = x_{new}$;\\ 
%	\hspace{8.5ex} \textbf{else:} \\
%	\hspace{10.5ex}  $m_{new} = m_{old}$; $l_{new} =  l_{old} + 1$;  $W_{new} = W_{old} +  x_{new}$;\\
%	\hspace{8.5ex}5.  $q_{new} = q_{old} + l_{new}^{2}$; \\
%	\hspace{8.5ex}6.  $v_{new} = v_{old} + l_{new}$;\\ 
%	\hspace{8.5ex}7.  $V_{new} = V_{old} + W_{new}W_{new}^{T}$;\\
%    \hspace{8.5ex}8.  $P_{new} = P_{old} + l_{new}W_{new}$;\\
%    \hspace{8.5ex}9.  $S = V_{new}+q_{new}\bar{x}_{new}\bar{x}_{new}^{T}-2P_{new}\bar{x}_{new}^{T}$;\\ 
%	\hspace{4.5ex}  \textbf{Output:} ASGD estimator $\bar{x}_{new}$, estimated covariance $\hat{\Sigma}_{n_{new}} = S/(n_{new})$
%	\caption{Update ASGD estimator and its covariance matrix estimate (overlapping version) recursively}
%	\label{algo:recursive}
%\end{algorithm}
%
% 

\subsubsection{An alternative version}
\label{sec: NOL}
 The estimate $\hat\Sigma_{n}$ in \eqref{est} includes squared block sums from all $n$ blocks $\{B_{i}\}_{i=1,2,..., n}$. Block $B_{i}$ and $B_{j}$ are overlapped when $a_{m}\le i < j <a_{m+1}$ for some $m$. So $\hat\Sigma_{n}$ in \eqref{est} is a full overlapping version of the online batch means estimator. We also introduce an alternative non-overlapping version with a slightly simpler form which has a comparable performance. As data arriving sequentially, we follow the same batch scheme above to construct $\{B_{i}\}_{i=1,2,..., }$ while only include a few squared block sums. At the $n$-th step, define set $S_{n} = \{n\}\bigcup\{a_{i}-1: i>1, a_{i}\le n\}$. Consider a set of non-overlapping blocks $\{B_{i}\}_{i\in S_{n}}$, i.e.,
\begin{alignat*}{4} 
\{\{x_{a_{1}}, &..., x_{a_{2}-1}\}, ..., \{x_{a_{m-1}},&&..., x_{a_{m}-1}\}, \{x_{a_{m}},&&..., x_{n}\}\} .\\
& B_{a_{2}-1} &&B_{a_{m}-1}   &&B_{n} . 
\end{alignat*}
The alternative non-overlapping estimate at the $n$-th step includes squared block sums of $\{B_{i}\}_{i\in S_{n}}$. It is then defined as
\begin{equation}\label{est_NO}   
\hat{\Sigma}_{n, NOL} = \frac{1}{n}\underset{i\in S_{n}}{\sum}\left(\sum_{k=t_{i}}^{i}x_{k}-l_{i}\bar{x}_{n}\right)\left(\sum_{k=t_{i}}^{i}x_{k}-l_{i}\bar{x}_{n}\right)^{T}. 
\end{equation}  
The non-overlapping version estimator is also recursive and can perform a real-time update. The algorithm is almost the same as the overlapping one with same computational and memory complexity. 
One can follow the derivation of Algorithm \ref{algo:recursive} to get Algorithm \ref{algo:non-overlapping}.

\begin{algorithm}[!t] 
	\textbf{Input:} function $f(\cdot)$, parameter $(\alpha, \eta)$, step size $\eta_{i}=\eta i^{\a}$ for $i\ge 1$, pre-defined sequence $\{a_{m}\}_{m\in \N}$.\\ 
	\textbf{Initialize:} $m_{0} =l_{0} = 0 , v_{0} = P_{0} = q_{0}= V_{0} =  W_{0} = \bar{x}_{0} = 0, x_{0}$\;
	
	\textbf{For} $n$ = 0, 1, 2, 3, ...\\
	\hspace{4.5ex} 
	\textbf{Receive:} new data $\xi_{n+1}$\\ 
	\hspace{4.5ex} \textbf{Do the following update:}\\
	\hspace{8.5ex}1. $x_{n+1} = x_{n} - \eta_{n+1}\nabla{f(x_{n}, \xi_{n+1})}$;\\
	\hspace{8.5ex}2. $\bar{x}_{n+1} = (n\bar{x}_{n}+x_{n+1})/(n+1)$;\\
	\hspace{8.5ex}4. \textbf{if} $n+1 = a_{m_{n}+1}$, \textbf{then}: \\
	\hspace{10.5ex}$m_{n+1} = m_{n} + 1$;  $l_{n+1} =  1$; $W_{n+1} = x_{n+1}$;\\
	\hspace{10.5ex} $q_{n+1} = q_{n}+l_{n}^{2}$;  $V_{n+1} = V_{n} + W_{n}W_{n}^{T}$; $P_{n+1} = P_{n} + l_{n}W_{n} $\\
	\hspace{8.5ex} \textbf{else:} \\
	\hspace{10.5ex}  $m_{n+1} = m_{n}$; $l_{n+1} = l_{n} + 1$;  $W_{n+1} =  W_{n} + x_{n+1}$;\\
	\hspace{10.5ex} $q_{n+1} = q_{n}$;  $V_{n+1} = V_{n}$;  $P_{n+1} = P_{n}$\\
	\hspace{8.5ex}5. $S' = W_{n+1}W_{n+1}^{T} + l_{n+1}^{2}\bar{x}_{n+1}\bar{x}_{n+1}^{T} - l_{n+1}W_{n+1}\bar{x}_{n+1}^{T} - l_{n+1}\bar{x}_{n+1}W_{n+1}^{T}$;\\	  
	\hspace{8.5ex}6. $S = V_{n+1}+q_{n+1}\bar{x}_{n+1}\bar{x}_{n+1}^{T}- P_{n+1}\bar{x}_{n+1}^{T} - \bar{x}_{n+1}P_{n+1}^{T}  + S'$;\\ 
	\hspace{4.5ex}  \textbf{Output:} ASGD estimator $\bar{x}_{n+1}$, estimated covariance $\hat{\Sigma}_{n+1, NOL} = S/(n+1)$
	\caption{Update ASGD estimator and covariance matrix estimate (non-overlapping version) recursively}
	\label{algo:non-overlapping}
\end{algorithm}

In the stationary process case, \citet{resampling, lahiri1999theoretical} showed that the mean squared error of the classic (non-recursive) non-overlapping batch-means estimate is 33\% larger than that of its overlapping version, while the convergence rates are the same. %One possible intuition  is that, in most cases, for $k$-th big batch, most sum of batch $\{x_{t_{i}},...x_{i}\}$ for $i\in [a_{k}, a_{k+1}-1)$ are proportion to the batch size $l_{i}$. The variance of non-overlapping estimator might be slightly larger than overlapping estimator. So it is reasonable to include one or some batches instead of all the batches with adjusting normalization. 
 The comparison between the full overlapping version and the non-overlapping version of our online estimators is more complicated in the non-stationary case. In Section \ref{sec:nlcov}, we provide upper bounds for estimation errors for both overlapping and non-overlapping estimators. The two upper bounds are of the same order. The non-overlapping version is easier to analyze theoretically, given its simpler structure.  In the mean estimation model, we can obtain the precise order of the mean squared error for the non-overlapping one; see Section \ref{sec:simple case}.  We also compare the empirical performance of the two versions in Section \ref{sec: simulation_online}. However, it is hard to tell which one is more efficient based on the simulation results. We leave the rigorous comparison as a future research problem by extending \cite{resampling} to non-stationary processes.

\begin{remark}[Comparison with the non-recursive batch-means covariance matrix estimator]\label{rm: BM}
	  The non-overlapping version \eqref{est_NO} appears similar to the batch-means estimator \citep{chen2016statistical}. However, the batch schemes of the two methods are fundamentally different.  \citet{chen2016statistical} split $n$ iterates of SGD into $M+1$ non-overlapping blocks,  
	where $M$ and batch sizes $b_{m, n}$ ($m = 0, ..., M)$ are chosen based on $n$ for desired convergence.  With $e_{m, n}$ denoting the ending index of the $k$-th block, the covariance matrix estimator at $n$-th iteration in \citet{chen2016statistical} is defined as 
	\begin{equation}\label{eq: BM}   
	\hat{\Sigma}_{n, BM} = \frac{1}{M}\sum_{m = 1}^{M}b_{m, n}\left(\sum_{k=e_{m-1, n}+1}^{e_{m, n}}x_{k}/b_{m, n}-  \bar{x}_{n}\right)\left(\sum_{k=e_{m-1,n}+1}^{e_{m,n}}x_{k}/b_{m, n} - \bar{x}_{n}\right)^{T}, 
\end{equation} 
	where $e_{M, n} = n$. The optimal batch size setting as suggested in \citet{chen2016statistical} is
	$e_{m, n} =\left((m+1)/(M+1)\right)^{1/(1-\alpha)}n$ with the number of batches $M = n^{(1-\alpha)/2}$.  Since $e_{m, n}$ must depend on $n$ to ensure the desired convergence rate at the $n$-th iteration, there is no simple algebraic relation between $\hat\Sigma_{n, BM}$ and $\hat\Sigma_{n+1, BM}$. So the batch-means estimator \citep{chen2016statistical} is only suitable for offline tasks requiring final prediction/inference given the \emph{pre-specified} total sample size $n$. In contrast, our fully online estimator can sequentially improve over each iteration. Also, $n$ does not need to be specified beforehand.
	
%	\blue{Another straightforward difference between these two estimators lies in the last batch. The last batch in our online estimator $\hat{\Sigma}_{n, NOL}$ \eqref{est_NO} is adaptive. The length of block $\{x_{a_{M}}, \ldots ,x_{n}\}$ can be very small (even with the batch size 1).
%%Immediate update of the recursive estimate can be performed after receiving each iterate.
%In comparison, the last batch in \eqref{eq: BM} is $\{x_{a_{M-1}}, ...x_{M}\}$ with polynomially increasing batch size. 
% %With the desired accuracy specified in advance,  the classic batch-means estimate can be updated after receiving the whole big batch of data. A big issue here is that the algorithm has to wait for a long time before an update when the index of batch is moderate. 
% Also, the weight of batched means term (the centered squared block sum divided by the block length) is different in the two methods.  \citet{chen2016statistical} uses equal weights, i.e., $1/M$, when there are $M$ blocks in total, while weights in the online estimator \eqref{est_NO} are proportional to the length of blocks, i.e., $\frac{l_{i}}{\sum_{i\in S_{n}}l_{i}}$ for $i\in S_{n}$. }
\end{remark}

\begin{remark}[Choice of batch-sizes when $n$ is unknown]\label{rm: BM_tol}
  \citet{chen2016statistical} also propose an approach based on a target error tolerance to apply the batch-means estimator when $n$ is unknown. In particular, given the pre-specified error $\epsilon$, \citet{chen2016statistical} propose to set the ending index of the $k$-th batch by $e_{k} = \left((k+1)C\epsilon^{-2}\right)^{1/(1-\alpha)}$, where $C$ is a constant. The approach indeed enables an online updating, thus achieve the goal of recursive processing. However, choosing the constant $C$ can be difficult or arbitrary in online settings. Moreover, there is a fundamental difference. The approach in \citet{chen2016statistical} only ensures that the expected spectrum norm loss of the covariance matrix is smaller than $\epsilon$ (up to a constant) for large $n$, rather than goes to $0$. In other words, the covariance matrix estimator is not necessarily \emph{consistent}. While our online method constantly improves the covariance matrix estimate as $n\rightarrow \infty$, and the estimation error goes to $0$.
	 %\blue{\citet{chen2016statistical} also propose an approach based on a target error tolerance to apply the  batch-means estimator when $n$ is unknown. In particular, given a pre-specified error $\epsilon$, \citet{chen2016statistical} propose to set  the ending index of the $k$-th batch by $e_{k} = \left((k+1)C\epsilon^{-2}\right)^{\frac{1}{1-\alpha}}$, where $C$ is a constant. However, it is only a theoretical remedy when $n$ is unknown but is not practical. Note that the target error tolerance $\epsilon$ is not the estimation error of the SGD estimator nor the prediction error. Instead, it is an upper bound on the estimation error of the spectrum norm of the covariance matrix.  However,  it is difficult for a practitioner to get an idea of $\epsilon$ before seeing any data. Even worse, given a pre-specified target error of tolerance $\epsilon$, it is likely that we do not have enough online data points to construct the covariance estimator. In contrast, the advantage of our method is that we do not need any knowledge or estimation of the total sample size $n$.}
	\end{remark}

%\begin{remark}[Discussion of the Plug-in estimator]\label{rm: Plugin}
	
%	\end{remark}

\subsubsection{Choice of batch sizes}
The remaining question is to specify the sequence $\{a_{m}\}_{m\in\N}$. This pre-defined sequence does not depend on $n$. This ensures that we can construct batches even if the total number of data is unknown (which is a typical situation), and the incoming data will not affect the recursive estimation process. In Section \ref{sec:nlcov}, we show that $a_{m}$ is required to take a polynomial form so that the estimator is consistent. Next, we shall give some intuitive explanation and one example of choice. 

The formula in \eqref{est} bears a certain similarity to the sample covariance matrix $S_{n} = n^{-1}\sum_{i=1}^{n}(x_{i}-\bar{x}_{n})(x_{i}-\bar{x}_{n})^{T}$. On the other hand, in contrast to the standard sample covariance matrix where $\{x_{i}\}_{i\ge 1}$ are independent, our SGD iterates in \eqref{est} are highly correlated.  In other words,  we cannot ignore the covariance between data as in the construction of the sample covariance matrix. 
%Both of these two formulas come from the expansion of $\text{var}(\sqrt{n}\bar{x}_{n})$. The difference is that, in our case, $\{x_{i}\}$ are not independent. 
According to \eqref{eq:SGDite}, the correlation between $x_{i}$ and $x_{j}$ diminishes as the distance $|j-i|$ becomes larger, while the correlation between $x_{i}$ and $x_{i+1}$ becomes stronger as $i$ goes to infinity. The idea of online estimation is to choose sequence $(a_{m})_{m\in\N}$ and form non-overlapping blocks $\{B_{a_{m}-1}\}_{m>1}$ as mentioned above such that the correlation between $x_{i}$ and $x_{j}$ is sufficiently small when they are in different non-overlapping blocks. So when considering the effect of $x_{i}$, we trace back to the starting point of the non-overlapping block $x_{i}$ belongs to, i.e., construct data block $B_{i} = \{x_{t_{i}},..., x_{i}\}$. 
%As mentioned before, we should choose $\{a_{m}\}_{m\in\N}$ such that the strength of correlation between iterates in different batches is negligible.  
Recall that the $i$-th iterate $x_{i}$ through SGD takes the form $$x_{i} = x_{i-1} - \eta_{i}\nabla{f(x_{i-1}, \xi_{i})}.$$ Let $\delta_{i} = x_{i} - x^{*}$ be the error sequence, where $x^*$ is the minimizer in \eqref{eq:obj}. Then
\begin{equation}\label{eq:error}
\delta_{i} = \delta_{i-1}- \eta_{i}\nabla F(x_{i-1}) + \eta_{i}\epsilon_{i},
\end{equation}
where $\epsilon_{i}=\nabla F(x_{i-1})-\nabla f(x_{i-1}, \xi_{i})$. 
%The sequence $\{\epsilon_{i}\}$ is a martingale difference  when the gradient operator and expectation can be exchanged, i.e. 
%\begin{equation}
%\E_{i-1}\epsilon_{i}= \nabla F(x_{i-1}) - \E_{i-1}\nabla f(x_{i-1}, \xi_{i})=0.
%\end{equation}
Note that $\nabla F(x^{*}) = 0$ since $x^{*}$ is the minimizer of $F(x)$. By Taylor's expansion of $\nabla F(x_{i-1})$ around $x^{*}$, we have $\nabla F(x_{i-1})\approx  \nabla A\delta_{i-1}$, where $A= \nabla^2 F(x^*)$.  Thus, by modifying equation (\ref{eq:error}) with $\nabla F(x_{i-1})$ approximated by $A\delta_{i-1}$, for large $i$ 
\begin{align} 
\delta_{i} \approx (I-\eta_{i}A)\delta_{i-1} + \eta_{i}\epsilon_{i}.
\end{align}
Then for the $i$-th iterate $x_{i}$ and the $j$-th iterate $x_{j}$ (assume $j<i$), the strength of correlation between them is roughly
\begin{equation}
\Pi_{k=j+1}^{i}\left\|I_{d} - \eta_{k}A\right\|_{2}\le (1-\eta\lambda_{A}i^{-\alpha})^{i-j},  
\end{equation} 
when $\eta_{k} = \eta k^{-\alpha}$.
To make the correlation small, one can choose $i-j \approx Ki^{(\alpha+1)/2}$, where $K$ is a constant. Then the correlation is less than $(1-\eta\lambda_{A}i^{-\alpha})^{Ki^{\alpha}i^{(1-\alpha)/2}}$, which goes to zero as $i$ goes to infinity. Combining the correlation between $x_{i}, x_{j}$ and the form of $i-j$, a reasonable setting is that the sequence $\{a_{m}\}_{m\in\N}$ satisfies 
\begin{equation}\label{eq:a}
a_{m} - a_{m-1} = Ka_{m}^{(\alpha+1)/2}.
\end{equation}
Let $a_{m}$ increase polynomially, i.e., $a_{m} =  Cm^{\beta}$ for some constant $C$.  We obtain $\beta = 2/(1-\alpha)$ by solving equation (\ref{eq:a}).  Thus a natural choice of $a_{m}$ is
\begin{equation}
a_{m} = \left\lfloor Cm^{2/(1-\a)}\right\rfloor.
\end{equation}  
This is also the best choice in the general setting, as discussed in Section \ref{sec:nlcov}. However, the best choice of $\beta$ may change considering specific objective functions.
%With $a_{k}$ as the starting point of the $k$-th batch, we can ignore covariance between $x_{i}$ and $x_{j}$ when they are in different batches. However, this is not the only choice of $a_{k}$. We show that the estimator is consistent as long as $a_{k}$ increase polynomially.

%More rigorous demonstration will be presented in section 3. 
%In practical implement, we can treat first few batches as 'burn-in batch'. This means we do not include those iterates in the estimator. It is because the iterates in first few batches are far from optimum and the correlations between iterates are strong.  

\subsection{Statistical inference}\label{sec:inf}
 Now the limiting covariance matrix $\Sigma$ can be approximated through the online estimation proposed above. Let $0<q<1$. Based on the asymptotic normality of ASGD in \eqref{eq:asym_norm}, the $(1-q)100\%$ confidence interval for  $x_{i}^{*}$, the $i$-th coordinate of $x^{*}$, can be constructed as 
\begin{equation}\label{eq:ci_cor}
\left[\bar{x}_{n, i}- z_{1-q/2}\sqrt{\hat{\sigma}_{ii}/n},\ \bar{x}_{n, i}+ z_{1-q/2}\sqrt{\hat{\sigma}_{ii}/n}\right],
\end{equation}
where $\bar{x}_{n, i}$ is the $i$-th coordinate of $\bar{x}_{n}$, $z_{1-q/2}$ is the $(1-q/2)$-th percentile of the standard Gaussian distribution and  $\hat{\sigma}_{ii}$ is the $i$-th diagonal of the covariance matrix estimate. The confidence interval is constructed in a fully online fashion since both $\bar{x}_{n, i}$ and $\hat{\sigma}_{ii}$ can be computed recursively. Joint confidence regions and general form of confidence intervals are discussed in Section \ref{sec:CI}.

 \subsubsection{Relation to empirical likelihood}
\label{sec: EL}
As pointed out by a reviewer, the construction of the non-overlapping version estimator shares a similar spirit with the blocking scheme and covariance estimator by \citet{kim2013progressive}, who developed a progressive block empirical likelihood (PBEL) method. They consider a stationary, weakly dependent sequence $(X_1,...,X_n)$ with mean $\mu$ such that the CLT $\sqrt{n}(\bar{X}_{n} - \mu) \Rightarrow N(0, \sigma^{2})$ holds. The variance estimator $\hat{\sigma}_{n, NOL}^{2}$ in \citet{kim2013progressive} matches our scheme in Section \ref{sec: NOL} with $a_{m} = (m-1)m/2 + 1$ (or the $i$-th block has length $i$) and is shown to be a consistent variance estimator. The chi-squared limit of the log-likelihood ratio based on PBEL is established following the consistency of $\hat{\sigma}_{n, NOL}^{2}$. It would be interesting to see if one can obtain similar results as the PBEL ratio and establish a limiting distribution that can be used to calibrate confidence regions in the SGD case here. 

\section{Theoretical Results}
\label{sec:theory}

\subsection{Preamble: mean estimation model} 
\label{sec:simple case}
Before investigating the convergence property of the online batch means estimators in the general setting, we shall look at the simple mean estimation example. Taking advantage of the simpler structure of the non-overlapping version, we can obtain the exact order of convergence. Consider the  mean estimation model:
$$y = x^{*} + e,$$
where $x^{*}\in \R$ is the mean we want to estimate, $e$ is the random error with mean $0$. Let $\{y_{i}\}_{i\in\N}$ be a sequence of \emph{i.i.d} sample from the model. Consider the squared loss function at $x$, $F(x) = (y-x)^{2}/2$. The $i$-th SGD iterate takes the form 
\begin{equation}
\label{eq:sgd_mean}
x_{i} = x_{i-1} +\eta_{i}(y_{i} - x_{i-1}), i\ge 1,
\end{equation}
where we choose the step size $\eta_{i} = \eta i^{-\alpha}$, $\alpha\in(1/2, 1)$. Then the error $\delta_{i} = x_{i} - x^{*}$ takes the form 
$$\delta_{i} = (1-\eta_{i})\delta_{i-1} + \eta_{i}e_{i}.$$
 In this case, one can have an explicit form of  $\text{var}(\sqrt{n}{\bar x_{n}})$ and $\hat\Sigma_{n, NOL}$. Additionally, we can have an explicit form for the order of magnitude of the mean squared error of $\hat\Sigma_{n, NOL}$. Let the variance $\text{var}({\sqrt{n}\bar x_{n}}) = \sigma_{n}^{2}$. We have the following proposition. 
\begin{proposition}
	\label{prop:mean}
	For $m\ge 2$, let $a_{m} = \lfloor c m^{\beta} \rfloor$, where $\beta > 1$ and $c > 0$ are constants. Given the SGD iterates defined in \eqref{eq:sgd_mean}, we have
	\begin{equation}
	\E(\hat\Sigma_{n, NOL} - \sigma_{n}^{2})^{2}\asymp n^{-1/\beta}+ n^{2\alpha + 2/\beta-2}.
	\end{equation}
\end{proposition}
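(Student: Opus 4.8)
The plan is to exploit the fact that in this scalar linear model every quantity is Gaussian-computable. The error recursion $\delta_i=(1-\eta_i)\delta_{i-1}+\eta_i e_i$ solves explicitly as $\delta_i=\sum_{k=1}^i \eta_k\phi_{i,k}e_k+\phi_{i,0}\delta_0$ with $\phi_{i,k}=\prod_{j=k+1}^i(1-\eta_j)$, so each batch sum $S_m:=\sum_{i\in B_m}(x_i-x^{*})=\sum_k c_{m,k}e_k$ is a linear form in the i.i.d.\ errors with explicit coefficients $c_{m,k}=\sum_{i\in B_m,\,i\ge k}\eta_k\phi_{i,k}$. Since $\phi_{i,0}$ decays faster than any polynomial, I will discard the $\delta_0$ term throughout. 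Writing $\bar\delta_n=n^{-1}\sum_m S_m$ and using $\sum_{i\in S_n}l_i=n$ exactly (the non-overlapping batches partition $\{1,\dots,n\}$), the estimator in \eqref{est_NO} becomes $\hat\Sigma_{n,NO}=n^{-1}\sum_m(S_m-l_m\bar\delta_n)^2$. I then split the MSE as $\E(\hat\Sigma_{n,NO}-\sigma_n)^2=\mathrm{Var}(\hat\Sigma_{n,NO})+(\E\hat\Sigma_{n,NO}-\sigma_n)^2$ and show the squared-bias term produces $n^{2\alpha+2/\beta-2}$ and the variance term produces $n^{-1/\beta}$.

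For the bias, the leading contribution is $\E\hat\Sigma_{n,NO}\approx n^{-1}\sum_m\mathrm{Var}(S_m)$, with the centering pieces $-2n^{-1}l_m\E[S_m\bar\delta_n]+n^{-1}l_m^2\E\bar\delta_n^2$ tracked separately. Within batch $B_m$ the step size is essentially frozen at $\rho_m:=\eta a_m^{-\alpha}$, because $l_m\asymp a_m^{1-1/\beta}\ll a_m$ for $\beta>1$, so the process behaves like a stationary AR(1) with parameter $\rho_m$, autocovariance $\gamma_h\asymp\rho_m(1-\rho_m)^{|h|}$, and long-run variance $\sigma^2=\sigma_e^2$. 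Using $\mathrm{Var}(S_m)=\sum_{|h|<l_m}(l_m-|h|)\gamma_h=l_m\sum_{|h|<l_m}\gamma_h-\sum_{|h|<l_m}|h|\gamma_h$, the truncation of the second sum yields the key per-batch identity $\mathrm{Var}(S_m)-l_m\sigma^2\asymp-\rho_m^{-1}\asymp-a_m^\alpha$. Since every deficit has the same negative sign, no cancellation occurs and summing gives $\E\hat\Sigma_{n,NO}-\sigma^2\asymp-n^{-1}\sum_m a_m^\alpha\asymp-n^{-1}n^{\alpha+1/\beta}=-n^{\alpha+1/\beta-1}$, where I used $a_m\asymp cm^\beta$, $M\asymp n^{1/\beta}$ batches, and $\sum_{m\le M}m^{\alpha\beta}\asymp M^{\alpha\beta+1}$. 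It remains to check that $\sigma_n-\sigma^2=O(n^{\alpha-1})$ and that the centering corrections are $O(n^{-1/\beta})$; both are lower order than $n^{\alpha+1/\beta-1}$ precisely in the regime where the latter dominates, so squaring yields $n^{2\alpha+2/\beta-2}$.

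For the variance, expanding $\hat\Sigma_{n,NO}=n^{-1}\sum_m S_m^2+(\text{centering})$ gives $\mathrm{Var}(\hat\Sigma_{n,NO})=n^{-2}\sum_{m,m'}\mathrm{Cov}(S_m^2,S_{m'}^2)+\cdots$, and since each $S_m$ is linear in the i.i.d.\ errors, $\mathrm{Cov}(S_m^2,S_{m'}^2)=2\,\mathrm{Cov}(S_m,S_{m'})^2+(\E e^4-3\sigma_e^4)\sum_k c_{m,k}^2c_{m',k}^2$. The diagonal Gaussian part dominates: $2\,\mathrm{Var}(S_m)^2\asymp l_m^2$, and $n^{-2}\sum_m l_m^2\asymp n^{-2}\sum_m a_m^{2-2/\beta}\asymp n^{-2}n^{2-1/\beta}=n^{-1/\beta}$ via $\sum_{m\le M}m^{2\beta-2}\asymp M^{2\beta-1}$. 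The remaining contributions are strictly lower order: the kurtosis term is $\lesssim n^{-2}\sum_m l_m$ because $\sum_k c_{m,k}^4\lesssim l_m$ (the coefficients are bounded and $\sum_k c_{m,k}^2\asymp l_m$); the off-diagonal covariances satisfy $\mathrm{Cov}(S_m,S_{m+1})\asymp a_m^\alpha$, contributing $n^{-2}\sum_m a_m^{2\alpha}\asymp n^{2\alpha+1/\beta-2}$, which is dominated by the squared bias $n^{2\alpha+2/\beta-2}$; and the centering terms are negligible since $l_m\bar\delta_n$ is of smaller order than $S_m$.

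The main obstacle is the two-sided per-batch estimate $\mathrm{Var}(S_m)-l_m\sigma^2\asymp-a_m^\alpha$, since this drives the bias order and must be $\asymp$ rather than merely $\lesssim$. The difficulty is the non-stationarity: $\eta_i$ is not constant across $B_m$, so I cannot directly quote a stationary-AR(1) formula but must control the error incurred by freezing $\eta_i$ at $\rho_m$ and verify it does not perturb the leading $a_m^\alpha$ term. Establishing the lower bound is where most care is needed; fortunately the deficits share a common sign, so summation cannot cancel them, and the i.i.d.\ structure of $\{e_i\}$ makes all fourth-moment bookkeeping explicit. Assembling the squared bias $n^{2\alpha+2/\beta-2}$ and the variance $n^{-1/\beta}$, and checking in each parameter regime which of the two dominates so that a sub-dominant lower bound is never required, completes the proof.
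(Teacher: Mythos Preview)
Your proposal is correct and follows the same overall architecture as the paper's proof: decompose the MSE into squared bias plus variance, drop (or control) the centering by $\bar x_n$, and use the quadratic-form identity $\mathrm{Cov}(S_m^2,S_{m'}^2)=2\,\mathrm{Cov}(S_m,S_{m'})^2+\text{(kurtosis)}$ to reduce the variance term to second-moment computations. The variance analysis is essentially identical to the paper's (the paper even assumes Gaussian errors so that the kurtosis correction vanishes; you are slightly more general in keeping it and showing it is $O(n^{-1})$).

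The one genuine difference is how you compute the bias. The paper writes the bias directly as $\E\hat\Sigma_{n,NO}-\sigma_n=-2n^{-1}\sum_{f<g}\mathrm{Cov}(W_f,W_g)$ and evaluates the cross-batch covariances explicitly, finding $\mathrm{Cov}(W_f,W_g)\asymp a_g^{\alpha}(1-a_g^{-\alpha})^{a_g-a_{f+1}}$, so that only the nearest-neighbor pair contributes and $\sum_{f<g}\mathrm{Cov}(W_f,W_g)\asymp\sum_g a_g^{\alpha}$. You instead route through the limiting long-run variance $\sigma^2$: you establish the per-batch deficit $\mathrm{Var}(S_m)-l_m\sigma^2\asymp-a_m^{\alpha}$ via a frozen-step AR(1) approximation, sum, and then argue $\sigma_n-\sigma^2=O(n^{\alpha-1})$ is lower order. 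The two routes are equivalent by the identity $n\sigma_n=\sum_m\mathrm{Var}(S_m)+2\sum_{f<g}\mathrm{Cov}(S_f,S_g)$, and both yield $\mathrm{Bias}\asymp n^{\alpha+1/\beta-1}$. Your route has the advantage of making the sign of the bias transparent (each batch underestimates by a positive amount, so no cancellation) and connects directly to the classical batch-means bias formula; the paper's route is shorter because it compares to $\sigma_n$ directly and never needs to bound $\sigma_n-\sigma^2$ or justify the freezing of $\eta_i$ within a batch.
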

Choose $\beta = {3} / (2(1-\alpha))$. In the mean estimation model, the above proposition asserts that the convergence rate of the mean squared error of our recursive non-overlapping variance estimate is $n^{-{2(1-\alpha)}/{3}}$. For $\alpha$ close to $1/2$, the latter rate approaches $n^{-{1}/{3}}$. 
%Since $\alpha\in (\frac{1}{2}, 1)$, we can achieve the fastest possible rate at $O(n^{-\frac{1}{3}})$. Now, we show that in simple mean estimate model our recursive estimator in non-overlapping version converges to true variance of $\sqrt{n}(\bar{x}_{n})$ at rate of $n^{-1/3}$ in terms of mean squared error. 
This rate is faster than that of the batch-means estimator in \citet{chen2016statistical}, which approaches $n^{-1/4}$. So, besides the advantage of the recursive property, our estimator may improve the convergence rate.

In the general setting, the analysis is much more complicated due to the nonlinearity. Upper bounds for the convergence rates of online estimators for both overlapping and non-overlapping versions are given in Section \ref{sec:nlcov}.

\subsection{Assumptions and existing convergence results}
In the work of \citet{polyak1992acceleration}, assumptions on the objective function $F(x)$ and the gradient difference are proposed to prove the asymptotic normality of ASGD estimate. Those assumptions are necessary for our problem since we adopt the ASGD as the point estimator and require the asymptotic normality for statistical inference. Those assumptions, as well as some error bounds, are also proposed in other literature. We impose similar assumptions and review some existing results in this section.   
\begin{assumption}\label{ass:reg} 
	Assume that the objective function $F(x)$ is continuously differentiable and strongly convex with parameter $\mu>0$. That is, for any $x_{1}$ and $x_{2}$,
	$$
	\begin{aligned}
	F(x_{2})\ge F(x_{1}) + \langle\nabla F(x_{1}), x_{2}-x_{1}\rangle + \frac{\mu}{2}\|x_{1}-x_{2}\|_{2}^{2}.
	\end{aligned}
	$$
	Furthermore, assume that $\nabla^{2}F(x^{*})$ exists and $\nabla F(x)$ is Lipschitz continuous in the sense that there exist $L>0$ such that,
	$$
	\left\|\nabla F(x_{1}) - \nabla F(x_{2})\right\|_{2}\le L\|x_{1}-x_{2}\|_{2}. 
	$$
\end{assumption}

\begin{assumption}\label{ass:martingale} 
	For the $n$-th iteration, define error $\delta_{n} = x_{n} - x^{*}$ and gradient difference $\epsilon_{n}=\nabla F(x_{n-1})-\nabla f(x_{n-1}, \xi_{n})$. Recall that $\E_{n}(\cdot) = \E(\cdot|\xi_{n}, \xi_{n-1}, ...)$. The following hold:
	\begin{description}
		\item[1).] The function $f(x, \xi)$ is continuously differentiable with respect to $x$ for any $\xi$ and $\|\nabla f(x, \xi)\|_{2}$ is uniformly integrable for any $x$. So $\E_{n-1}\left[\nabla f(x_{n-1}, \xi_{n})\right] = \nabla F(x_{n-1})$, which implies that $\E_{n-1}\left(\epsilon_{n}\right) = 0$.
		\item[2).] 
		
		The conditional covariance of $\epsilon_{n}$ has an expansion around $S$ which satisfies
		\begin{equation} 
		\begin{split}
		\left\|\E_{n-1}\left(\epsilon_{n}\epsilon_{n}^{T}\right) - S\right\|_{2}\le C\left(\|\delta_{n-1}\|_{2} + \|\delta_{n-1}\|_{2}^{2}\right),
		\end{split}
		\end{equation}
		where $C>0$ is some constant. Here $S$ is defined in  \eqref{sandwich}.
		\item[3).] There exists a constant $C>0$ such that the fourth conditional moment of $\epsilon_{n}$ is bounded by $$\E_{n-1}\left(\|\epsilon_{n}\|_{2}^{4}\right)\le C\left(1+\|\delta_{n-1}\|_{2}^{4}\right).$$
	\end{description}
	
\end{assumption}

Assumption \ref{ass:reg} imposes strong convexity of the objective function $F(x)$ and Lipschitz continuity of its gradient. Assumption \ref{ass:martingale} asserts the regularity and the bound of the noisy gradient. These assumptions are widely used in SGD literature \citep{ruppert1988efficient, polyak1992acceleration, moulines2011non, rakhlin2011making}. With these assumptions, we have the asymptotic normality for averaged SGD iterates by \citet{polyak1992acceleration} and \citet{ruppert1988efficient}. We also review the error bound for SGD iterates in Lemma \ref{lemma:1}.
%, which is stated as Lemma 3.2 in \citet{chen2016statistical}.

\begin{lemma}\label{lemma:1}
	Under Assumptions \ref{ass:reg} and \ref{ass:martingale}, for some constant $C>0$ and $n_{0}\in\N$, we have for any $n>n_{0}$, the sequence of error $\delta_{n} = x_{n} - x^{*}$ satisfies 
	
	$$
	\begin{aligned}
	\E(\|\delta_{n}\|_{2})\le Cn^{-\alpha/2}(1+\|\delta_{0}\|_{2}),\\
	\E(\|\delta_{n}\|_{2}^{2})\le Cn^{-\alpha}(1+\|\delta_{0}\|_{2}^{2}),\\
	\E(\|\delta_{n}\|_{2}^{4})\le Cn^{-2\alpha}(1+\|\delta_{0}\|_{2}^{4}),\\
	\end{aligned}
	$$
	when the step size is chosen to be $\eta_{n} = \eta n^{-\alpha}$ with $1/2<\alpha<1$.
\end{lemma}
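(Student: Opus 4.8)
The plan is to derive all three bounds from a single one-step contraction inequality combined with a deterministic recursion lemma, establishing them in the order second moment, first moment, then fourth moment; the ordering matters because the fourth-moment estimate reuses the second-moment rate as an input. Throughout I condition on $\mathcal{F}_{i-1}$ and exploit $\E_{i-1}\epsilon_i=0$ (Assumption \ref{ass:martingale}.1), which annihilates the cross terms produced by the noise. Writing $u_i=\delta_{i-1}-\eta_i\nabla F(x_{i-1})$, the error dynamics \eqref{eq:error} give $\delta_i=u_i+\eta_i\epsilon_i$, so $\E_{i-1}\|\delta_i\|_2^2=\|u_i\|_2^2+\eta_i^2\E_{i-1}\|\epsilon_i\|_2^2$. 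Strong convexity yields $\langle\nabla F(x_{i-1}),\delta_{i-1}\rangle\ge\mu\|\delta_{i-1}\|_2^2$ (using $\nabla F(x^*)=0$) and Lipschitz continuity gives $\|\nabla F(x_{i-1})\|_2\le L\|\delta_{i-1}\|_2$, so $\|u_i\|_2^2\le(1-2\mu\eta_i+L^2\eta_i^2)\|\delta_{i-1}\|_2^2$. Assumption \ref{ass:martingale}.2 gives $\E_{i-1}\|\epsilon_i\|_2^2=\mathrm{tr}(\E_{i-1}\epsilon_i\epsilon_i^T)\le C(1+\|\delta_{i-1}\|_2^2)$. Taking expectations and absorbing the $O(\eta_i^2)$ multiplicative terms into the contraction for $i$ beyond some $n_0$ (where $\eta_i$ is small), I obtain the scalar recursion $b_i\le(1-\mu\eta_i)b_{i-1}+C\eta_i^2$ for $b_i=\E\|\delta_i\|_2^2$.

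The core of the argument is a deterministic lemma: if $b_i\le(1-\mu\eta i^{-\alpha})b_{i-1}+Ci^{-2\alpha}$, then $b_n\lesssim n^{-\alpha}$. I would prove this by guess-and-verify: assuming $b_{i-1}\le D(i-1)^{-\alpha}$, expand $(i-1)^{-\alpha}=i^{-\alpha}(1+\alpha i^{-1}+O(i^{-2}))$, so the contraction contributes a negative term of order $D\eta i^{-2\alpha}$ while the perturbation from this expansion is only of order $D\alpha i^{-1-\alpha}$. Since $\alpha<1$ one has $i^{-2\alpha}\gg i^{-1-\alpha}$, the contraction dominates, and choosing $D\ge C/\mu$ closes the induction $b_i\le Di^{-\alpha}$. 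The homogeneous part decays like $\exp(-c\,i^{1-\alpha})$, super-polynomially, so the initial value enters only through the factor $(1+\|\delta_0\|_2^2)$ from a crude bound $b_{n_0}\le C(1+\|\delta_0\|_2^2)$, giving $\E\|\delta_n\|_2^2\le n^{-\alpha}(1+\|\delta_0\|_2^2)$. The first-moment bound is then immediate from Cauchy--Schwarz, $\E\|\delta_n\|_2\le(\E\|\delta_n\|_2^2)^{1/2}\le n^{-\alpha/2}(1+\|\delta_0\|_2)$.

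For the fourth moment I would expand $\|\delta_i\|_2^4=(\|u_i\|_2^2+2\eta_i\langle u_i,\epsilon_i\rangle+\eta_i^2\|\epsilon_i\|_2^2)^2$ and take $\E_{i-1}$; the odd powers of $\epsilon_i$ vanish or are controlled by Cauchy--Schwarz, the leading term satisfies $\|u_i\|_2^4\le(1-2\mu\eta_i+L^2\eta_i^2)^2\|\delta_{i-1}\|_2^4\le(1-c\eta_i)\|\delta_{i-1}\|_2^4$, and Assumption \ref{ass:martingale}.3 (with interpolation $\E_{i-1}\|\epsilon_i\|_2^k\le C(1+\|\delta_{i-1}\|_2^k)$ for $k=2,3$) bounds the remaining terms. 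The decisive observation is that the dominant forcing term $\eta_i^2\|u_i\|_2^2\E_{i-1}\|\epsilon_i\|_2^2$, after inserting the already-established $\E\|\delta_{i-1}\|_2^2\lesssim i^{-\alpha}$, is of order $i^{-3\alpha}$, while the genuinely $\|\delta_0\|$-free remainders ($\eta_i^4$, $\eta_i^3 i^{-\alpha/2}$, etc.) are of strictly smaller order. This gives $c_i\le(1-c\eta_i)c_{i-1}+Ci^{-3\alpha}$ for $c_i=\E\|\delta_i\|_2^4$, whose equilibrium level is $i^{-3\alpha}/\eta_i\asymp i^{-2\alpha}$; the same recursion lemma then yields $\E\|\delta_n\|_2^4\le n^{-2\alpha}(1+\|\delta_0\|_2^4)$.

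I expect the main obstacle to be the recursion lemma, specifically verifying that the $O(i^{-2\alpha})$ gain from contraction strictly beats the $O(i^{-1-\alpha})$ loss from comparing $(i-1)^{-\alpha}$ with $i^{-\alpha}$, which is exactly where the hypothesis $\alpha<1$ is essential. The remaining work is bookkeeping: choosing $n_0$ large enough to absorb the $\eta_i^2$-terms into the contraction, bounding the first $n_0$ steps crudely to produce the $(1+\|\delta_0\|_2^2)$ and $(1+\|\delta_0\|_2^4)$ factors, and respecting the bootstrapping order so the second-moment rate is in hand before estimating the fourth-moment forcing.
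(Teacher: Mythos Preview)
The paper does not actually prove this lemma: it is stated as an existing result, quoted verbatim as Lemma~3.2 of \citet{chen2016statistical}, with no accompanying argument in the present paper. So there is no in-paper proof to compare your proposal against.

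That said, your sketch is the standard route to these moment bounds (essentially the Moulines--Bach/Chen et al.\ argument): a one-step contraction from strong convexity and Lipschitz gradients, a deterministic recursion lemma turning $b_i\le(1-\mu\eta i^{-\alpha})b_{i-1}+Ci^{-2\alpha}$ into $b_n\lesssim n^{-\alpha}$, Cauchy--Schwarz for the first moment, and bootstrapping the second-moment rate into the fourth-moment forcing. The key inequality you isolate---that for $\alpha<1$ the contraction gain $i^{-2\alpha}$ dominates the $i^{-1-\alpha}$ perturbation from comparing $(i-1)^{-\alpha}$ with $i^{-\alpha}$---is exactly the crux, and you have it right. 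One small slip: the induction constant should be $D\ge C/(\mu\eta)$ rather than $C/\mu$, since the contraction coefficient is $\mu\eta i^{-\alpha}$; this is cosmetic. Otherwise the plan is sound and is, up to bookkeeping, the proof one finds in the cited reference.
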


\subsection{Convergence properties for the online estimator} 
\label{sec:nlcov}
\begin{theorem}\label{theorem2} Under Assumptions  \ref{ass:reg} and \ref{ass:martingale},  let $a_{m}=\left\lfloor Cm^{\beta}\right\rfloor$, where $C>0$ is a constant, $\beta>(1-\alpha)^{-1}$. Set step size at the $i$-th  iteration as $\eta_{i} =   \eta  i^{-\alpha}$ with $1/2<\a<1$. Then for $\hat{\Sigma}_{n}$ defined in (\ref{est}) 
	\begin{equation}
	\E\left\|\hat{\Sigma}_{n} - \Sigma\right\|_{2}\lesssim   n^{-1/(2\beta)} + n^{(\alpha-1)/2 + 1/(2\beta)}.
	\end{equation}
\end{theorem}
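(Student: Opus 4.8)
The plan is to pass from $\hat\Sigma_{n}$ to an idealized, fully linear and martingale-based surrogate, control the approximation gaps, and then split the surrogate into a bias part (whose target is $\Sigma$) and a stochastic fluctuation part. Write $\delta_{k}=x_{k}-x^{*}$, $\bar\delta_{n}=\bar x_{n}-x^{*}$, the batch sums $W_{i}^{\delta}=\sum_{k=t_{i}}^{i}\delta_{k}$, and introduce the idealized martingale batch sums $U_{i}=A^{-1}\sum_{k=t_{i}}^{i}\epsilon_{k}$, built from the gradient noise $\epsilon_{k}$ of Assumption \ref{ass:martingale}. With $\tilde\Sigma_{n}=v_{n}^{-1}\sum_{i}W_{i}^{\delta}(W_{i}^{\delta})^{T}$ and $\Sigma_{n}^{\circ}=v_{n}^{-1}\sum_{i}U_{i}U_{i}^{T}$, I decompose
\[
\hat\Sigma_{n}-\Sigma=\underbrace{(\hat\Sigma_{n}-\tilde\Sigma_{n})}_{\text{(a) centering}}+\underbrace{(\tilde\Sigma_{n}-\Sigma_{n}^{\circ})}_{\text{(b) linearization}}+\underbrace{(\Sigma_{n}^{\circ}-\E\Sigma_{n}^{\circ})}_{\text{(c) fluctuation}}+\underbrace{(\E\Sigma_{n}^{\circ}-\Sigma)}_{\text{(d) bias}},
\]
and bound each in $\E\|\cdot\|_{2}$, using throughout $\E\|\delta_{k}\|_{2}^{2}\lesssim k^{-\alpha}$, $\E\|\delta_{k}\|_{2}^{4}\lesssim k^{-2\alpha}$ from Lemma \ref{lemma:1}, the orders $v_{n}\asymp M^{2\beta-1}$, $q_{n}\asymp M^{3\beta-2}$, and the fact that the $M$-th motivating batch has length $\asymp M^{\beta-1}$.

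For (a), expanding $\hat\Sigma_{n}$ shows the difference from $\tilde\Sigma_{n}$ is $v_{n}^{-1}P_{n}\bar\delta_{n}^{T}$, its transpose, and $v_{n}^{-1}q_{n}\bar\delta_{n}\bar\delta_{n}^{T}$. Using $\E\|\bar\delta_{n}\|_{2}^{2}\lesssim 1/n$ (which follows from the Polyak--Juditsky averaging) together with a second-moment bound on $P_{n}=\sum_{i}l_{i}W_{i}^{\delta}$ that exploits the martingale cancellation among the $W_{i}^{\delta}$ rather than a crude triangle inequality, this block is $O(M^{-1})$ and is absorbed. For (b), I apply Abel summation to the linearized recursion $\delta_{k}=(I-\eta_{k}A)\delta_{k-1}+\eta_{k}\epsilon_{k}+\eta_{k}r_{k}$, with $r_{k}=A\delta_{k-1}-\nabla F(x_{k-1})$, to get $AW_{i}^{\delta}=\sum_{k=t_{i}+1}^{i+1}\epsilon_{k}+R_{i}$, where $R_{i}$ collects the boundary terms $\eta_{t_{i}+1}^{-1}\delta_{t_{i}}-\eta_{i+1}^{-1}\delta_{i+1}$, the slowly varying increments $\sum(\eta_{k+1}^{-1}-\eta_{k}^{-1})\delta_{k}$, and the Taylor remainders $\sum r_{k}$ with $\|r_{k}\|_{2}\lesssim\|\delta_{k-1}\|_{2}^{2}$ by Assumption \ref{ass:reg}. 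The dominant piece of $R_{i}$ is the endpoint $\eta_{i+1}^{-1}\delta_{i+1}$, with $\E\|\eta_{i+1}^{-1}\delta_{i+1}\|_{2}^{2}\asymp i^{\alpha}$; carrying this through $v_{n}^{-1}\sum_{i}\E^{1/2}\|U_{i}\|_{2}^{2}\,\E^{1/2}\|R_{i}\|_{2}^{2}$ produces exactly the edge term $M^{((\alpha-1)\beta+1)/2}$, which is where the hypothesis $\beta>1/(1-\alpha)$ is used to force a negative exponent.

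For (d), the martingale-difference property and Assumption \ref{ass:martingale}(2) give $\E U_{i}U_{i}^{T}=l_{i}\Sigma+O\big(\sum_{k\in B_{i}}(\E\|\delta_{k-1}\|_{2}+\E\|\delta_{k-1}\|_{2}^{2})\big)$, whence $\E\Sigma_{n}^{\circ}=\Sigma+O(M^{-\alpha\beta/2})$, a pure bias dominated by the other terms. For (c), the decisive observation is that the $U_{i}$ attached to distinct motivating batches use disjoint, uncorrelated $\epsilon_{k}$, so the $M$ blocks act as independent summands; estimating the within-block variance of $\sum_{i}U_{i}U_{i}^{T}$ (outer products of martingale partial sums) via the fourth-moment bound of Assumption \ref{ass:martingale}(3) and summing $M$ independent blocks gives the $M^{-1/2}$ term. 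The remaining $M^{-\alpha\beta/4}$ arises when the first-order, $\delta$-linear conditional-covariance and remainder corrections are propagated into the fluctuation of the genuine statistic $\tilde\Sigma_{n}$: their variance is of order $M^{-\alpha\beta/2}$, so they enter $\E\|\cdot\|_{2}$ as its square root $M^{-\alpha\beta/4}$.

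The main obstacle I anticipate is the interplay between the overlapping batch structure and the non-stationarity. Within one motivating batch the overlapping batch sums are strongly dependent, so I must show the overlap neither inflates the bias beyond the edge term nor destroys the block independence used in (c); across batches the error process $R_{i}$ still couples blocks through the shared history encoded in $\delta_{t_{i}}$, so the cross terms in (b) and (d) must be estimated with sharp exponents rather than loose triangle inequalities. Pinning down the precise $i^{\alpha}$ growth of the endpoint contribution and the $M^{-\alpha\beta/2}$ covariance correction --- instead of cruder powers --- is the delicate part, and it is exactly what forces the polynomial form of $a_{k}$ and the threshold $\beta>1/(1-\alpha)$.
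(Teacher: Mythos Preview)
Your strategy is close to the paper's but organized differently. The paper does \emph{not} pass directly from $\hat\Sigma_n$ to the martingale surrogate $\Sigma_n^{\circ}=v_n^{-1}\sum_i U_iU_i^T$ with $U_i=A^{-1}\sum_{k=t_i}^{i}\epsilon_k$. Instead it first introduces the full linearized SGD process $u_k=(I-\eta_kA)u_{k-1}+\eta_k\epsilon_k$, defines $\tilde\Sigma_n$ as the \emph{same} estimator applied to $(u_k)$, and proves the bound for $\tilde\Sigma_n$ (Lemma~\ref{linearcase}) by the three-term split of Lemmas~A.7--A.9; only afterwards does it control $\hat\Sigma_n-\tilde\Sigma_n$ via the residual $s_k=\delta_k-u_k$. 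Your Abel-summation route collapses these two stages: your remainder $R_i$ simultaneously carries the endpoint/increment pieces that the paper isolates as $B_i$ in Lemma~A.7 \emph{and} the nonlinear Taylor remainder that the paper isolates as $s_k$. This is a legitimate and somewhat more direct argument; the endpoint computation $\E\|\eta_{i+1}^{-1}\delta_{i+1}\|_2^2\asymp i^{\alpha}$ leading to $M^{((\alpha-1)\beta+1)/2}$ is exactly what the paper obtains for the cross term $II$ in Lemma~A.7.

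The one place where your account does not match the paper is the origin of $M^{-\alpha\beta/4}$. In the paper this exponent is \emph{not} the square root of a variance of correction terms; it is an artifact of the specific device used in Lemma~A.6, namely bounding $\E\|\hat S-S\|_2$ by $\sqrt{d\,\|\E(\hat S-S)^2\|_2}$ and then expanding $\E(\hat S-S)^2=\E\hat S^2-S^2-S(*)-(*)S$, where $(*)=\E\hat S-S$ is the bias of order $M^{-\alpha\beta/2}$. Taking the square root of the whole expression converts $M^{-\alpha\beta/2}$ into $M^{-\alpha\beta/4}$. In your clean bias--variance split, (d) already gives $M^{-\alpha\beta/2}$, which is \emph{sharper}; the $M^{-\alpha\beta/4}$ would not appear unless you adopt the same second-moment device. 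Your statement that the blocks ``act as independent summands'' in (c) is also an oversimplification: the $\epsilon_k$ are martingale differences whose fourth moments \emph{do} correlate across blocks through $\delta_{k-1}$, and the paper tracks this explicitly via the term $IV$ in Lemma~A.6 (the bound $\|\E\epsilon_p\epsilon_p^T\epsilon_q\epsilon_q^T-S^2\|_2\lesssim q^{-\alpha/2}$). You should either invoke the paper's second-moment inequality to recover $M^{-\alpha\beta/4}$, or carry the between-block fourth-moment correlations through your variance computation in (c); your current attribution of $M^{-\alpha\beta/4}$ to ``corrections propagated into the fluctuation of $\tilde\Sigma_n$'' does not correspond to an actual step in either argument.
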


Theorem \ref{theorem2} shows that as $n\rightarrow \infty$, the estimator $\hat\Sigma_{n}$ converges to the limiting covariance matrix of the averaged SGD iterates in terms of operator norm loss. The convergence rate is associated with the parameters $\alpha$ and $\beta$. We state the following Corollary \ref{corollary1} to suggest the best choice of $\beta$.

\begin{corollary}\label{corollary1}
	Under conditions in Theorem \ref{theorem2} and let $\beta = 2/(1-\a)$, we have 
	\begin{equation}
	\E\left\|\hat{\Sigma}_{n} - \Sigma\right\|_{2}\lesssim n^{-(1-\alpha)/4}.
	\end{equation}
\end{corollary}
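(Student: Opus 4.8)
The plan is to specialize Theorem \ref{theorem2} to the specific exponent $\beta = 2/(1-\alpha)$ dictated by the choice $a_k = \lfloor Ck^{2/(1-\alpha)}\rfloor$, and then to convert the resulting bound — which Theorem \ref{theorem2} phrases in terms of the batch count $M$ — into a bound in the sample size $n$ by exploiting the relation $a_M \le n < a_{M+1}$.

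First I would pin down the asymptotic relationship between $M$ and $n$. Since $a_k = \lfloor Ck^{2/(1-\alpha)}\rfloor$ and $M$ is defined by $a_M \le n < a_{M+1}$, dropping the floor (which perturbs $a_k$ by at most one and hence does not change its polynomial order) gives $C M^{2/(1-\alpha)} \lesssim n \lesssim C(M+1)^{2/(1-\alpha)}$. Inverting both inequalities yields, for $n$ large,
\begin{equation}
M \asymp n^{(1-\alpha)/2}.
\end{equation}
Next I would substitute $\beta = 2/(1-\alpha)$ into each of the three exponents from Theorem \ref{theorem2}. The first exponent becomes $-\alpha\beta/4 = -\alpha/(2(1-\alpha))$, while the third simplifies dramatically: $((\alpha-1)\beta+1)/2 = (-2+1)/2 = -1/2$, so the third term collapses into the same order as the second. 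Applying $M \asymp n^{(1-\alpha)/2}$ to all three then gives
\begin{equation}
M^{-\alpha\beta/4} \asymp n^{-\alpha/4}, \qquad M^{-1/2} \asymp n^{-(1-\alpha)/4}, \qquad M^{((\alpha-1)\beta+1)/2} \asymp n^{-(1-\alpha)/4}.
\end{equation}

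Finally I would identify the dominant (slowest-decaying) term. Because $\alpha \in (1/2,1)$ we have $\alpha > 1-\alpha$, so $n^{-\alpha/4}$ decays strictly faster than $n^{-(1-\alpha)/4}$ and is therefore absorbed by it. The sum is governed by the common rate $n^{-(1-\alpha)/4}$, which yields $\E\|\hat\Sigma_n - \Sigma\|_2 \lesssim n^{-(1-\alpha)/4}$ as claimed. I do not anticipate any genuine obstacle: the argument is a routine specialization of Theorem \ref{theorem2}, and the only point needing a moment of care is verifying $M \asymp n^{(1-\alpha)/2}$ despite the floor function, which holds because the floor alters $a_k$ only by an additive constant and leaves the leading polynomial behavior intact.
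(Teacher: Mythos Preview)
Your proposal is correct and follows precisely the route the paper indicates: the paper does not give a separate formal proof of Corollary \ref{corollary1} but simply remarks that one translates Theorem \ref{theorem2} into a bound in $n$ via the relation $a_M\le n<a_{M+1}$, and your computation (specializing $\beta=2/(1-\alpha)$, deriving $M\asymp n^{(1-\alpha)/2}$, and identifying $n^{-(1-\alpha)/4}$ as the dominant term) is exactly that translation carried out in full.
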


\begin{remark}
	\label{rm:rates}
 This convergence rate is the same as that of the non-recursive batch-means estimator in \citet{chen2016statistical}. According to Corollary 4.5 in \citet{chen2016statistical}, the upper bound of the batch means estimator is also $O(n^{-(1-\alpha)/4})$ with the prior knowledge of the sample size $n$. So we make it possible that online estimation of covariance matrix achieves the same efficiency as offline methods. 
	%In the simulation study (see Section \ref{sec:exper}), we will show that this upper bound is almost tight with $\beta = 2/(1-\a)$.
	The plug-in approach in \citet{chen2016statistical} achieves the rate of $O(n^{-\alpha/2})$ when the $i$-th step size is chosen to be $i^{-\alpha}$. As a tradeoff, the online estimator enjoys efficient computation without the necessity of accessing Hessian information but pays the price in terms of the slower convergence rate. 
\end{remark}

Next, we will show in Theorem \ref{theorem: nonoverlap} that the alternative version $\hat{\Sigma}_{n, NOL}$ shares the same upper bound. 

\begin{theorem}\label{theorem: nonoverlap} Under conditions in Theorem \ref{theorem2}, the alternative version  $\hat{\Sigma}_{n, NOL}$ defined in (\ref{est_NO}) satisfies
	\begin{equation}
		\E\left\|\hat{\Sigma}_{n, NOL} - \Sigma\right\|_{2}\lesssim  n^{-1/(2\beta)} + n^{(\alpha-1)/2+ 1/(2\beta)}.
	\end{equation}
\end{theorem}

%As mentioned above, the classic overlapping estimator is considered to be more efficient than its non-overlapping version (up to a constant) in the stationary case. It would be interesting to rigorously study the precise connection between these two versions in our setting for future work.

%We achieve the fastest possible rate $n^{-1/8}$ when $\alpha$ is close to $1/2$. This rate is the same as the optimal rate of batch-means estimator in \citet{chen2016statistical}. But to achieve this rate, \citet{chen2016statistical} requires the sample size as an input for construction and can only output the estimator after receiving all the data. When a new SGD iterate, e.g $x_{n+1}$, comes in and update is needed, the batch-means method need to reconstruct batches for $\hat\Sigma_{n+1}$ and recompute everything. All the data must be stored for this update purpose. The computational and memory cost at the update step is huge.
%We might improve the upper bound with more assumptions on the distribution of data. The lower bound is also an interesting direction for future study.

\subsection{Asymptotically accurate confidence intervals/regions}\label{sec:CI}
The next corollary shows that the confidence interval/region based on the online estimator achieves asymptotically correct coverage level $1-q$ for a pre-specified $q$ with $0<q<1$.  
\begin{corollary}\label{corollary2}
	Under conditions in Theorem \ref{theorem2},  as $n$ goes to infinity
	\begin{equation}\label{eq:CI} 
	\P( x_{i}^{*}\in \text{CI}_{q, n, i})\rightarrow 1-q,
	%Pr(\bar{x}_{n, i} -  z_{1-q/2}\sqrt{\hat{\sigma}_{ii}/n}\le x_{i}^{*} \le\bar{x}_{n, i} + z_{1-q/2}\sqrt{\hat{\sigma}_{ii}/n})\\\rightarrow 1-q.
	\end{equation}
	where $$
	\text{CI}_{q, n, i} = \left[\bar{x}_{n, i} -  z_{1-q/2}\sqrt{\hat{\sigma}_{ii}/n},\  \bar{x}_{n, i} +  z_{1-q/2}\sqrt{\hat{\sigma}_{ii}/n}\right]
	$$
	and $\hat{\sigma}_{ii}$ is the $i$-th diagonal of the online batch-means estimator $\hat\Sigma_{n}$ (or $\hat\Sigma_{n, NOL}$). We can also construct joint confidence regions as follows:
	\begin{equation}\label{eq:CIregion} 
	\P\left(x^{*}\in C_{q, n} \right)\rightarrow 1-q,
	\end{equation}
where $$C_{q, n} = \left\{x\in \R^{d}: n\left(\bar{x}_{n}-x\right)^{T}\hat\Sigma_{n}^{-1}\left(\bar{x}_{n}-x\right)\le \chi^{2}_{d, 1-2/q}\right\}.$$

\end{corollary}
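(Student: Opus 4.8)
The plan is to derive both coverage statements from the asymptotic normality in \eqref{eq:asym_norm} together with the consistency of $\hat{\Sigma}_{n}$ supplied by Theorem \ref{theorem2}, glued together by Slutsky's theorem and the continuous mapping theorem. The first step is to upgrade the $L_{1}$ bound of Theorem \ref{theorem2} to convergence in probability: since $\E\|\hat{\Sigma}_{n}-\Sigma\|_{2}\to 0$, Markov's inequality gives $\|\hat{\Sigma}_{n}-\Sigma\|_{2}\to 0$ in probability. Once this is in hand, the remaining work is to push this convergence through the two functionals defining the intervals.

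For the coordinatewise interval I would write the coverage probability as
$$
\Pr(x_{i}^{*}\in \text{CI}_{n,i}) = \Pr\!\left(\frac{\sqrt{n}\,|\bar{x}_{n,i}-x_{i}^{*}|}{\sqrt{\hat{\sigma}_{ii}}}\le z_{1-q/2}\right).
$$
Letting $e_{i}$ be the $i$-th standard basis vector, consistency of the diagonal follows from $|\hat{\sigma}_{ii}-\sigma_{ii}| = |e_{i}^{T}(\hat{\Sigma}_{n}-\Sigma)e_{i}|\le \|\hat{\Sigma}_{n}-\Sigma\|_{2}\to 0$ in probability, where $\sigma_{ii}$ is the $i$-th diagonal of $\Sigma$. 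Combining $\sqrt{n}(\bar{x}_{n,i}-x_{i}^{*})\Rightarrow N(0,\sigma_{ii})$, which is the $i$-th marginal of \eqref{eq:asym_norm}, with Slutsky's theorem (and $\sigma_{ii}>0$) yields $\sqrt{n}(\bar{x}_{n,i}-x_{i}^{*})/\sqrt{\hat{\sigma}_{ii}}\Rightarrow N(0,1)$, so the coverage tends to $\Pr(|Z|\le z_{1-q/2})=1-q$ for a standard normal $Z$.

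For the joint region I would first invoke positive definiteness of $\Sigma=A^{-1}SA^{-1}$, which holds since Assumption \ref{ass:reg} gives $A\succeq \mu I$ and $S$ is invertible. Because $\|\hat{\Sigma}_{n}-\Sigma\|_{2}\to 0$ in probability and $\Sigma$ is invertible, on an event of probability tending to one the smallest eigenvalue of $\hat{\Sigma}_{n}$ is bounded away from zero, so $\hat{\Sigma}_{n}^{-1}$ exists and the perturbation bound $\|\hat{\Sigma}_{n}^{-1}-\Sigma^{-1}\|_{2}\le \|\hat{\Sigma}_{n}^{-1}\|_{2}\,\|\Sigma^{-1}\|_{2}\,\|\hat{\Sigma}_{n}-\Sigma\|_{2}$ forces $\|\hat{\Sigma}_{n}^{-1}-\Sigma^{-1}\|_{2}\to 0$ in probability. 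Writing $Z_{n}=\sqrt{n}(\bar{x}_{n}-x^{*})$, I would decompose
$$
Z_{n}^{T}\hat{\Sigma}_{n}^{-1}Z_{n} = Z_{n}^{T}\Sigma^{-1}Z_{n} + Z_{n}^{T}(\hat{\Sigma}_{n}^{-1}-\Sigma^{-1})Z_{n}.
$$
The first term converges in distribution to $\chi^{2}_{d}$ by the continuous mapping theorem applied to $Z_{n}\Rightarrow N(0,\Sigma)$, while the remainder is bounded in absolute value by $\|Z_{n}\|_{2}^{2}\,\|\hat{\Sigma}_{n}^{-1}-\Sigma^{-1}\|_{2}=O_{P}(1)\cdot o_{P}(1)=o_{P}(1)$, using that $\|Z_{n}\|_{2}^{2}$ is tight. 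A final application of Slutsky gives $Z_{n}^{T}\hat{\Sigma}_{n}^{-1}Z_{n}\Rightarrow \chi^{2}_{d}$, hence the coverage converges to $\Pr(\chi^{2}_{d}\le \chi^{2}_{d,1-q})=1-q$ with $\chi^{2}_{d,1-q}$ the $(1-q)$ quantile.

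The main obstacle I anticipate lies in the joint-region step: unlike the scalar case, I must guarantee that $\hat{\Sigma}_{n}$ is invertible (which is not automatic for a covariance-type estimator) and control $\|\hat{\Sigma}_{n}^{-1}\|_{2}$ on a high-probability event before the inverse-perturbation bound applies. This is handled by a Weyl-type eigenvalue argument showing the smallest eigenvalue of $\hat{\Sigma}_{n}$ stays above $\lambda_{\min}(\Sigma)/2$ eventually, but it is the one place where positive definiteness of $\Sigma$ (equivalently of $S$) is genuinely needed. Everything else reduces to routine Slutsky and continuous-mapping arguments once the $L_{1}$ consistency of Theorem \ref{theorem2} has been converted to convergence in probability.
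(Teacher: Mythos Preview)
Your proposal is correct and follows essentially the same approach as the paper: convert the $L_{1}$ bound of Theorem \ref{theorem2} to convergence in probability via Markov's inequality, then combine with the asymptotic normality \eqref{eq:asym_norm} through Slutsky's theorem. The paper only sketches this for the scalar statistic $w^{T}\bar{x}_{n}$ (of which the coordinatewise interval is the special case $w=e_{i}$) and does not give a separate argument for the joint region; your eigenvalue/perturbation treatment of $\hat{\Sigma}_{n}^{-1}$ correctly fills that gap, and you also implicitly fix the missing factor of $n$ and the quantile notation $\chi^{2}_{d,1-q}$ in the stated region.
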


Corollary \ref{corollary2} constructs asymptotic valid confidence intervals for each coordinate of $x^{*}$ and joint confidence regions for $x^{*}\in\R^{d}$. More generally, for any unit length vector $w\in \mathbb{R}^{d}$ (i.e., $\|w\|_2=1$),  we have by Theorem \ref{theorem2} and Slutsky's theorem, 
\begin{equation}\label{multinor}
\frac{\sqrt{n}w^{T}(\bar{x}_{n} - x^{*})}{\sqrt{w^{T}\hat{\Sigma}_{n}w}} \Rightarrow  N(0,1).
\end{equation}
%This is because, combining Theorem \ref{theorem2} and Markov's inequality, $\sqrt{w^{T}\hat{\Sigma}_{n}w}$ converges in probability to $\sqrt{w^{T}\Sigma_{n}w}$. Then we can obtain the convergence in distribution in equation (\ref{multinor}) based on the asymptotic normality of $\bar{x}_n$ and Slutsky's theorem. 
Therefore, the $(1-q)100\%$ confidence interval for $w^{T}x^{*}$ can be constructed as 
\begin{equation}\label{est:ci}
\begin{split}
\left[w^{T}\bar{x}_{n} - z_{1-q/2}\sqrt{w^{T}\hat{\Sigma}_{n}w/n},
w^{T}\bar{x}_{n} + z_{1-q/2}\sqrt{w^{T}\hat{\Sigma}_{n}w/n}\right].
\end{split}
\end{equation}
%This form of confidence interval is more general than the one in Corollary \ref{corollary2}, which reduces to \eqref{eq:CI} when $w$ is a unit vector (i.e., one entry in $w$ takes value 1 and all other entries are zero).

\subsubsection{Stopping rule}
	\label{sec:stopping_rule}
 In principle, SGD constantly improves the quality of $\bar{x}_n$, and our method constantly improves the covariance estimate $\hat\Sigma_{n}$ as $n$ grows. A natural questions is when can we stop updating $\bar{x}_{n}$ and $\hat{\Sigma}_{n}$? There are several heuristics of stopping rules widely used in machine learning. For example, an online algorithm can stop when the neighboring estimates become sufficiently close. Or a more widely used approach in stopping SGD is to evaluate the error on a separate validation dataset and stops the SGD when the error becomes stable.  
	
    We can better answer this question and assess the SGD error based on the inference results, inspired by stopping rules for Markov Chain Monte Carlo (MCMC) that rely on a Markov chain central limit theorem. Especially, one can apply the fixed-width sequential stopping rule in \citet{jones2006}, where the updating is terminated the first time when  the width of the confidence interval for each component is small enough. More formally, for a desired tolerance of $\epsilon_{i}$ for the $i$-th coordinate, the rule terminates updating the first time after the $n$-th iteration when the following condition is satisfied for all the coordinates $i=1,\ldots,d$,
   $$
   t_{*}\frac{\hat\sigma_{n, i}}{\sqrt{n}} + n^{-1}\le \epsilon_{i},
   $$
   where $\hat\sigma_{n, i}$ is the $i$-th diagonal of the online estimator $\hat\Sigma_{n}$ (or $\hat\Sigma_{n, NOL}$), and $t_{*}$ is an appropriate $t$-distribution quantile. For the joint inference, one may consider simplifying the relative standard deviation fixed-volume sequential stopping rule in \citet{vats2019multivariate}, where updating is terminated the first time when the volume of the confidence region $C_{n}$ \eqref{eq:CIregion} is small enough. For a desired tolerance of $\epsilon$, the rule terminates updating the first time after the $n$-th iteration when
   $$
   \text{Vol}(C_{n})^{1/d} + n^{-1}\le \epsilon,
   $$
   where $\text{Vol}(C_{n}) =  2\left(\pi\chi_{*}^{2}/n\right)^{d/2}|\hat\Sigma_{n}|^{1/2}/(d\Gamma(d/2))$, $|\cdot|$ denotes determinant, $\chi_{*}^{2}$ is an appropriate chi-square distribution quantile, and $\hat\Sigma_{n}$ is our online estimator.
	We also include a simple simulation study of the stopping rule in the last section of the Supplement.

   \begin{remark}
   	\label{rm:stopping_rule}
   The original stopping rule in \citet{vats2019multivariate} avoids the practical issue of choosing $\epsilon$ with the idea of effective sample size (ESS). 
   They consider an F-invariant Harris recurrent Markov chain and define a multivariate approach to ESS.  The stopping rule in \citet{vats2019multivariate} terminates the MCMC simulation the first time the estimated ESS is larger than a  pre-specified lower bound. However, we need to re-define ESS in the non-stationary case, which requires more careful considerations. We will leave it as a future research direction. 
	\end{remark}

\section{Simulation Studies}
\label{sec:exper}
 In this section, we evaluate the empirical performance of the proposed online approach.  We focus on two classes of examples: linear regression and logistic regression. 
Let $\{\xi_{i}\equiv(a_{i}, b_{i})\}_{i=1,2,...}$ denotes an \emph{i.i.d} sequence of pairs, and $x^{*}$ denote the true parameter in the models. In both linear regression and logistic regression cases, $a_{i}\in \R^{d}$ is generated from $N(0,\mathbf{I}_{d})$. In the former case,  $b_{i} = a_{i}^{T}x^{*} + \epsilon_{i}$, where $\epsilon_{i}$ is independently generated from $N(0, 1)$. In the latter case, $b_{i}|a_{i}\sim Bernoulli((1 + \exp(-a_{i}^{T}x^{*} ))^{-1})$. The loss function $f(\cdot)$ is defined as the negative log likelihood function, so we have
$$
f(x, a_{i}, b_{i}) = 
\left\{
\begin{aligned}
	&\frac{1}{2}(a_{i}^{T}x-b_{i})^{2}\ \  &\text{linear regression}\\
	&(1-b_{i})a_{i}^{T}x + \log(1+\exp(-a_{i}^{T}x))\ \ &\text{logistic regression}.
\end{aligned}
\right.
$$
The true coefficient $x^{*}$ is   a $d$-dimensional vector  linearly spaced between
0 and 1.  In the SGD procedure,  the step size $\eta_{j}$ is set to be $0.5j^{-\alpha}$ and the parameter $\alpha$ is chosen to be $0.505$. The sequence $\{a_{k}\}_{k\ge 1}$ in our online approach is chosen in the form of $a_{m} = \left\lfloor C m^{2/(1-\alpha)}\right\rfloor$, for some constant $C$. 
%In fact, our results are relatively insensitive to the choice of $C$. Therefore, without any specification, we will set $C=1$. 
All the measurements in the following discussions are averaged over 200 independent runs.

\subsection{Empirical performance of the proposed online approach} 
\label{sec: simulation_online}
\noindent{\bf Convergence of the recursive estimator.\ \ }
 We focus on linear regression here since the true limiting covariance matrix is easy to compute. In the linear regression model described above, $$A = \E\left[\nabla^{2}f(x^{*})\right] =\E\left(aa^{T}\right) = \mathbf{I}_{d},$$ 
$$S = \E\left([\nabla f(x^{*}, \xi)][\nabla f(x^{*}, \xi)]^{T}\right) = \E(\epsilon^{2})\E\left(aa^{T}\right) = \mathbf{I}_{d}.$$ Then the limiting covariance matrix $$\Sigma=A^{-1}SA^{-1} = \mathbf{I}_{d}.$$   
 We check the convergence of our proposed online estimators, both the full overlapping and the non-overlapping versions, by computing the operator norm loss of the covariance matrix estimate, i.e., $\|\hat\Sigma_{n} - \Sigma\|_{2}$. Figure  
\ref{fig:linear_loss} shows that the log loss of the online estimators are approximately linear with the log number of steps and the slopes are about $-1/8$ for the large total number of steps. It suggests that both the full overlapping and the non-overlapping versions converge to the limiting covariance matrix with the same convergence rate, about $O(n^{-1/8})$. We also compute the relative efficiency (MSE of the full overlapping version \eqref{est} divided by MSE of the non-overlapping version \eqref{est_NO}); see Figure \ref{fig:efficiency}. 
%It is hard to tell which one is more efficient based on the plot. We will leave the rigorous comparison as a future direction. 
Their performances are comparable. Also, the performance is relatively insensitive to the choice of $C$ in $a_{m} = \left\lfloor Cm^{2/(1-\alpha)}\right\rfloor$.  Therefore, we will implement the non-overlapping version and set $C=1$ in the subsequent simulations without any specification.
\begin{figure}[!t]
	\centering
	\subfigure[$d$=1]{  
		\includegraphics[width=0.42\textwidth]{./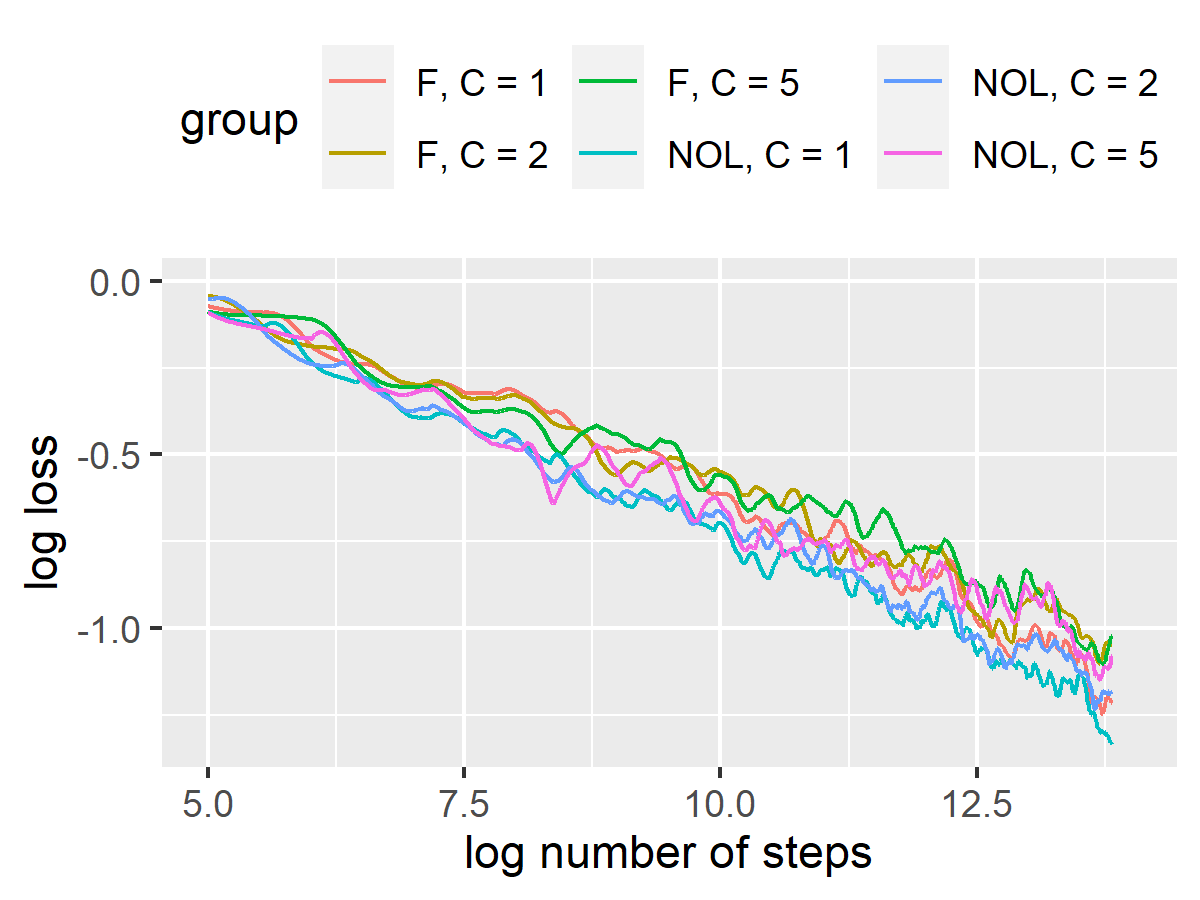}
		
	} 
	\subfigure[$d$=5]{ 
		\includegraphics[width=0.42\textwidth]{./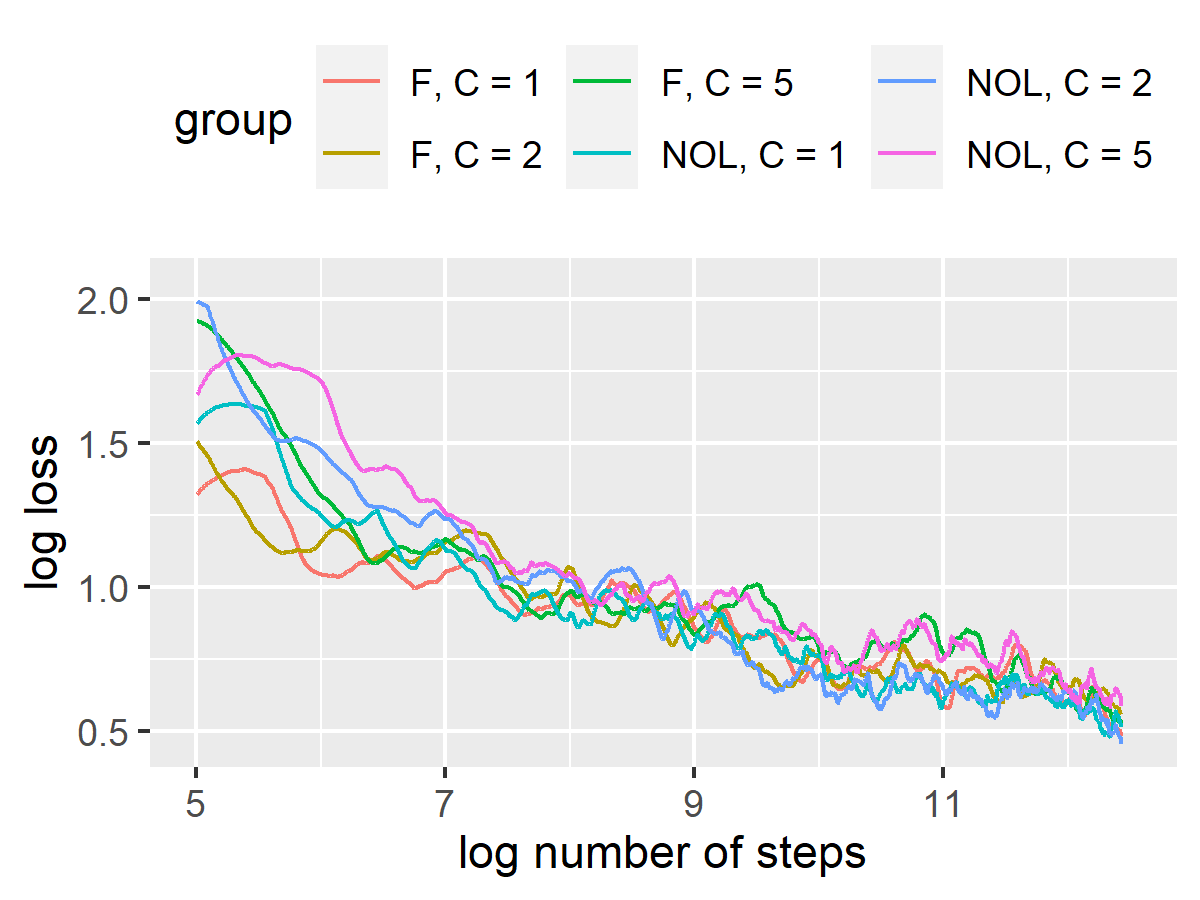}
	}
	\caption{Linear regression: Log loss (operator norm) of the estimated covariance matrix against the log of total number of steps. Here F denotes the full overlapping version \eqref{est}, NOL denotes the non-overlapping version \eqref{est_NO}, and $C$ denotes the constant in $a_{m} = \left\lfloor Cm^{2/(1-\alpha)}\right\rfloor$.}
	\label{fig:linear_loss}
\end{figure} 

\begin{figure}[!t]	 	
	\centering
	\includegraphics[width=0.5\textwidth]{./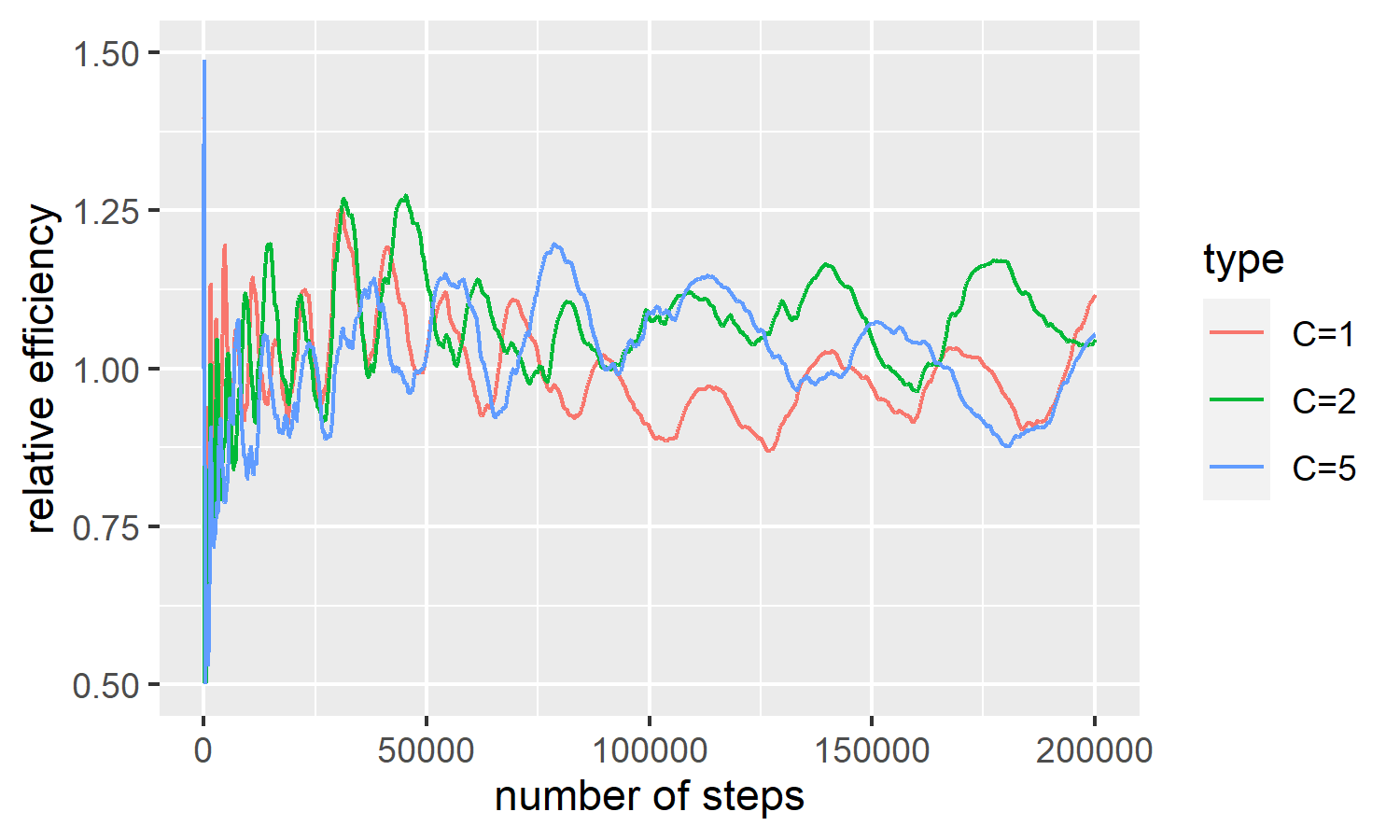}
	\caption{Relative efficiency (ratio of MSE) of the full overlapping version \eqref{est} and non-overlapping version \eqref{est_NO}. We set $d = 5$ in linear regression.  Here $C$ denotes the constant in $a_{m} = \left\lfloor Cm^{2/(1-\alpha)}\right\rfloor$. }
	\label{fig:efficiency}
\end{figure}

%\begin{figure}[!t]
%	\centering
%	\subfigure[Empirical cover rate]{  
%		%\includegraphics[width=0.4\textwidth]{./fig/linear_CI.png}
%		 \includegraphics[width=0.4\textwidth]{./fig/compare_F_NOL_d1_c1_lr1_CIcoverage.png}
%	}\\
%	\subfigure[CI length]{ 
%		%\includegraphics[width=0.4\textwidth]{./fig/linear_CIlength.png}
%		\includegraphics[width=0.4\textwidth]{./fig/compare_F_NOL_d1_c1_lr1_CIlength.png} 
%	} 
%	\subfigure[Normality]{  
%		%\includegraphics[width=0.4\textwidth]{./fig/linear_normality.png} 
%		\includegraphics[width=0.4\textwidth]{./fig/compare_F_NOL_d1_c1_lr1_normal.png} 
%	}\\
%	\caption{(a): Empirical CI cover rate against the number of steps. Red dashed line denotes the nominal coverage rate of 0.95. (b): Length of confidence intervals. (c): Density plot for standardized error. Red curve denotes density plot of a standard normal.Here $d$ = 1.}
%	\label{fig:1dlinear_ci}
%\end{figure}

\noindent{\bf Asymptotic normality and CI coverage.\ \ } 
%For $d = 1$, we construct $95\%$ confidence intervals for the true parameter $x^{*}$ according to Corollary \ref{corollary2}. As can be seen from Figure \ref{fig:1dlinear_ci}, the empirical cover rate gets closer to $95\%$ as number of steps $n$ (i.e., sample size) increases. Also, the density plot verifies the asymptotic normality of standardized error $\sqrt{n}(\hat{x}_{n} - x^{*})/\hat{\sigma}_{n}$, where $\hat{x}_{n}$ is ASGD estimate and $\hat{\sigma}_{n}$ is the recursive estimate of  asymptotic variance. For $d>1$, instead of making inference on true parameter $x^{*}$ 
 With the covariance matrix estimates, we construct $95\%$ confidence intervals for the averaged coefficient $\mu = 1^{T}x^{*}$ according to \eqref{est:ci}, i.e., 
$$
\begin{aligned} 
\left[1^{T}\bar{x}_{n} - z_{1-q/2}\sqrt{1^{T}\hat{\Sigma}_{n}1/n},
1^{T}\bar{x}_{n} + z_{1-q/2}\sqrt{1^{T}\hat{\Sigma}_{n}1/n}\right].
\end{aligned}
$$
We also compute the oracle $95\%$ confidence intervals based on the true limiting covariance matrix.  Figure \ref{fig:5dlinear_ci} shows that for both overlapping and non-overlapping versions, the empirical coverage rate  converges to $95\%$, and the  standardized error $\sqrt{n}1^{T}(\hat{x} - x^{*})/\sqrt{1^{T}\hat{\Sigma}_{n}1}$ is approximately standard normal. Also, the estimated CI length converges to the oracle length.

\begin{figure}[!t]
	\centering
	\subfigure[Empirical cover rate]{  
			\includegraphics[width=0.7\textwidth]{./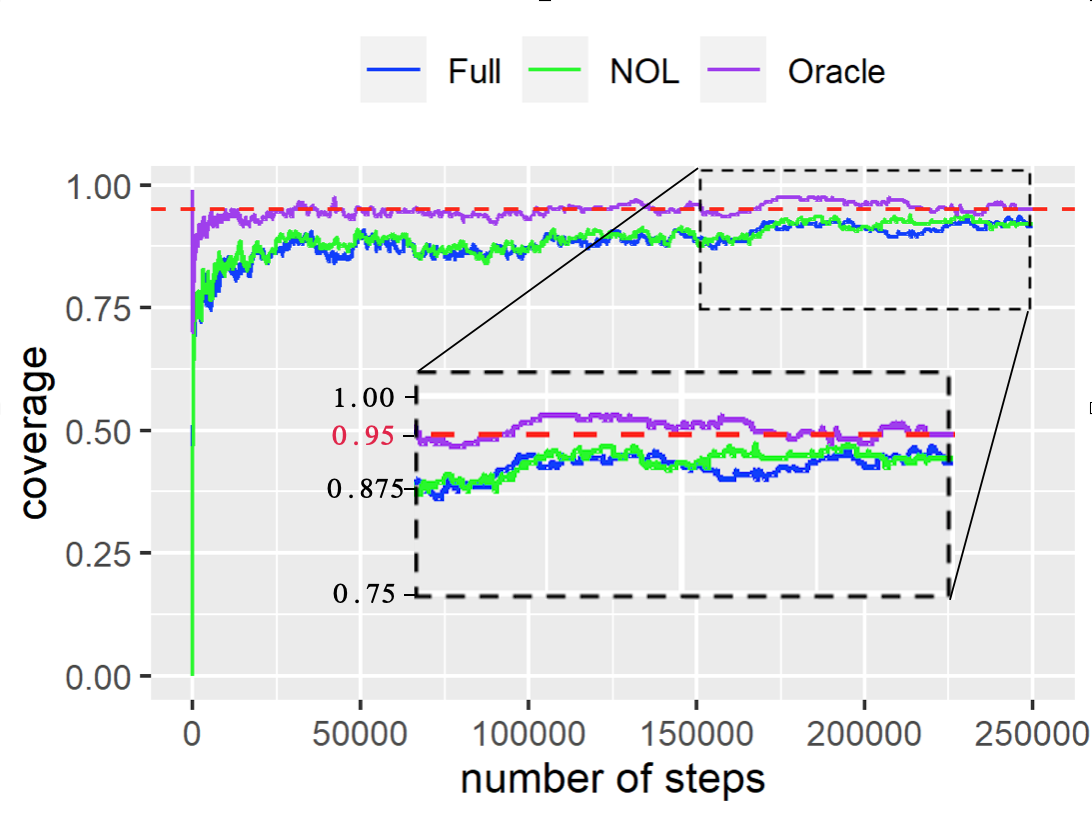}
	}\\
	\subfigure[CI length]{ 
		\includegraphics[width=0.4\textwidth]{./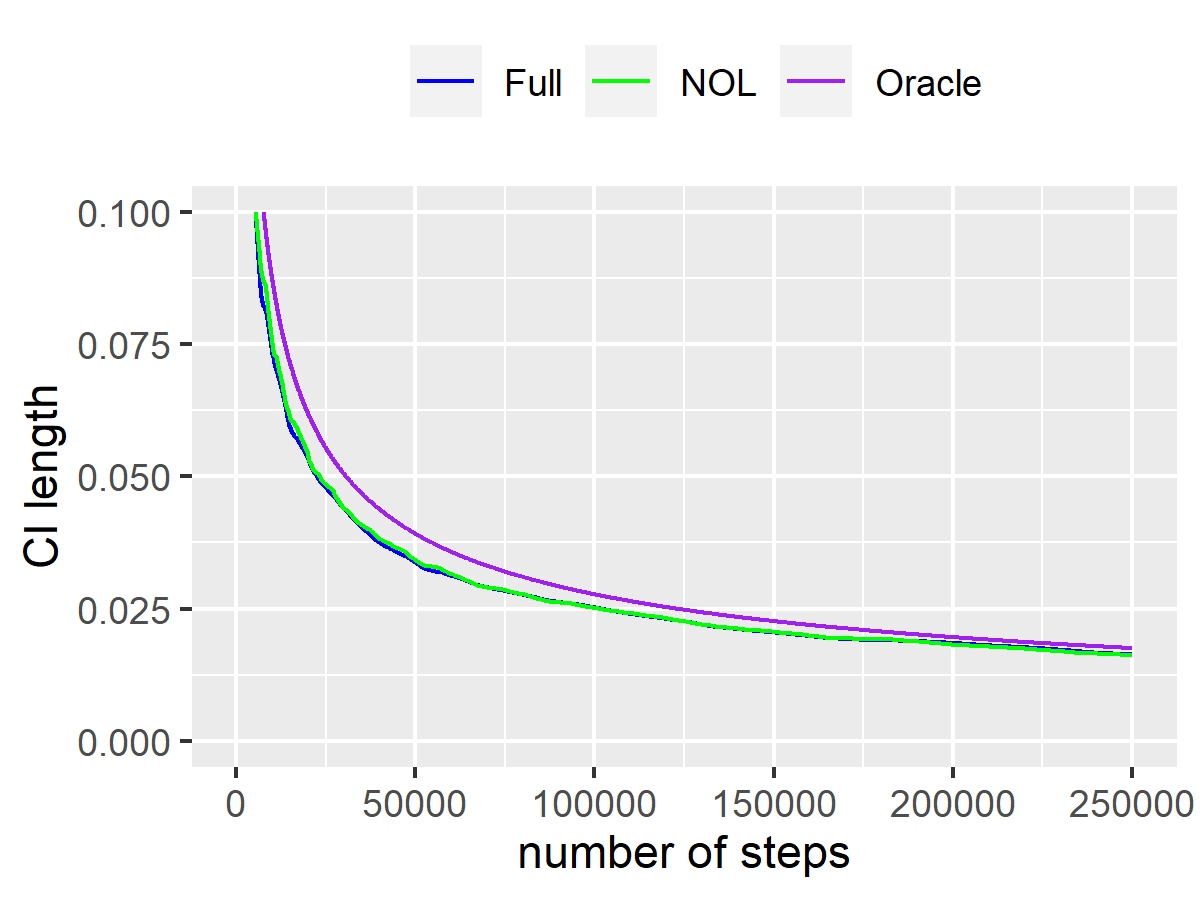}
	} 
	\subfigure[Normality]{  
		\includegraphics[width=0.4\textwidth]{./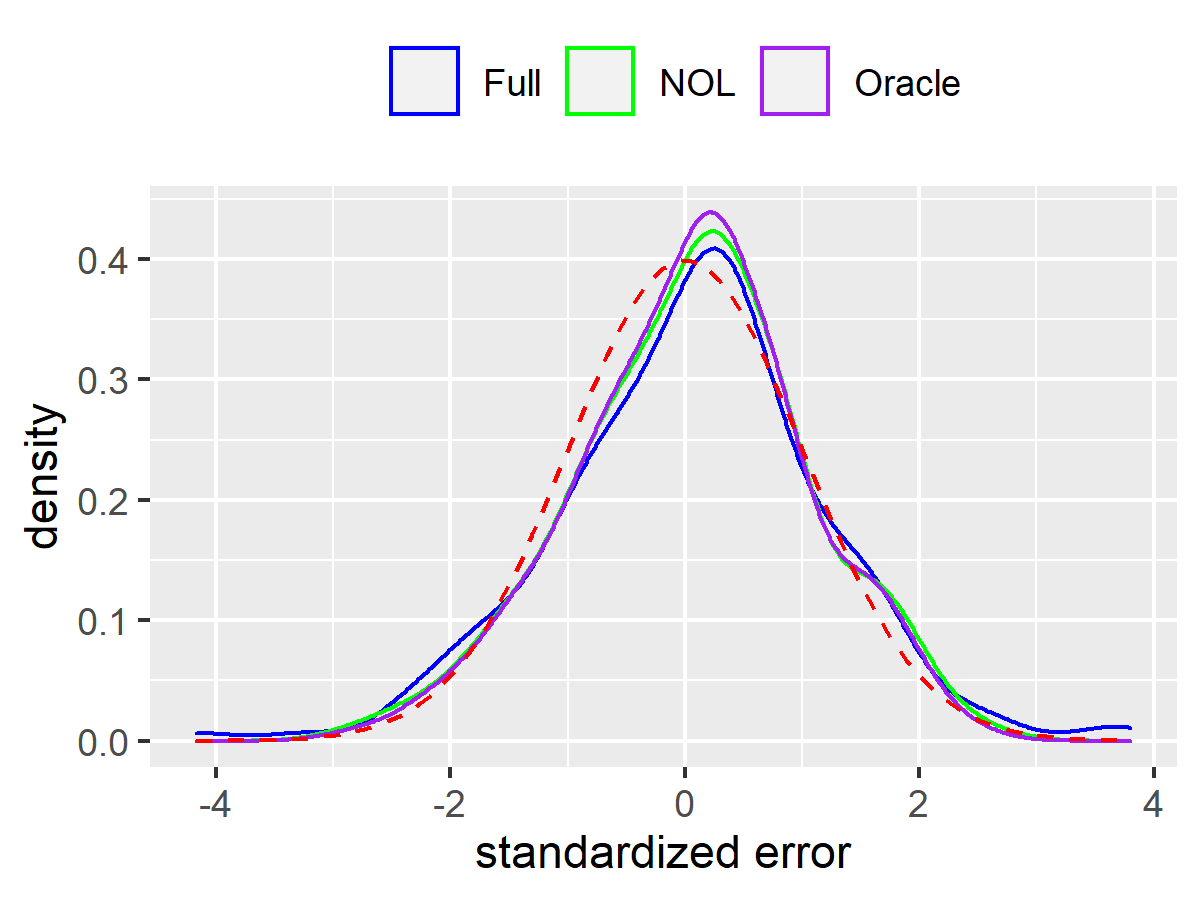} 
	}\\
	\caption{Linear regression with $d$ = 5: (a): Empirical coverage rate against the number of steps. Red dashed line denotes the nominal coverage rate of 0.95. (b): Length of confidence intervals. (c): Density plot for the standardized error. Red curve denotes the standard normal density. }
	\label{fig:5dlinear_ci}
\end{figure}

%The plot of CI lengths shows that the estimated length of confidence interval is slightly smaller than oracle, which explains why the empirical coverage is slightly smaller than $95\%$. Both the density plot and CI length plot imply that the estimated covariance is slightly smaller than the true value. In fact, when considering bias, the value of $1^{T}(\hat\Sigma_{n} - \Sigma)1$ is always negative in above discussion. The slight underestimation is also noticed in classic batch-means estimator. It is reasonable since the idea to construct covariance matrix estimators in both cases is to ignore the covariance between SGD iterates when they are in different batches, while the true correlations are usually positive. 

\subsection{Comparison with other methods}
\label{sec:simulation_comparision}
 In this section, we compare the performance of the proposed online estimator, which we refer to as online-BM in the subsequent numerical experiments, with other estimators for marginal inference of each individual regression coefficient.  We consider both linear and logistic regression examples. The nominal coverage probability is set to 95\%. 

 We first compare the empirical coverage rates of the proposed estimator with the plug-in estimator in \citet{chen2016statistical}. As we mentioned in the introduction, the plug-in estimator requires the computation of the Hessian matrix (of the loss function) and its inverse. Figure \ref{fig:compare_P_linear} shows that our online estimator (online-BM) has a comparable performance as the plug-in estimator when the number of iterations is large enough. Although the online-BM has a slower convergence rate, it has an advantage in computational efficiency  since it only uses the iterates from SGD. The online-BM is more desirable for practitioners when the computation is limited or only stochastic gradient information is available.
\begin{figure}
	\centering  
		\includegraphics[width=0.44\textwidth]{./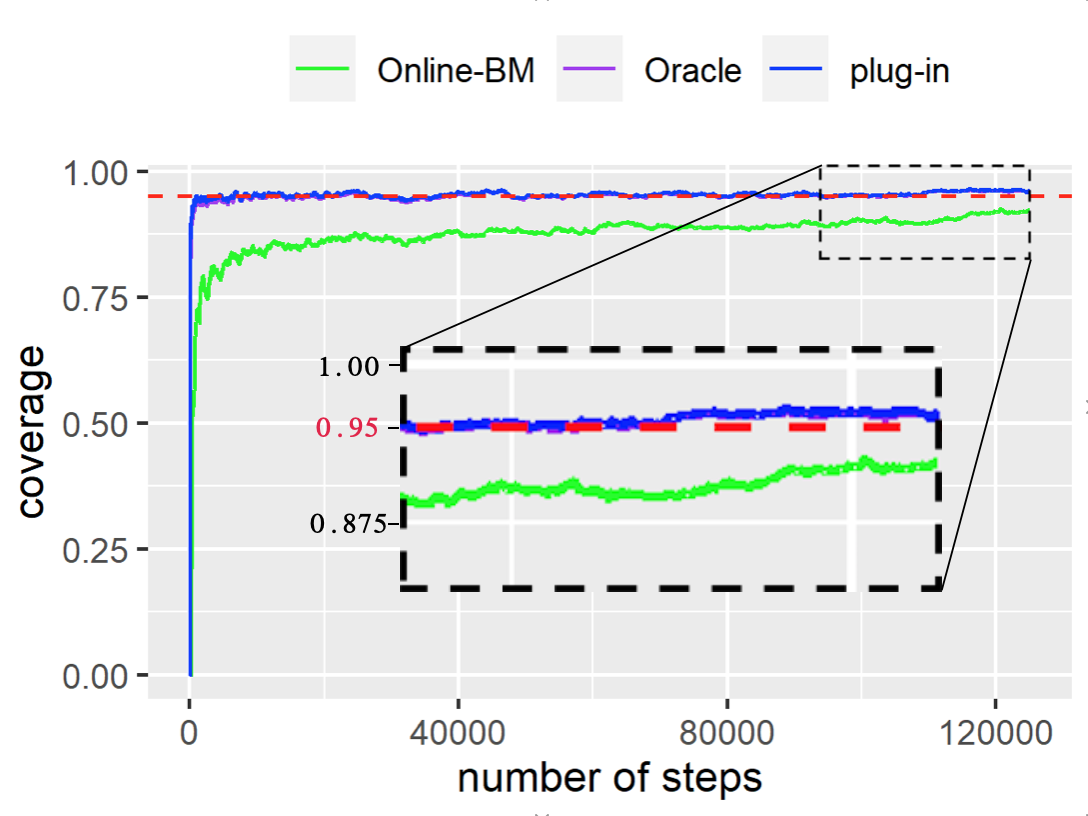} 
		   	\includegraphics[width=0.44\textwidth]{./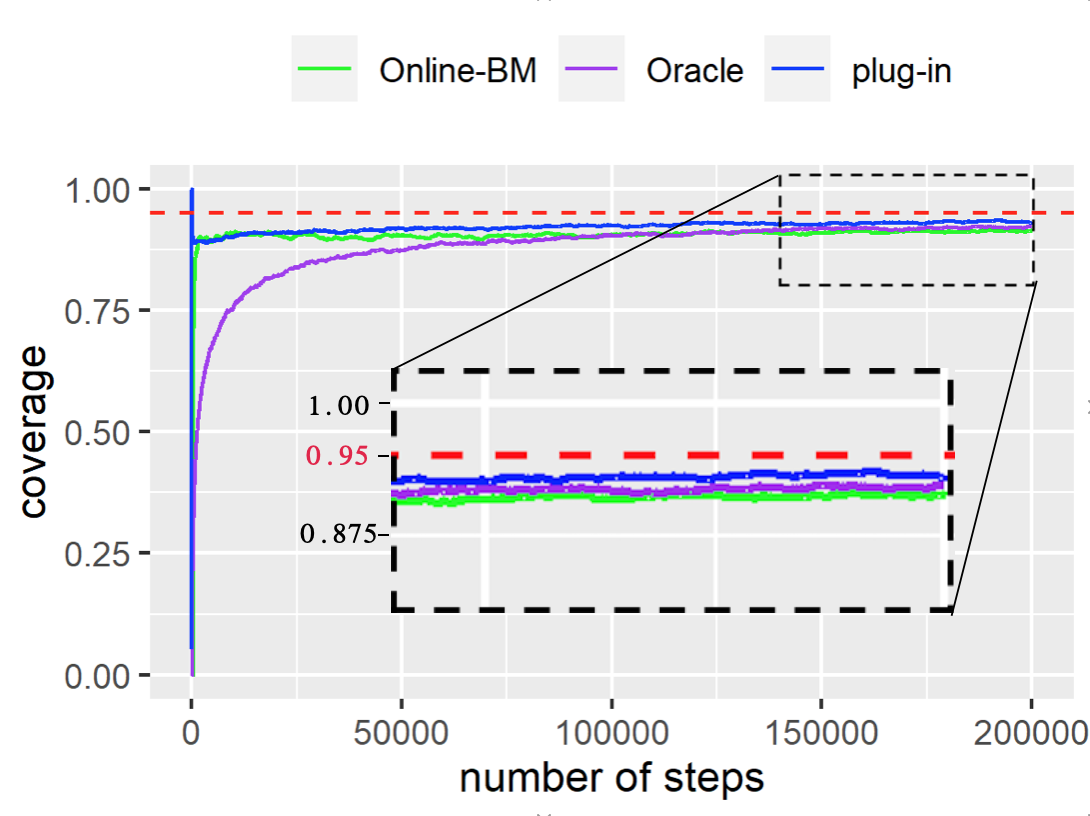}
	\\ 
\includegraphics[width=0.44\textwidth]{./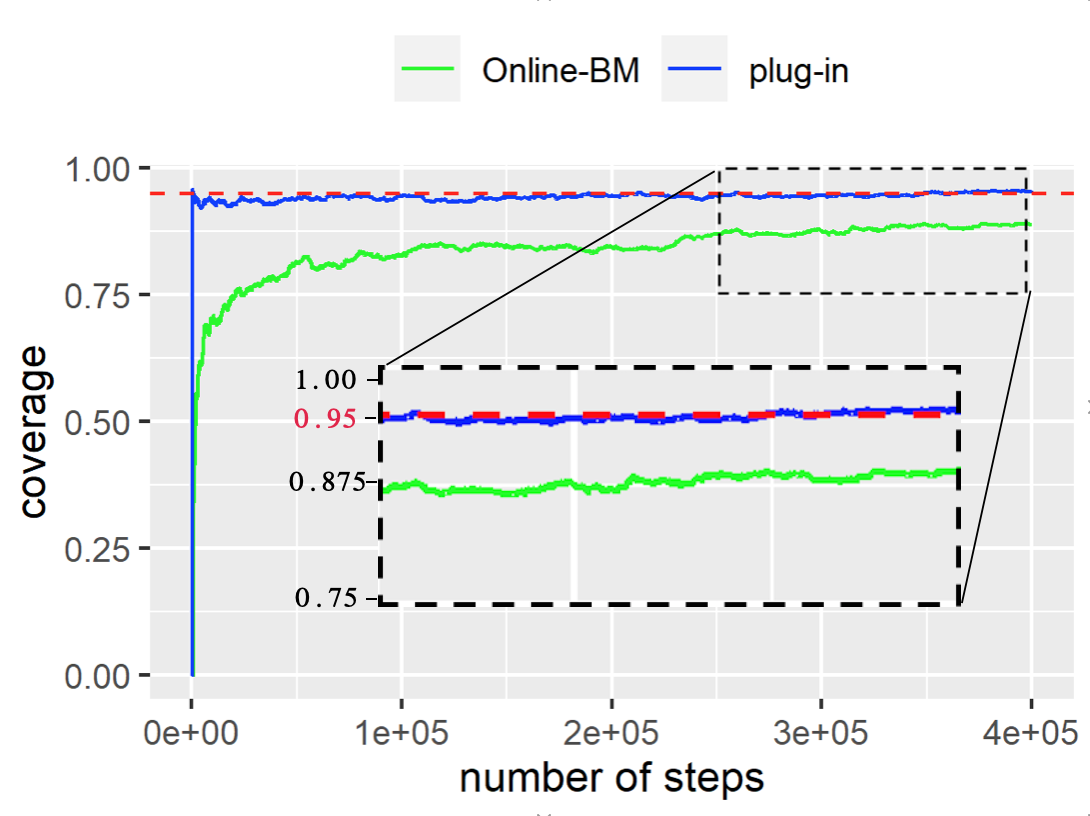} 
		\includegraphics[width = 0.44\textwidth]{./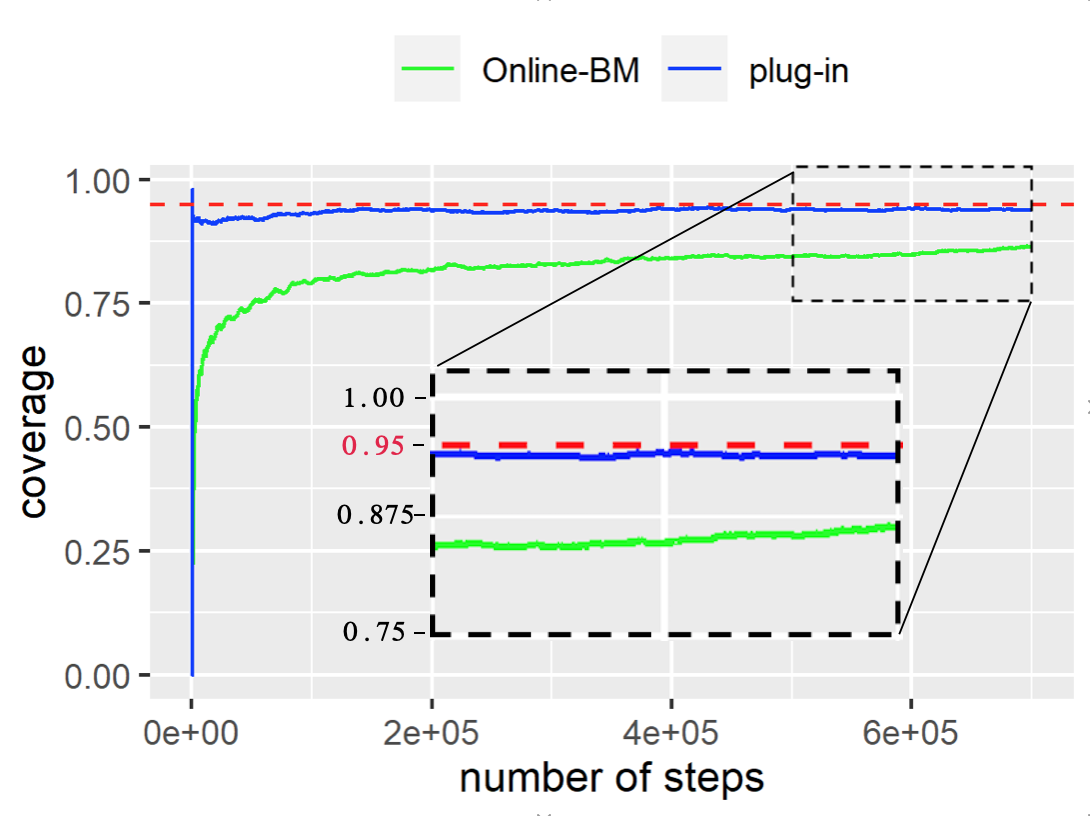} 
\\
	\subfigure[$d$ = 5]{ 	\includegraphics[width=0.44\textwidth]{./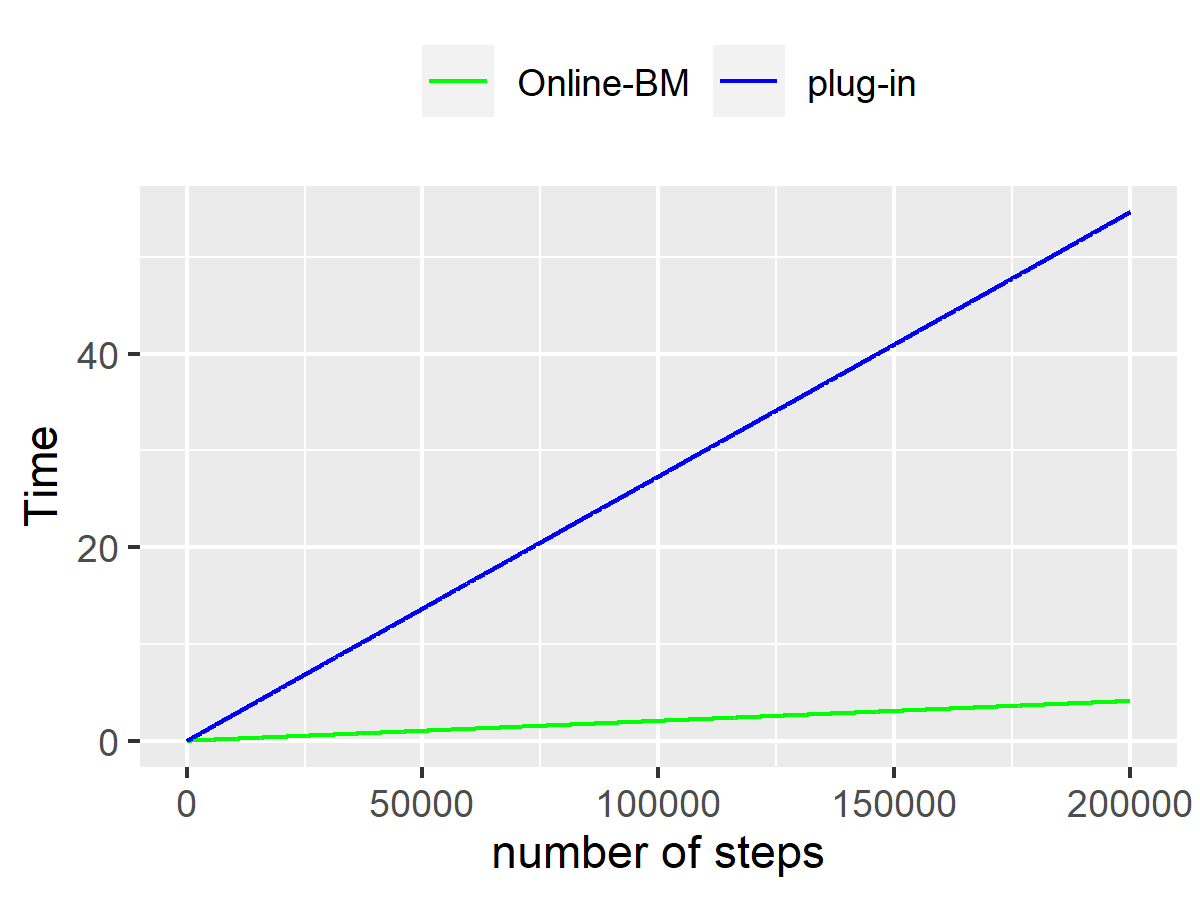}
	} 
	\subfigure[$d$ = 20]{  		\includegraphics[width= 0.44\textwidth]{./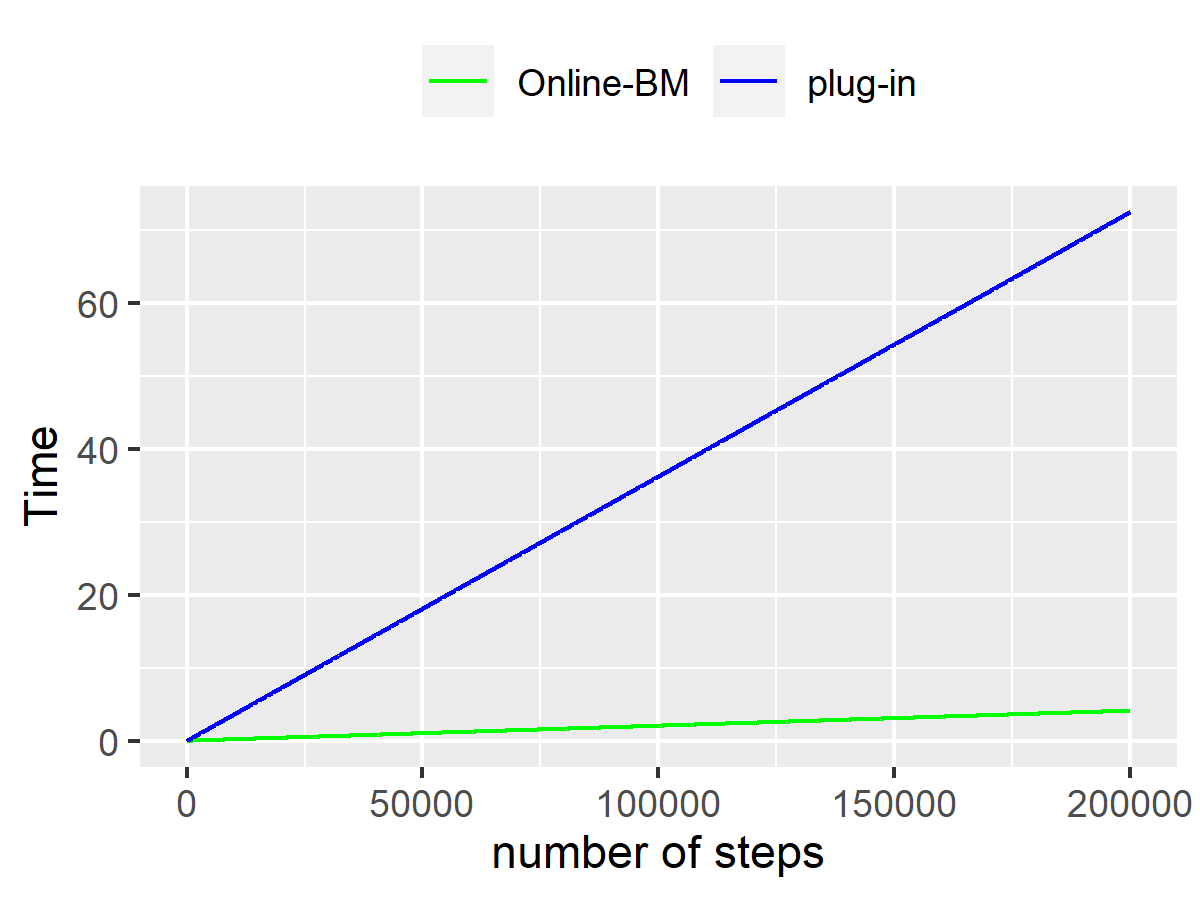} 	
	} 
	\caption{Comparison of online-BM and Plug-in estimators. First/Middle row: Empirical coverage rate against the number of steps in linear/logistic model. Red dashed line denotes the nominal coverage rate of 0.95. Third row: total computation time for updating covariance estimate and confidence intervals in SGD. }
	\label{fig:compare_P_linear}
\end{figure}

\begin{table}[!t]
	\centering	
	\caption{Empirical coverage rates: the average coverage rate for the nominal coverage probability $95\%$. Standard errors are reported in the brackets.}
	\label{table: compare_BM}
	\begin{tabular}{  l | c   c c c  }
		\hline
		\multicolumn{5}{c}{linear model }\\
		\hline	
		($d$ = 5)& $n$ = 50000 &  $n$ = 80000  & $n$ = 100000 &  $n$= 125000 \\
		\hline
		online-BM& 0.894 (0.02177)  & 0.901 (0.02114) &0.917 (0.01951)& 0.935 (0.01746)\\
		BM& 0.894 (0.02177)  & 0.904 (0.02085) & 0.910 (0.02022) & 0.928 (0.01831) \\
		%\hline 
		%\multicolumn{5}{l}{linear model,  d = 20}\\
		\hline	
		($d$ = 20)& $n$= 50000 &  $n$= 100000  & $n$= 150000 &  $n$= 200000 \\
		\hline
		online-BM& 0.904 (0.02078) & 0.907 (0.02050)& 0.910 (0.02022)& 0.914 (0.01986) \\
		BM& 0.878 (0.02312)   & 0.901 (0.02121)  & 0.908  (0.02043) & 0.910 (0.02029) \\
		
		\hline
		\multicolumn{5}{c}{logistic model}\\
		\hline	
		($d$ = 5)& $n$ = 100000 &  $n$ = 200000  & $n$ = 300000 &  $n$ = 400000 \\
		\hline
		online-BM& 0.828 (0.01011)  & 0.844 (0.00933) &0.875 (0.00770)& 0.889 (0.00700) \\
		BM& 0.822 (0.01032)   & 0.847 (0.00919) & 0.875 (0.00771)  & 0.885 (0.00721) \\
		\hline 
		%\multicolumn{5}{l}{logistic model, d = 20}\\
		%\hline	
		($d$ = 20)&$n$ = 100000 &  $n$ = 300000  & $n$ = 500000 &  $n$ = 700000 \\
		\hline
		online-BM& 0.791 (0.01167) & 0.829 (0.01004) &0.845 (0.00926) & 0.864 (0.00834)  \\
		BM& 0.787 (0.01188)    & 0.827 (0.01011)  & 0.839 (0.00955)   & 0.859 (0.00856) \\
		\hline 	    
	\end{tabular}
\end{table} 
Next, we compare the finite sample coverage rate of the proposed online-BM estimator and the batch means covariance matrix estimator from \citet{chen2016statistical}, which we refer to as BM.  Table \ref{table: compare_BM} shows that the finite sample coverage rates of the two estimators are close to each other in all cases, and the finite sample performance of our method slightly outperforms \cite{chen2016statistical} when $n$ is large. In fact, this is not a totally fair comparison for us since we implement the method in \cite{chen2016statistical} based on the prior knowledge of the exact sample size. 

\section{Conclusion and Future Work}
\label{sec:futu}
In this paper, we propose a fully online approach to estimate the asymptotic covariance matrix in SGD. The recursive algorithm to compute the covariance matrix estimate is computationally efficient. We demonstrate that the online batch means covariance matrix estimator (both full overlapping version and non-overlapping version) is consistent with the upper bound of convergence rate $O(n^{-(1-\alpha)/4})$ in the general case. 
%We also show that in a special case, the convergence is even faster than $O(n^{-(1-\alpha)/4})$. 
Based on the estimated covariance matrix, we construct confidence intervals/regions with asymptotically correct coverage probabilities for the model parameters.  As for future directions, it would be of interest to develop a lower bound result on the online estimation of limiting covariance matrices. With such a result, we will be able to tell whether the proposed estimator is rate-optimal.  Also, as mentioned in Section \ref{sec: EL}, it would be interesting to see if one can obtain  statistics similar to the PBEL ratio based on the non-overlapping version online covariance estimator and establish a limiting distribution that can be used to calibrate confidence regions for SGD solutions without using the asymptotic normality results.

\section*{Acknowledgments}
We thank the anonymous reviewers and editors for the constructive feedbacks that significantly improved our paper. Wanrong Zhu and Wei Biao Wu would like to thank the support from NSF via NSF-DMS-1916351 and NSF-DMS-2027723.  Xi Chen would like to thank the support from NSF via IIS-1845444.

\appendix

 	%The supplement material is organized as follows: In section \ref{app:1}, we introduce some technical lemmas, which are useful for our proofs later. In section \ref{app:2}, we prove the convergence of our proposed online estimator  in the special case of linear processes, i.e., Lemma \ref{linearcase}. We break down the proof of Lemma \ref{linearcase} into several parts: \ref{lemma2}, \ref{lemma3}, \ref{lemma4}, and \ref{lemma5}  in the rest of this section. Based on the results for the special case, we prove in section \ref{app:3} the convergence in general cases, i.e., Theorems \ref{theorem2} and \ref{theorem: nonoverlap}. We provide proof of Proposition \ref{prop:mean}  in section \ref{sec: prop}. We also include a simple simulation study applying the fixed-width sequential stopping rule in Section \ref{sec:stopping_rule_simulation}. 
 \section{Technical Lemmas}\label{app:1}
 \begin{lemma}\label{lemma:Y}
 	Assume that $A$ is a positive definite matrix. For any $i\in \N$, define the matrix sequence $\{Y_{i}^{j}\}$ with $Y_{i}^{i} = \I$ and for any $j>i$
 	$$Y_{i}^{j} = \prod_{k = i+1}^{j}(\I - \eta_{k}A),$$  where $\eta_{k}$ is chosen to be $\eta k^{-\alpha}$ for $\a \in (1/2, 1)$. Then we have
 	$$
 	\begin{aligned} 
 		\|Y_{i}^{j}\|_{2}\le \exp\left(-\eta\gamma\sum_{k=i+1}^{j}k^{-\alpha}\right)\le \exp\left[-\frac{\gamma\eta}{1-\alpha}\left(j^{1-\alpha} - (i+1)^{1-\alpha}\right)\right],
 	\end{aligned}
 	$$
 	where $\gamma = \min(\lambda_{\min}(A), 1/(2\eta))$. 
 \end{lemma}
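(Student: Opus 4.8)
The plan is to reduce the bound on the matrix product to a product of scalar bounds via submultiplicativity of the operator norm, and then compare a sum to an integral. First I would record the structural facts about $A = \nabla^{2}F(x^{*})$: by Assumption \ref{ass:reg}, $F$ is strongly convex with parameter $\mu$ and has $L$-Lipschitz gradient, so $A$ is symmetric positive definite with $\mu \le \lambda_{A} = \lambda_{\min}(A) \le \lambda_{\max}(A) \le L$. Consequently each factor $I - \eta_{k}A$ is symmetric, and $Y_{i}^{j}$ is a product of such matrices. Submultiplicativity of the operator norm gives
\[
\|Y_{i}^{j}\|_{2} \le \prod_{k=i+1}^{j} \|I - \eta_{k}A\|_{2},
\]
so it suffices to bound each factor.

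For a symmetric matrix the operator norm equals the spectral radius, and the eigenvalues of $I - \eta_{k}A$ are exactly $1 - \eta_{k}\mu_{\ell}$, where $\mu_{\ell} \in [\lambda_{A}, L]$ ranges over the eigenvalues of $A$. Provided $\eta_{k}L \le 1$ --- which holds once $k \ge (\eta L)^{1/\alpha}$, since $\eta_{k} = \eta k^{-\alpha}$ decreases to $0$ --- every such eigenvalue lies in $[1 - \eta_{k}L,\, 1 - \eta_{k}\lambda_{A}] \subseteq [0,1]$, so the spectral radius is attained at $\mu_{\ell} = \lambda_{A}$ and $\|I - \eta_{k}A\|_{2} = 1 - \eta_{k}\lambda_{A}$. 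Using $1 - x \le e^{-x}$ then yields $\|I - \eta_{k}A\|_{2} \le e^{-\eta_{k}\lambda_{A}} = \exp(-\eta\lambda_{A}k^{-\alpha})$, and multiplying these over $k = i+1, \dots, j$ produces the first claimed inequality $\|Y_{i}^{j}\|_{2} \le \exp\big(-\eta\lambda_{A}\sum_{k=i+1}^{j}k^{-\alpha}\big)$.

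The second inequality is a routine integral comparison. Since $x \mapsto x^{-\alpha}$ is decreasing, $k^{-\alpha} \ge \int_{k}^{k+1} x^{-\alpha}\,dx$, and summing over $k = i+1, \dots, j$ gives
\[
\sum_{k=i+1}^{j} k^{-\alpha} \ge \int_{i+1}^{j+1} x^{-\alpha}\,dx \ge \frac{1}{1-\alpha}\left(j^{1-\alpha} - (i+1)^{1-\alpha}\right),
\]
where the last step drops $(j+1)^{1-\alpha}$ down to $j^{1-\alpha}$. Because the exponential is decreasing in its argument, substituting this lower bound into the exponent finishes the proof.

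The one genuine subtlety --- and the only place the statement as written needs care --- is the per-factor identity $\|I - \eta_{k}A\|_{2} = 1 - \eta_{k}\lambda_{A}$, which can fail for small $k$ when $\eta$ is large, since there $1 - \eta_{k}L$ may be negative with absolute value exceeding $1$. I would address this by noting that the lemma is invoked only in the asymptotic regime where all indices exceed the threshold $(\eta L)^{1/\alpha}$, or equivalently by assuming $\eta$ is chosen small enough that $\eta_{k}L \le 1$ for every $k \ge 1$; in any case the finitely many small-$k$ factors are bounded by a fixed constant and do not affect the asymptotic rate. Everything else is elementary.
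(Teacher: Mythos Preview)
Your proposal is correct and follows essentially the same route as the paper's proof: submultiplicativity to reduce to single factors, the spectral bound $\|I-\eta_k A\|_2 \le 1-\eta_k\lambda_A$, the inequality $1-x\le e^{-x}$, and the integral comparison $\sum_{k=i+1}^j k^{-\alpha}\ge \int_{i+1}^{j+1}x^{-\alpha}\,dx$. The only difference is that you are explicit about the condition $\eta_k L\le 1$ needed for the per-factor bound, which the paper tacitly assumes; this is a welcome clarification, not a departure in method.
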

 \begin{proof}
 	Since $A$ is positive definite, there exists an orthonormal matrix $Q$ and a diagonal matrix $\Lambda$ such that $A= Q\Lambda Q^{T}$. We have
 	$$
 	\begin{aligned}
 		\|Y_{i}^{j}\|_{2} &\le \prod_{k=i+1}^{j}\left\|(\I - \eta_{k}A)\right\|_{2}=\prod_{k=i+1}^{j}\left\|(\I - \eta_{k}\Lambda)\right\|_{2}
 		\le \prod_{k=i+1}^{j}\left(1- \gamma\eta k^{-\a}\right).
 	\end{aligned} 
 	$$
 	Note that $1-x\le \exp(-x) $ for any $x\in [0,1]$. So $\|Y_{i}^{j}\|_{2}$ can be further bounded as
 	$$\begin{aligned}
 		\|Y_{i}^{j}\|_{2}\le \exp\left(-\sum_{k=i+1}^{j}\gamma\eta k^{-\a}\right).
 	\end{aligned} $$
 	The lemma can be verified using the fact that $$
 	\sum_{k=i+1}^{j}k^{-\alpha}\ge\int_{i+1}^{j+1} k^{-\alpha}dk = \frac{1}{1-\alpha}\left((j+1)^{1-\alpha}-(i+1)^{1-\alpha}\right).
 	$$ 
 \end{proof}
 
 \begin{lemma}\label{lemma:S}With $Y_{i}^{j}$ defined in Lemma \ref{lemma:Y}, let $S_{i}^{j} = \sum_{k = i+1}^{j}Y_{i}^{k}$ for any $j>i$ and $S_{i}^{i} = 0$. Then we have
 	$$\begin{aligned}
 		\|S_{i}^{j}\|_{2}\lesssim (i+1)^{\alpha}.
 	\end{aligned}$$
 \end{lemma}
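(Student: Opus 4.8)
The plan is to reduce everything to a scalar tail estimate using the triangle inequality and the decay from Lemma \ref{lemma:Y}, then bound the resulting sum by an integral of order $i^{\alpha}$. Write $c = \lambda_{A}\eta/(1-\alpha) > 0$ (recall $\lambda_{A}>0$ is available from Lemma \ref{lemma:Y}). Since $S_{i}^{j}=\sum_{k=i+1}^{j}Y_{i}^{k}$, the triangle inequality together with the bound $\|Y_{i}^{k}\|_{2}\le \exp(-c(k^{1-\alpha}-(i+1)^{1-\alpha}))$ gives
\[
\|S_{i}^{j}\|_{2}\le \sum_{k=i+1}^{j}\|Y_{i}^{k}\|_{2}\le e^{c(i+1)^{1-\alpha}}\sum_{k=i+1}^{j}e^{-ck^{1-\alpha}}.
\]
Because the summand is positive and decreasing in $k$, the partial sum is bounded uniformly in $j$ by the tail integral, $\sum_{k=i+1}^{j}e^{-ck^{1-\alpha}}\le \int_{i}^{\infty}e^{-cx^{1-\alpha}}\,dx$. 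Thus the whole problem collapses to estimating this integral.

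The crux is the claim that $\int_{a}^{\infty}e^{-cx^{1-\alpha}}\,dx\lesssim a^{\alpha}e^{-ca^{1-\alpha}}$ for large $a$. I would integrate by parts via the identity $e^{-cx^{1-\alpha}}=-\frac{x^{\alpha}}{c(1-\alpha)}\frac{d}{dx}e^{-cx^{1-\alpha}}$, which yields
\[
\int_{a}^{\infty}e^{-cx^{1-\alpha}}\,dx=\frac{a^{\alpha}}{c(1-\alpha)}e^{-ca^{1-\alpha}}+\frac{\alpha}{c(1-\alpha)}\int_{a}^{\infty}x^{\alpha-1}e^{-cx^{1-\alpha}}\,dx.
\]
Since $\alpha<1$, we have $x^{\alpha-1}\le a^{\alpha-1}$ on $[a,\infty)$, so the remainder is at most $a^{\alpha-1}\int_{a}^{\infty}e^{-cx^{1-\alpha}}\,dx$. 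Denoting the integral by $I(a)$, this reads $I(a)\le \frac{a^{\alpha}}{c(1-\alpha)}e^{-ca^{1-\alpha}}+\frac{\alpha a^{\alpha-1}}{c(1-\alpha)}I(a)$; for $a$ large the coefficient $\alpha a^{\alpha-1}/(c(1-\alpha))$ drops below $1/2$ and can be absorbed into the left-hand side, giving $I(a)\le \frac{2}{c(1-\alpha)}a^{\alpha}e^{-ca^{1-\alpha}}$ with an explicit constant.

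Combining the two displays with $a=i$ gives $\|S_{i}^{j}\|_{2}\lesssim i^{\alpha}\,e^{c((i+1)^{1-\alpha}-i^{1-\alpha})}$, and by the mean value theorem $(i+1)^{1-\alpha}-i^{1-\alpha}\le (1-\alpha)i^{-\alpha}\le 1-\alpha$, so the exponential factor is bounded by the constant $e^{c(1-\alpha)}=e^{\lambda_{A}\eta}$; finally $(i+1)^{\alpha}\asymp i^{\alpha}$ absorbs the shift in the lower limit and yields $\|S_{i}^{j}\|_{2}\lesssim i^{\alpha}$. I expect the main obstacle to be precisely the self-referential integration-by-parts estimate in the second paragraph: one must confirm the remainder integral is genuinely lower order (valid only for $i$ large, consistent with the large-index regime) so that it can be absorbed and the \emph{sharp} prefactor $a^{\alpha}$ is recovered. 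A cruder bound on the tail integral would inflate the exponent of $i$ and miss the claimed $i^{\alpha}$ rate; an alternative route avoiding this estimate is an Abel summation of $A\,S_{i}^{j}=\sum_{k}\eta_{k+1}^{-1}(Y_{i}^{k}-Y_{i}^{k+1})$ followed by $\|S_{i}^{j}\|_{2}\le \lambda_{A}^{-1}\|A\,S_{i}^{j}\|_{2}$, but that variant ultimately relies on the same tail control, so I would present the direct argument above.
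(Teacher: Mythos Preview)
Your proof is correct and follows essentially the same route as the paper: triangle inequality with Lemma~\ref{lemma:Y}, then a sum-to-integral comparison followed by an integration-by-parts tail estimate. The only cosmetic difference is that the paper first substitutes $t=x^{1-\alpha}$ and expands $\int_a^\infty e^{-t}t^{\alpha/(1-\alpha)}\,dt$ via iterated integration by parts, whereas you integrate by parts once in the original variable and absorb the remainder self-referentially; both recover the same $a^{\alpha}e^{-ca^{1-\alpha}}$ bound.
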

 \begin{proof}
 	Through triangle inequality and Lemma \ref{lemma:Y}, 
 	\begin{equation}\label{ap:1}
 		\|S_{i}^{j}\|_{2}\le\sum_{k = i+1}^{j}\|Y_{i}^{k}\|_{2}\le\sum_{k=i+1}^{j}\exp\left[-\frac{\gamma\eta}{1-\alpha}\left(k^{1-\alpha} - (i+1)^{1-\alpha}\right)\right].
 	\end{equation}
 	Note that $\exp\left(-\frac{\gamma\eta}{1-\alpha} k^{1-\alpha}\right)$ is decreasing with $k$, so	  
 	$$
 	\sum_{k=i+1}^{j}\exp\left(-\frac{\gamma\eta}{1-\alpha} k^{1-\alpha}\right)\le
 	\int_{i+1}^{j}\exp\left(-\frac{\gamma\eta}{1-\alpha}k^{1-\alpha}\right)dk\lesssim\int_{(i+1)^{1-\alpha}}^{k^{1-\alpha}}\exp\left(-\frac{\gamma\eta}{1-\alpha}t\right)t^{\alpha/(1-\alpha)}dt.
 	$$
 	For any $1\le a\le b$ and any $1 < \beta$, we have by elementary manipulation that
 	%the following with integration by part,
 	$$\begin{aligned}
 		%	\begin{split}
 		\int_{a}^{b}e^{-x}x^{\beta}dx \le \int_{a}^{\infty}e^{-x}x^{\beta}dx\lesssim a^{\beta}e^{-a}C_{\beta},
 		%&= a^{\beta}e^{-a}\left(1 + \beta a^{-1} + ... + \frac{\tilde{\beta}!}{\beta-\left\lfloor\beta\right\rfloor}a^{-\left\lfloor\beta\right\rfloor}\right)  +\tilde{\beta}! \int_{a}^{\infty}e^{-x}x^{\beta - \left\lfloor\beta\right\rfloor - 1}dx\ \\
 		%&\le a^{\beta}e^{-a}\left(1 + \beta a^{-1} + ... + \frac{\tilde{\beta}!}{\beta-\left\lfloor\beta\right\rfloor}a^{-\left\lfloor\beta\right\rfloor} + \tilde{\beta}!a^{-\beta}\right)\lesssim a^{\beta}e^{-a},
 		%	\end{split}
 	\end{aligned}$$
 	where 
 	%$\tilde{\beta}! = \beta(\beta-1)(\beta-2)...(\beta-\left\lfloor\beta\right\rfloor)$
 	$C_{\beta}$ is a constant depending only on $\beta$. Then we have	
 	\begin{equation}\label{ap:2}
 		\sum_{k=i+1}^{j}\exp\left(-\frac{\gamma\eta}{1-\alpha} k^{1-\alpha}\right)\lesssim \exp\left(-\frac{\gamma\eta}{1-\alpha}(i+1)^{1-\alpha}\right)(i+1)^{\alpha}.
 	\end{equation}	
 	Combining (\ref{ap:1}) and (\ref{ap:2}),
 	$$\begin{aligned}
 		%\begin{split}
 		\|S_{i}^{j}\|_{2}&\le \exp\left(\frac{\gamma\eta}{1-\alpha}(i+1)^{1-\alpha}\right)\sum_{k=i+1}^{j}\exp\left(-\frac{\gamma\eta}{1-\alpha} k^{1-\alpha}\right) \lesssim (i+1)^{\alpha} .
 		%\end{split}
 	\end{aligned}$$		
 \end{proof}
 
 \begin{lemma}\label{lemma:U}
 	With definition of $Y_{i}^{j}$ in Lemma \ref{lemma:Y}, sequence $U_{n}$ can be rewritten as
 	$$
 	U_{k} =  (\I - \eta_{k}A)U_{k-1} + \eta_{k}\epsilon_{k} 
 	= Y_{s}^{k}U_{s} + \sum_{p = s+1}^{k}Y_{p}^{k}\eta_{p}\epsilon_{p}.
 	$$
 	According to Lemma B.3 in \cite{chen2016statistical}, we have
 	$$
 	\E\|U_{k}\|_{2}^{2}\lesssim k^{-\alpha}.
 	$$
 \end{lemma}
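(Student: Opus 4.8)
The plan is to treat the two assertions of the lemma in turn, since the first is a purely algebraic unrolling of the recursion while the second is the analytic heart. For the expansion $U_k = Y_s^k U_s + \sum_{p=s+1}^k Y_p^k\eta_p\epsilon_p$, I would induct on $k \ge s$. The base case $k=s$ holds because $Y_s^s = I$ and the sum is empty. For the inductive step I substitute the hypothesis for $U_{k-1}$ into $U_k = (I-\eta_k A)U_{k-1} + \eta_k\epsilon_k$; since the factors $I - \eta_m A$ are all polynomials in $A$ and hence commute, one has $(I-\eta_k A)Y_p^{k-1} = Y_p^k$, which absorbs the leading factor into every transition matrix, while the fresh noise term is $\eta_k\epsilon_k = Y_k^k\eta_k\epsilon_k$. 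Collecting terms yields the claimed formula. This step is routine bookkeeping.

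The substance is the bound $\E\|U_k\|_2^2 \lesssim k^{-\alpha}$. Taking $s=0$, write $U_k = Y_0^k U_0 + \sum_{p=1}^k Y_p^k\eta_p\epsilon_p$; the initial term is negligible because $\|Y_0^k\|_2$ decays faster than any power of $k$ by Lemma \ref{lemma:Y}. For the stochastic sum I would use that $\{\epsilon_p\}$ is a martingale difference sequence (Assumption \ref{ass:martingale}(1)): since $\E_{p-1}\epsilon_p = 0$ and the $Y_p^k$ are deterministic, all cross terms in $\E[U_kU_k^T]$ vanish, so that $\E\|U_k\|_2^2 = \mathrm{tr}\,\E[U_kU_k^T]$ equals $\sum_{p=1}^k \eta_p^2\,\mathrm{tr}\!\big(Y_p^k\,\E[\epsilon_p\epsilon_p^T]\,(Y_p^k)^T\big)$ up to the negligible initial contribution. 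Applying $\mathrm{tr}(YMY^T) \le \|Y\|_2^2\,\mathrm{tr}(M)$ for positive semidefinite $M$, and bounding $\E\|\epsilon_p\|_2^2 = \mathrm{tr}\,\E[\epsilon_p\epsilon_p^T]$ by a uniform constant, reduces the problem to the deterministic sum $\sum_{p=1}^k \eta_p^2\|Y_p^k\|_2^2$. The uniform bound on $\E\|\epsilon_p\|_2^2$ follows from the expansion around $S$ in Assumption \ref{ass:martingale}(2), together with the fact that $\E\|\delta_{p-1}\|_2$ and $\E\|\delta_{p-1}\|_2^2$ stay bounded by Lemma \ref{lemma:1}.

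The main obstacle is the estimate $\sum_{p=1}^k \eta_p^2\|Y_p^k\|_2^2 \lesssim k^{-\alpha}$. Inserting $\eta_p = \eta p^{-\alpha}$ and the bound $\|Y_p^k\|_2 \le \exp\!\big(-\tfrac{\lambda_A\eta}{1-\alpha}(k^{1-\alpha}-(p+1)^{1-\alpha})\big)$ from Lemma \ref{lemma:Y}, I would compare the sum to an integral and run the same change of variables $t = p^{1-\alpha}$ used in the proof of Lemma \ref{lemma:S}. With $c = 2\lambda_A\eta/(1-\alpha)$ this converts the sum into an expression of order $e^{-ck^{1-\alpha}}\int^{k^{1-\alpha}} t^{-\alpha/(1-\alpha)}e^{ct}\,dt$, whose value is governed by the endpoint $t = k^{1-\alpha}$ and is of order $k^{-\alpha}$ once the exponential factors cancel. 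The guiding intuition is that the exponential weight concentrates the sum on the $O(k^{\alpha})$ indices $p$ closest to $k$, where $\eta_p^2 \asymp k^{-2\alpha}$, so the total is of order $k^{\alpha}\cdot k^{-2\alpha} = k^{-\alpha}$. Since the paper attributes this moment bound to Lemma B.3 of Chen et al. (2019), one may alternatively invoke that result directly; I would nonetheless include the self-contained summation estimate to keep the argument closed.
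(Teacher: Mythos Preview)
Your proposal is correct, and in fact goes beyond what the paper itself does. The paper provides no proof of this lemma at all: the expansion $U_k = Y_s^kU_s + \sum_{p=s+1}^k Y_p^k\eta_p\epsilon_p$ is stated as following directly from the recursion, and the moment bound $\E\|U_k\|_2^2\lesssim k^{-\alpha}$ is simply attributed to Lemma~B.3 in \citet{chen2016statistical} with no further argument.

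Your self-contained treatment is sound. The induction for the expansion is routine. For the moment bound, the reduction via the martingale-difference orthogonality to the deterministic sum $\sum_{p=1}^k\eta_p^2\|Y_p^k\|_2^2$ is exactly the standard mechanism, and your justification for the uniform bound on $\E\|\epsilon_p\|_2^2$ via Assumption~\ref{ass:martingale}(2) together with Lemma~\ref{lemma:1} is valid. The estimate $\sum_{p=1}^k\eta_p^2\|Y_p^k\|_2^2\lesssim k^{-\alpha}$ follows by the change of variables $t=p^{1-\alpha}$ as you indicate; after that substitution the integrand becomes $t^{-\alpha/(1-\alpha)}e^{ct}$, whose integral up to $k^{1-\alpha}$ is dominated by the upper endpoint and yields the factor $k^{-\alpha}$ once the outer $e^{-ck^{1-\alpha}}$ cancels. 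Your heuristic that the exponential weight localizes the sum to the last $O(k^{\alpha})$ indices, each contributing $O(k^{-2\alpha})$, is the correct intuition and matches the rigorous computation. Including this argument makes the lemma self-contained, which the paper does not attempt.
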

 
 \begin{lemma}\label{lemma:l}
 	Let $a_{m} = \left\lfloor Cm^{\beta}\right\rfloor, m\ge 2$ ($a_{1} = 1$), for some constant $C>0$ and $\beta> 1/(1-\alpha)$. For $a_{M}\le n<a_{M+1}$, define $n_{m} = a_{m+1} - a_{m}, 1\le m<M$, and $n_{M} = n - a_{M} + 1$. We have
 	\begin{enumerate}
 		\item 
 		\begin{equation}
 			\lim_{M\rightarrow\infty}\frac{\sum_{i=1}^{n}l_{i}}{\sum_{i=1}^{a_{M+1}-1}l_{i}} = 1.
 		\end{equation}
 		\item  
 		\begin{equation}\label{eq:con}
 			\frac{(a_{M+1}-a_{M})^{2}}{\sum_{m = 1}^{M} (a_{m+1}-a_{m})^{2}}\lesssim M^{-1}, \ \text{and}\ \ \frac{a_{M}^{\alpha}}{n_{M}}\rightarrow 0.
 		\end{equation} 
 	\end{enumerate}
 	
 \end{lemma}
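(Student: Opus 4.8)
The plan is to reduce all three assertions to elementary asymptotics of power sums, exploiting the explicit form of the batch lengths. First I would record that for $i\in[a_m,a_{m+1})$ one has $t_i=a_m$, so that $l_i=i-a_m+1$ runs through $1,2,\dots,a_{m+1}-a_m$ as $i$ sweeps the $m$-th motivating batch. Consequently each within-batch sum collapses to a Gauss sum,
\begin{equation*}
\sum_{i=a_m}^{a_{m+1}-1} l_i=\tfrac{1}{2}(a_{m+1}-a_m)(a_{m+1}-a_m+1)\asymp (a_{m+1}-a_m)^2 .
\end{equation*}
Next I would convert the combinatorics into powers of $m$. Writing $a_m=Cm^{\beta}+O(1)$ and applying the mean value theorem to $x\mapsto x^{\beta}$ gives $a_{m+1}-a_m\asymp m^{\beta-1}$ and $a_m\asymp m^{\beta}$ for large $m$; the floor contributes only an $O(1)$ error, negligible against $m^{\beta-1}\to\infty$ since $\beta>\tfrac{1}{1-\alpha}>2$. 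Hence $\sum_{i=a_m}^{a_{m+1}-1}l_i\asymp m^{2(\beta-1)}$, and summing over $m$ via $\sum_{m=1}^{M}m^{p}\asymp M^{p+1}$ (valid because $p=2\beta-2>-1$) yields the two master estimates
\begin{equation*}
\sum_{i=1}^{a_{M+1}-1}l_i\asymp \sum_{m=1}^{M}(a_{m+1}-a_m)^2\asymp M^{2\beta-1},\qquad (a_{M+1}-a_M)^2\asymp M^{2\beta-2}.
\end{equation*}

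For statement (1) I would run a sandwich argument. Since $a_M\le n<a_{M+1}$, the partial sum $\sum_{i=1}^{n}l_i$ lies between $\sum_{i=1}^{a_M-1}l_i$ and $\sum_{i=1}^{a_{M+1}-1}l_i$ (both bounds use $l_i>0$), so after dividing by the upper bound it suffices to show $\sum_{i=1}^{a_M-1}l_i\big/\sum_{i=1}^{a_{M+1}-1}l_i\to 1$. This ratio equals $1$ minus the last-batch fraction $\sum_{i=a_M}^{a_{M+1}-1}l_i\big/\sum_{i=1}^{a_{M+1}-1}l_i\asymp M^{2\beta-2}/M^{2\beta-1}=M^{-1}\to 0$, which forces the limit $1$ by squeezing. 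The numerator and denominator of this fraction are of the same polynomial order as those in the first claim of (2), so that claim follows from the identical estimates: $(a_{M+1}-a_M)^2\big/\sum_{m=1}^{M}(a_{m+1}-a_m)^2\asymp M^{2\beta-2}/M^{2\beta-1}=M^{-1}$.

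For the second claim in (2), reading $n_M$ as the size of the $M$-th motivating batch $a_{M+1}-a_M\asymp M^{\beta-1}$ and using $a_M^{\alpha}\asymp M^{\alpha\beta}$, the ratio is of order $M^{\alpha\beta-(\beta-1)}=M^{1-\beta(1-\alpha)}$, which tends to $0$ precisely when $\beta>\tfrac{1}{1-\alpha}$ --- exactly the standing hypothesis. (If instead $n_M$ denotes the cumulative weight $\sum_i l_i\asymp M^{2\beta-1}$, the same computation gives order $M^{1-\beta(2-\alpha)}\to 0$ under the weaker bound $\beta>\tfrac{1}{2-\alpha}$, which is implied.) I expect no conceptual obstacle here: once the block lengths are pinned down as $\asymp m^{\beta-1}$, all three assertions reduce to comparisons of polynomial orders in $M$. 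The only points needing genuine care are controlling the floor uniformly so that $a_{m+1}-a_m\asymp m^{\beta-1}$ holds with constants independent of $m$, and verifying $2\beta-2>-1$ so the summed growth exponent is $2\beta-1$ rather than logarithmic.
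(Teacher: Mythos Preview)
Your proposal is correct and follows essentially the same route as the paper: collapse each within-batch sum of $l_i$ to the Gauss sum $\tfrac{1}{2}n_m(n_m+1)$ with $n_m=a_{m+1}-a_m$, sandwich $\sum_{i=1}^{n}l_i$ between the sums up to $a_M-1$ and $a_{M+1}-1$, and then reduce everything to polynomial asymptotics via $a_m\asymp m^{\beta}$, $n_m\asymp m^{\beta-1}$. The paper's proof is terser (it dispatches part (2) with ``easy to verify if we plug into the form of $a_k$''), while you spell out the exponent bookkeeping and explicitly use $\beta>\frac{1}{1-\alpha}>2$; your first reading of $n_M$ as the batch length $a_{M+1}-a_M$ is the intended one.
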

 \begin{proof}
 	Since $n\ge a_{M}$, we have
 	$$ 
 	\sum_{i=1}^{n}l_{i} \ge \sum_{i=1}^{a_{M}-1}l_{i} = \sum_{m=1}^{M-1}\sum_{i = a_{m}}^{a_{m+1}-1}(i - a_{m} +1)=\sum_{m=1}^{M-1}\frac{n_{m}(n_{m}+1)}{2}. 
 	$$
 	Also,
 	$$
 	\sum_{i=1}^{a_{M+1}-1}l_{i} = \sum_{m=1}^{M}\frac{n_{m}(n_{m}+1)}{2}.
 	$$
 	Then according to the choice of $a_{k}$, we have
 	\begin{equation}
 		\lim_{M\rightarrow\infty}\frac{\sum_{i=1}^{n}l_{i}}{\sum_{i=1}^{a_{M+1}-1}l_{i}} \ge 1 - \frac{n_{M}(n_{M}+1)}{\sum_{m=1}^{M}n_{m}(n_{m}+1)} = \lim_{M\rightarrow\infty}(1 - M^{-1}) = 1.
 	\end{equation}
 	Since $\sum_{i=1}^{n}l_{i}\le \sum_{i=1}^{a_{M+1}-1}l_{i}$, the limit is 1. Equation \eqref{eq:con} is easy to verify by using the form of $a_{k}$.
 \end{proof}

% \begin{lemma}[Burkholder]\label{Lemma_Burkholder}
% 	Let $q > 1$ and $q' = \min\{q, 2\}$. Let $(D_t)_{t\in\mathbb{Z}}$ be martingale differences with $\E \|D_t\|_{2}^q < \infty$ for every $t\in\mathbb{Z}$. Write $M_n = \sum_{t = 1}^n D_t$. Then
% 	\begin{align*}
% 		(\E\|M_n\|_{2}^{q})^{q'/q} \leq C_q^{q'} \sum_{t = 1}^n (\E\|D_n\|_{2}^{q})^{q'/q}, \enspace \mathrm{where} \enspace C_q = \left\{
% 		\begin{array}{cc}
% 			(q - 1)^{-1}, & 1< q \leq 2,\\
% 			\sqrt{q - 1}, & q > 2.\\
% 		\end{array} \right.
% 	\end{align*}
% \end{lemma}
% In Lemma~\ref{Lemma_Burkholder}, the case $1<q\leq 2$ follows from \citet{burkholder1988sharp} and the other case $q > 2$ is due to \citet{rio2009moment}.  

 \section{The Linear Case}\label{app:2}
 Recall that the error  $\delta_{n} = x_{n} - x^{*}$ takes the form:
 \begin{align}\label{eq:corr}
 	\delta_{n} = \delta_{n-1}- \eta_{n}\nabla F(x_{n-1}) + \eta_{n}\epsilon_{n},
 \end{align}
 where $\epsilon_{n}=\nabla F(x_{n-1})-\nabla f(x_{n-1}, \xi_{n})$. The sequence $\{\epsilon_{n}\}$ is a  martingale difference sequence since 
 \begin{equation}
 	\E_{n-1}\epsilon_{n}= \nabla F(x_{n-1}) - \E_{n-1}\nabla f(x_{n-1}, \xi_{n})=0.
 \end{equation}
 Note that $\nabla F(x^{*}) = 0$ since $x^{*}$ is the minimizer of $F(x)$. By Taylor's expansion of $\nabla F(x_{n-1})$ around $x^{*}$, we have $\nabla F(x_{n-1})\approx  \nabla A\delta_{n-1}$, where $A= \nabla^2 F(x^*)$.  Thus, modifying equation (\ref{eq:corr}) with $\nabla F(x_{n-1})$ approximated by $A\delta_{n-1}$, we have for large $n$ 
 \begin{align}\label{eq:linear}
 	\delta_{n} \approx (\I-\eta_{n}A)\delta_{n-1} + \eta_{n}\epsilon_{n}.
 \end{align} 
 Inspired by \eqref{eq:linear}, we define the linear sequence $(U_{n})_{n\in\N}$  as follows:
 \begin{align}
 	U_{n} = (\I-\eta_{n}A)U_{n-1} + \eta_{n}\epsilon_{n}, \ \ U_{0} = \delta_{0}.
 \end{align}	
 Now we define a new estimator $\tilde{\Sigma}_{n}$ based on $U_{n}$:
 \begin{equation} \tilde{\Sigma}_{n} = \frac{1}{\sum_{i=1}^{n}l_{i}}\sum_{i=1}^{n}\left(\sum_{k=t_{i}}^{i}U_{k}-l_{i}\bar{U}_{n}\right)\left(\sum_{k=t_{i}}^{i}U_{k}-l_{i}\bar{U}_{n}\right)^{T}.
 \end{equation}
 In certain cases when $\nabla F(x_{n-1}) =   \nabla A\delta_{n-1}$, such as  mean estimation model and linear regression model, error $\delta_{n}$ exactly takes the form of $U_{n}$. Then we have $\hat\Sigma_{n} = \tilde{\Sigma}_{n}$. In general cases,  we  can use $U_{n}$ to approximate $\delta_{n}$ since the difference between them is small. In other words, studying covariance matrix of $\bar{U}_{n}$ can give us insight into the covariance matrix of $\bar{x}_{n}$. Next lemma shows that the estimator $\tilde{\Sigma}_{n}$ is consistent. It can be viewed as  a special case of linear processes.

 \begin{lemma}\label{linearcase}  
 	Let $a_{m}=\left\lfloor Cm^{\beta}\right\rfloor$, where $C>0$ and $\beta>1/(1-\alpha)$.  Set step size at the $i$-th  iteration $\eta_{i} = \eta i^{-\alpha}$ with $\frac{1}{2}<\a<1$. Then under Assumptions  \ref{ass:reg} and \ref{ass:martingale},   
 	\begin{equation}
 		\E\left\|\tilde{\Sigma}_{n} - \Sigma\right\|_{2}\lesssim  M^{-\a\beta/2} + M^{-1/2} + M^{((\alpha -1)\beta +1)/2},
 	\end{equation}
 	where $M$ is the number of batches such that $a_{M}\le n<a_{M+1}$.
 \end{lemma}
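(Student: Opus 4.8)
The plan is to work from the explicit linear representation of $U_k$ and reduce $\tilde\Sigma_n$ to a sum of nearly independent per-batch contributions whose mean is $\Sigma$. First I would expand $\tilde\Sigma_n$ exactly as in the derivation of \eqref{est2}, writing $W_i^U=\sum_{k=t_i}^i U_k$, so that
\[
\tilde\Sigma_n=\frac{1}{v_n}\sum_{i=1}^n W_i^U (W_i^U)^T+\frac{q_n}{v_n}\bar U_n\bar U_n^T-\frac{2}{v_n}\Big(\sum_i l_iW_i^U\Big)\bar U_n^T,
\]
and dispose of the last two (centering) terms using $\E\|\bar U_n\|_2^2\lesssim 1/n$ together with the crude bound $q_n/v_n\asymp a_M^{\,1-1/\beta}$ and the control of $\|W_i^U\|_2$ coming from Lemma \ref{lemma:S}; these contribute at a rate dominated by the three stated terms. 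This isolates the quadratic term $v_n^{-1}\sum_i W_i^U(W_i^U)^T$ as the object to analyze.

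The core step is to insert the representation of Lemma \ref{lemma:U}, $U_k=Y_{t_i-1}^kU_{t_i-1}+\sum_{p=t_i}^kY_p^k\eta_p\epsilon_p$, and sum over $k\in[t_i,i]$ to split each batch sum as $W_i^U=I_i+M_i$, where $I_i=\big(\sum_{k=t_i}^iY_{t_i-1}^k\big)U_{t_i-1}$ is the carry-over of the initial condition and $M_i=\sum_{p=t_i}^i(I+S_p^i)\eta_p\epsilon_p$ is a martingale-increment sum. Since $\{\epsilon_p\}$ are martingale differences and $I_i$ is $\mathcal F_{t_i-1}$-measurable, the cross terms vanish in expectation, so $\E[W_i^U(W_i^U)^T]=\E[I_iI_i^T]+\E[M_iM_i^T]$. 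For the martingale part I would use a telescoping identity of the form $\sum_{k=p+1}^iY_p^{k-1}\eta_k=(I-Y_p^i)A^{-1}$ (up to step-size corrections), which gives $\eta_p(I+S_p^i)\to A^{-1}$ as $i-p$ grows, together with the covariance expansion of Assumption \ref{ass:martingale}(2) and the error bounds of Lemma \ref{lemma:1}, to show $v_n^{-1}\sum_i\E[M_iM_i^T]=\Sigma+o(1)$. I expect this computation to split into exactly the error sources matching the theorem: a boundary contribution from the $p$ lying within the mixing length $\sim a_m^\alpha$ of the batch end, where $\eta_p(I+S_p^i)$ is still far from $A^{-1}$, whose relative size is $a_m^\alpha/n_m\asymp m^{(\alpha-1)\beta+1}$ — this is precisely why $\beta>1/(1-\alpha)$ is required — and a contribution from $\E\epsilon_p\epsilon_p^T-S$, which by Lemma \ref{lemma:1} decays like $p^{-\alpha/2}$ and produces the $M^{-\alpha\beta/4}$ term; the initial-condition term $\E\|I_i\|_2^2\lesssim a_m^\alpha$ (via Lemmas \ref{lemma:S} and \ref{lemma:U}) is absorbed into the same boundary rate.

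For the fluctuation $v_n^{-1}\sum_i(W_i^U(W_i^U)^T-\E[\cdot])$ I would exploit a key structural fact: when $i,j$ lie in different motivating batches the martingale parts $M_i,M_j$ involve disjoint blocks of $\{\epsilon_p\}$ and are therefore independent, so all cross-batch covariances drop out and
\[
\E\Big\|\frac{1}{v_n}\sum_i\big(W_i^U(W_i^U)^T-\E[\cdot]\big)\Big\|_F^2=\frac{1}{v_n^2}\sum_{m}\E\Big\|\sum_{i\in\text{batch }m}\big(W_i^U(W_i^U)^T-\E[\cdot]\big)\Big\|_F^2.
\]
Bounding each within-batch block by Cauchy--Schwarz and the fourth-moment estimate $\E\|M_i\|_2^4\lesssim l_i^2$ — obtained from a Burkholder/Rosenthal inequality applied to $M_i$, using that the weights $\eta_p\|I+S_p^i\|_2$ are $O(1)$ by Lemma \ref{lemma:S} and that $\E_{p-1}\|\epsilon_p\|_2^4$ is controlled by Assumption \ref{ass:martingale}(3) and Lemma \ref{lemma:1} — gives $\sum_m n_m^4/v_n^2\asymp M^{-1}$, i.e. the $M^{-1/2}$ term after taking the square root.

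Finally I would combine the bias and fluctuation estimates through $\E\|\tilde\Sigma_n-\Sigma\|_2\le\E\|\tilde\Sigma_n-\Sigma\|_F\le\|\E\tilde\Sigma_n-\Sigma\|_F+\big(\E\|\tilde\Sigma_n-\E\tilde\Sigma_n\|_F^2\big)^{1/2}$ and use Lemma \ref{lemma:l} to pass from the index $a_{M+1}-1$ back to $n$. The main obstacle is the bias analysis of $\E[M_iM_i^T]$: keeping track of the doubly-indexed, non-stationary matrix products $\eta_p(I+S_p^i)$ with decaying step sizes and finite-batch truncation, and showing the boundary error is exactly of order $a_m^\alpha/n_m$, so that $\beta>1/(1-\alpha)$ is both necessary and sufficient for consistency; the fourth-moment control needed for the fluctuation term is the other technically heavy ingredient.
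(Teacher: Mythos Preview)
Your reduction to the quadratic term $v_n^{-1}\sum_i W_i^U(W_i^U)^T$ and your disposal of the two centering pieces coincide with the paper's Lemmas~\ref{lemma4} and~\ref{lemma5}. For the quadratic term the paper takes a different split, writing $W_i^U=A_i+B_i$ with $A_i=A^{-1}\sum_{p=t_i}^i\epsilon_p$ and $B_i$ absorbing both the initial condition \emph{and} the weight discrepancy $(\eta_p(I+S_p^i)-A^{-1})\epsilon_p$; the rate $M^{((\alpha-1)\beta+1)/2}$ then comes from Cauchy--Schwarz on the cross term $\sum_iA_iB_i^T$ (their part~II), not from a bias computation. Your $I_i+M_i$ split together with a Frobenius bias--variance decomposition is a legitimate alternative route, and your bias analysis of $\E[M_iM_i^T]$ is essentially correct.

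The genuine gap is in your fluctuation argument. You assert that when $i,j$ lie in different motivating batches, $M_i$ and $M_j$ ``involve disjoint blocks of $\{\epsilon_p\}$ and are therefore independent, so all cross-batch covariances drop out.'' This is false: the $\epsilon_p$ are martingale differences, not an independent sequence, since $\epsilon_p=\nabla F(x_{p-1})-\nabla f(x_{p-1},\xi_p)$ depends on the entire past through $x_{p-1}$. Disjoint blocks of martingale differences are orthogonal, so $\E[M_iM_j^T]=0$, but the \emph{quadratic forms} $M_iM_i^T$ and $M_jM_j^T$ are not uncorrelated. Concretely, for $p<q$ in different batches the martingale property only gives
\[
\E\big[\epsilon_p\epsilon_p^T\epsilon_q\epsilon_q^T\big]=\E\big[\epsilon_p\epsilon_p^T\,\E_{q-1}(\epsilon_q\epsilon_q^T)\big]=\E\big[\epsilon_p\epsilon_p^T\,(S+\Sigma_1(\delta_{q-1}))\big],
\]
which differs from $\E[\epsilon_p\epsilon_p^T]\,\E[\epsilon_q\epsilon_q^T]$ by a term of size $\sim q^{-\alpha/2}$ via Assumption~\ref{ass:martingale}(2) and Lemma~\ref{lemma:1}. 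The paper handles exactly this in Lemma~\ref{lemma2} (its part ``IV''), bounding the aggregate cross-batch contribution by $M^{-\alpha\beta/2}$, which after the square root is the $M^{-\alpha\beta/4}$ in the statement. Thus the $M^{-\alpha\beta/4}$ term is produced not by the bias $\E\epsilon_p\epsilon_p^T-S$ as you write (that contributes $M^{-\alpha\beta/2}$ to the bias, which is smaller), but by the surviving cross-batch second-moment terms that your independence claim erroneously discards; to repair your argument you must keep and bound them.
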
  
 \begin{proof} 
 	
 	Recall that 
 	$$\tilde{\Sigma}_{n} = \left(\sum_{i=1}^{n}l_{i} \right)^{-1}\sum_{i=1}^{n}\left(\sum_{k=t_{i}}^{i}U_{k}-l_{i}\bar{U}_{n}\right)\left(\sum_{k=t_{i}}^{i}U_{k}-l_{i}\bar{U}_{n}\right)^{T}.$$ Using triangle inequality we have
 	\begin{equation}\label{eq:tri1}
 		\begin{split}
 			\E\left\|\tilde{\Sigma}_{n} - \Sigma\right\|_{2}&\le \E \left\| \left(\sum_{i=1}^{n}l_{i} \right)^{-1}\sum_{i=1}^{n}\left(\sum_{k=t_{i}}^{i}U_{k}\right)\left(\sum_{k=t_{i}}^{i}U_{k}\right)^{T}- \Sigma
 			\right\|_{2}\\& + \E \left\| \left(\sum_{i=1}^{n}l_{i} \right)^{-1}\sum_{i=1}^{n}l_{i}^{2}\bar{U}_{n}\bar{U}_{n}^{T}
 			\right\|_{2} + 2\E\left\| \left(\sum_{i=1}^{n}l_{i} \right)^{-1}\sum_{i=1}^{n}\left(\sum_{k=t_{i}}^{i}U_{k}\right)\left(l_{i}\bar{U}_{n}\right)^{T}\right\|_{2}.
 		\end{split}
 	\end{equation}
 	By Lemmas \ref{lemma3}, \ref{lemma4} and \ref{lemma5} (proved in the rest of this section), all these three terms in \eqref{eq:tri1} are bounded, which implies Lemma \ref{linearcase}.
 \end{proof}
 
 Let $$\hat{S}_{n} = \left(\sum_{i=1}^{n}l_{i}\right)^{-1} \sum_{i=1}^{n}\left(\sum_{k = t_{i}}^{i}\epsilon_{k}\right)\left(\sum_{k=t_{i}}^{i}\epsilon_{k}\right)^{T}.$$ In Lemma \ref{lemma2}, we show that $\hat{S}_{n}$ converges to $S$, the covariance matrix of $\nabla{f(x^{*}, \xi)}$. Using this fact, we have Lemma \ref{lemma3}, which provides an upper bound for the first term in \eqref{eq:tri1}. The other two terms in \eqref{eq:tri1} are bounded by Lemma \ref{lemma4} and \ref{lemma5} respectively.

 \begin{lemma}\label{lemma2}
 	Let  $a_{M}\le n<a_{M+1}$. Under conditions in Lemma \ref{linearcase}, we have  
 	\begin{equation}
 		\E \left\|\hat{S}_{n}- S
 		\right\|_{2}
 		\lesssim
 		M^{-\a\beta/2} + M^{-1/2}.
 	\end{equation}

 \end{lemma} 
 \begin{proof}
 	Here we introduce sequence $\{\epsilon_{n}^{*}\}$ as follows 
 	$$\epsilon_{n}^{*} = \nabla F(x^{*})-\nabla f(x^{*},\xi_{n}) = -\nabla f(x^{*},\xi_{n}), n\ge 1.$$
 	Note that $\{\epsilon_{n}^{*}\}$ is  a sequence of  $\emph{i.i.d}$ variables with mean $0$, and therefore $\{\epsilon_{n} - \epsilon_{n}^{*}\}$ is still a martingale difference sequence. We further define
 	$$\hat{S}_{n}^{*} = \left(\sum_{i=1}^{n}l_{i}\right)^{-1} \sum_{i=1}^{n}\left(\sum_{k = t_{i}}^{i}\epsilon^{*}_{k}\right)\left(\sum_{k=t_{i}}^{i}\epsilon^{*}_{k}\right)^{T}.$$
 	Then we can bound $\E\|{\hat{S}_{n}} - S\|_{2}$ through triangle inequality 
 	\begin{equation}
 		\E\|{\hat{S}_{n}} - S\|_{2}\le \E\|\hat{S}^{*}_{n} - S\|_{2} + \E\|\hat{S}_{n} - \hat{S}^{*}_{n}\|_{2}.
 	\end{equation}
 	
 	\bigskip
 	\noindent{\bf Step 1:} Bound $\E\|\hat{S}^{*}_{n} - S\|_{2}$. 	\\
 	Since $\hat{S}_{n}^{*} - S$ is symmetric, 	
 	\begin{equation} 
 		\E\|\hat{S}_{n}^{*} - S\|_{2} = \E|\lambda_{max}(\hat{S}^{*}_{n} - S)| 
 		=  \E\sqrt{\lambda_{max}(\hat{S}_{n}^{*} - S)^{2}}. 
 	\end{equation}
 	Note that $(\hat{S}^{*}_{n} - S)^{2}$ is positive semidefinite. For any positive semidefinite matrix $C$ we have $\lambda_{max}(C)\le\text{tr}(C)\le d\|C\|_{2}$. So $\lambda_{max}(\hat{S}_{n}^{*} - S)^{2}\le \text{tr}(\hat{S}_{n}^{*} - S)^{2}$. Further using Jensen's inequality, we have
 	\begin{equation} 
 		\E\|\hat{S}_{n}^{*} - S\|_{2} 
 		\le  \E\sqrt{\text{tr}(\hat{S}_{n}^{*} - S)^{2}}
 		\le  \sqrt{\text{tr}\E(\hat{S}_{n}^{*} - S)^{2}}
 		\le  \sqrt{d\|\E(\hat{S}_{n}^{*}-S)^{2}\|_{2}}.
 	\end{equation}  
 	Note that by definition of $S$,
 	$$\E(\hat{S}_{n}^{*}) = \left(\sum_{i=1}^{n}l_{i}\right)^{-1} \sum_{i=1}^{n}\sum_{k = t_{i}}^{i}\E\epsilon^{*}_{k}\epsilon^{*T}_{k} = S.$$
 	Then 
 	$$
 	\|\E(\hat{S}_{n}^{*}-S)^{2}\|_{2} = \|\E\hat{S}_{n}^{*2}-S^{2}\|_{2}.
 	$$
 	Note that  $\E(\epsilon^{*}_{p_{1}}\epsilon_{p_{2}}^{*T}\epsilon_{p_{3}}^{*}\epsilon_{p_{4}}^{*T})$ is nonzero if and only if for any $r$ there exist $r'\ne r$ such that $p_{r} = p_{r'}$, $r, r' \in \{1,2,3,4\}$. There are two cases we can consider. The first case is $p_{1}=p_{3}\ne p_{2}=p_{4}$ or $p_{1}=p_{4}\ne p_{2}=p_{3}$. This requires $i$ and $j$ in the same block. The second case is $p_{1}=p_{2}$ and $p_{3}=p_{4}$. We can expand $\E{\hat{S}_{n}}^{*2}$ and rewrite it into two parts,
 	\begin{equation}\label{ap:6}
 		\begin{split}
 			\E{\hat{S}_{n}}^{*2} = & \E \left(\sum_{i=1}^{n}l_{i}\right)^{-2}  \sum_{1\le i,j \le n}\left(\sum_{k = t_{i}}^{i}\epsilon_{k}^{*}\right)\left(\sum_{k=t_{i}}^{i}\epsilon_{k}^{*}\right)^{T}\left(\sum_{k = t_{j}}^{j}\epsilon_{k}^{*}\right)\left(\sum_{k=t_{j}}^{j}\epsilon_{k}^{*}\right)^{T}\\
 			=&\left(\sum_{i=1}^{n}l_{i}\right)^{-2}I + \left(\sum_{i=1}^{n}l_{i}\right)^{-2} II ,
 		\end{split}
 	\end{equation}
 	where 
 	$$
 	\begin{aligned}
 		%\begin{split}
 		I = &\E\sum_{m = 1}^{M-1}\sum_{i = a_{m}}^{a_{m+1 }-1}\left[2\sum_{j = a_{m}}^{i-1}\sum_{a_{m}\le p_{1}\ne p_{2}\le j}\left(\epsilon_{p_{1}}^{*}\epsilon_{p_{2}}^{*T}\epsilon_{p_{1}}^{*}\epsilon_{p_{2}}^{*T} + \epsilon_{p_{1}}^{*}\epsilon_{p_{2}}^{*T}\epsilon_{p_{2}}^{*}\epsilon_{p_{1}}^{*T} \right) + \sum_{a_{m}\le p_{1}\ne p_{2}\le i}\left(\epsilon_{p_{1}}^{*}\epsilon_{p_{2}}^{*T}\epsilon_{p_{1}}^{*}\epsilon_{p_{2}}^{*T} + \epsilon_{p_{1}}^{*}\epsilon_{p_{2}}^{*T}\epsilon_{p_{2}}^{*}\epsilon_{p_{1}}^{*T}\right)\right]\\
 		+& \E \sum_{i = a_{M}}^{n}\left[2\sum_{j = a_{M}}^{i-1}\sum_{a_{M}\le p_{1}\ne p_{2}\le j}\left(\epsilon_{p_{1}}^{*}\epsilon_{p_{2}}^{*T}\epsilon_{p_{1}}^{*}\epsilon_{p_{2}}^{*T} + \epsilon_{p_{1}}^{*}\epsilon_{p_{2}}^{*T}\epsilon_{p_{2}}^{*}\epsilon_{p_{1}}^{*T}\right) + \sum_{a_{M}\le p_{1}\ne p_{2}\le i}\left(\epsilon_{p_{1}}^{*}\epsilon_{p_{2}}^{*T}\epsilon_{p_{1}}^{*}\epsilon_{p_{2}}^{*T} + \epsilon_{p_{1}}^{*}\epsilon_{p_{2}}^{*T}\epsilon_{p_{2}}^{*}\epsilon_{p_{1}}^{*T}\right)\right],
 		%\end{split}
 	\end{aligned}$$
 	and
 	$$
 	\begin{aligned}
 		II =  \sum_{i = 1}^{n}\sum_{j = 1}^{n}\sum_{p = t_{i}}^{i}\sum_{q = t_{j}}^{j}\E(\epsilon_{p}^{*}\epsilon_{p}^{*T}\epsilon_{q}^{*}\epsilon_{q}^{*T})  .
 	\end{aligned}$$
 	Let  $\|\E(\epsilon_{p_{1}}^{*}\epsilon_{p_{2}}^{*T}\epsilon_{p_{3}}^{*}\epsilon_{p_{4}}^{*T})\|_{2}$ be bounded by constant $C$ for any $p_{r}$,$r\in \{1,2,3,4\}$. Then we can bound $I$ as follows, 
 	\begin{equation}
 		\begin{split}
 			\|I\|_{2}\le &\sum_{m = 1}^{M}\sum_{i = a_{m}}^{a_{m+1 }-1}\left[2\sum_{j = a_{m}}^{i-1}\sum_{a_{m}\le p_{1}\ne p_{2}\le j}\left(C + C\right) + \sum_{a_{m}\le p_{1}\ne p_{2}\le i}\left(C+C\right)\right]\\
 			\lesssim &\sum_{m = 1}^{M}\sum_{i = a_{m}}^{a_{m+1 }-1}\left(1\times2 + 2\times3 +...+ (l_{i}-1)\times l_{i}\right)\\
 			\lesssim &\sum_{m = 1}^{M}\sum_{i = a_{m}}^{a_{m+1 }-1} l_{i}^{3}
 			\lesssim \sum_{m = 1}^{M} n_{m}^{4}.
 		\end{split}
 	\end{equation}
 	Since $\sum_{i=1}^{n}l_{i}\asymp \sum_{m = 1}^{M}\sum_{i = a_{m}}^{a_{m+1 }-1} l_{i}\asymp \sum_{m = 1}^{M}n_{m}^{2}$ and $ n_{M}^{2}/\sum_{m = 1}^{M} n_{m}^{2} \lesssim M^{-1}$ ,  we have
 	\begin{equation}\label{ap:7}
 		\left(\sum_{i=1}^{n}l_{i}\right)^{-2}\|I\|_{2}\lesssim \frac{\sum_{m = 1}^{M} n_{m}^{4}}{(\sum_{m = 1}^{M} n_{m}^{2})^{2}}\lesssim \frac{\max_{1\le m\le M} n_{m}^{2}}{\sum_{m = 1}^{M} n_{m}^{2}} \lesssim M^{-1}.
 	\end{equation}
 	Next, note that $\sum_{i = 1}^{n}\sum_{j = 1}^{n}\sum_{p = t_{i}}^{i}\sum_{q = t_{j}}^{j}1 = \left(\sum_{i = 1}^{n}l_{i}\right)^{2}$. Then,
 	\begin{equation}\label{ap:5}
 		\begin{split}
 			\left\|\left(\sum_{i = 1}^{n}l_{i}\right)^{-2}II - S^{2}\right\|_{2}\le & \left(\sum_{i = 1}^{n}l_{i}\right)^{-2} \sum_{i = 1}^{n}\sum_{j = 1}^{n}\sum_{p = t_{i}}^{i}\sum_{q = t_{j}}^{j}\left\|\E(\epsilon_{p}^{*}\epsilon_{p}^{*T}\epsilon_{q}^{*}\epsilon_{q}^{*T})-S^{2}\right\|_{2}\\
 			\lesssim& \left(\sum_{i = 1}^{a_{M + 1}-1}l_{i}\right)^{-2}\sum_{m = 1}^{M}\sum_{k = 1}^{M}\sum_{i = a_{m}}^{a_{m+1}-1}\sum_{j = a_{k}}^{a_{k+1}-1}\sum_{p = a_{m}}^{i}\sum_{q = a_{k}}^{j}\left\|\E(\epsilon_{p}^{*}\epsilon_{p}^{*T}\epsilon_{q}^{*}\epsilon_{q}^{*T}) - S^{2}\right\|_{2}. 
 		\end{split}
 	\end{equation}				
 	We consider two cases here. One is when $p$ and $q$ are in the same block. Let 
 	$$
 	III = \sum_{m = 1}^{M}\sum_{i = a_{m}}^{a_{m+1}-1}\sum_{j = a_{m}}^{a_{m+1}-1}\sum_{p = a_{m}}^{i}\sum_{q = a_{m}}^{j}\left\|\E(\epsilon_{p}^{*}\epsilon_{p}^{*T}\epsilon_{q}^{*}\epsilon_{q}^{*T}) - S^{2}\right\|_{2}.
 	$$ 
 	Here $\|\E(\epsilon_{p}^{*}\epsilon_{p}^{*T}\epsilon_{q}^{*}\epsilon_{q}^{*T})\|_{2}$ is still bounded by constant $C$. Then we have
 	\begin{equation}\label{ap:3}
 		\begin{split}
 			\left(\sum_{i = 1}^{a_{M + 1}-1}l_{i}\right)^{-2} III  \le& \left(\sum_{i = 1}^{a_{M + 1}-1}l_{i}\right)^{-2}\sum_{m = 1}^{M}\sum_{i = a_{m}}^{a_{m+1}-1}\sum_{j = a_{m}}^{a_{m+1}-1}\sum_{p = a_{m}}^{i}\sum_{q = a_{m}}^{j}\left(C +\left\| S^{2}\right\|_{2}\right)\\
 			\lesssim&\left(\sum_{i = 1}^{a_{M + 1}-1}l_{i}\right)^{-2}\sum_{m=1}^{M}\left(\sum_{i=a_{m}}^{a_{m+1}-1}l_{i}\right)^{2}\\
 			\lesssim &\frac{\sum_{m = 1}^{M} n_{m}^{4}}{(\sum_{m = 1}^{M} n_{m}^{2})^{2}}\lesssim \frac{\max_{1\le m\le M} n_{m}^{2}}{\sum_{m = 1}^{M} n_{m}^{2}} \lesssim  M^{-1}.
 		\end{split}
 	\end{equation} 
 	The other case is when $p$ and $q$ are in different blocks. Let 
 	$$
 	IV = \sum_{m\ne k}\sum_{j=a_{k}}^{a_{k+1}-1}\sum_{i=a_{m}}^{a_{m+1}-1}\sum_{q=a_{k}}^{j}\sum_{p=a_{m}}^{i}\left\|\E(\epsilon_{p}^{*}\epsilon_{p}^{*T}\epsilon_{q}^{*}\epsilon_{q}^{*T}) - S^{2}\right\|_{2}.
 	$$
 	Note that $\E(\epsilon_{n}^{*}\epsilon_{n}^{*T}) = S$ by definition of $S$ and $\epsilon_{p}^{*}\epsilon_{p}^{*T}$ is independent of $\epsilon_{q}^{*}\epsilon_{q}^{*T}$, $\forall p>q$. Then for $p>q$,  
 	\begin{equation}
 		\left\|\E(\epsilon_{p}^{*}\epsilon_{p}^{*T}\epsilon_{q}^{*}\epsilon_{q}^{*T}) - S^{2}\right\|_{2} = 	\left\|\E(\epsilon_{p}^{*}\epsilon_{p}^{*T})\E(\epsilon_{q}^{*}\epsilon_{q}^{*T}) - S^{2}\right\|_{2} = 0.
 	\end{equation}
 	Then we have
 	\begin{equation}\label{ap:4}
 		\left(\sum_{i = 1}^{a_{M + 1}-1}l_{i}\right)^{-2}IV=0
 	\end{equation}		
 	Combining (\ref{ap:5}), (\ref{ap:3}) and (\ref{ap:4}), we have
 	\begin{equation}\label{ap:8}
 		\left\|\left(\sum_{i = 1}^{n}l_{i}\right)^{-2}II - S^{2}\right\|_{2}\lesssim \left(\sum_{i = 1}^{a_{M + 1}-1}l_{i}\right)^{-2}III  + \left(\sum_{i = 1}^{a_{M + 1}-1}l_{i}\right)^{-2}IV\lesssim M^{-1}.
 	\end{equation}
 	Further combining (\ref{ap:6}), (\ref{ap:7}) and (\ref{ap:8}), we have
 	\begin{equation} 
 		\|\E\hat{S}_{n}^{*2} - S^{2}\|\le 	\left\|\left(\sum_{i=1}^{n}l_{i}\right)^{-2}II - S^{2}\right\|_{2} + 	\left(\sum_{i=1}^{n}l_{i}\right)^{-2}\left\| I\right\|_{2}
 		\lesssim M^{-1}. 
 	\end{equation}
 	Therefore 
 	$$
 	\E\|\hat{S}_{n}^{*} - S\|_{2} 	\le  \sqrt{d\|\E(\hat{S}_{n}^{*}-S)^{2}\|_{2}}= \sqrt{d\|\E\hat{S}_{n}^{*2}-S^{2}\|_{2}} \lesssim M^{-1/2}.$$
 	
 	\bigskip
 	\noindent{\bf Step 2:} Bound $\E\|\hat{S}_{n} - \hat{S}^{*}_{n}\|_{2}$. 	\\ 
 	Let $v_{k} = \epsilon_{k} - \epsilon_{k}^{*}, k\ge 1$.  We can expand $\E\|\hat{S}_{n} - \hat{S}^{*}_{n}\|_{2}$ as 		
 	\begin{equation} 
 		\begin{split}
 			&\E\|\hat{S}_{n} - \hat{S}^{*}_{n}\|_{2} = \E\left\|\left(\sum_{i=1}^{n}l_{i}\right)^{-1} \sum_{i=1}^{n}\left[\left(\sum_{k = t_{i}}^{i}\epsilon_{k}\right)\left(\sum_{k=t_{i}}^{i}\epsilon_{k}\right)^{T} - \left(\sum_{k = t_{i}}^{i}\epsilon^{*}_{k}\right)\left(\sum_{k=t_{i}}^{i}\epsilon^{*}_{k}\right)^{T}\right]\right\|\\
 			\le &2\E\left\|\left(\sum_{i=1}^{n}l_{i} \right)^{-1}\sum_{i=1}^{n}\left(\sum_{k=t_{i}}^{i}v_{k}\right)\left(\sum_{k=t_{i}}^{i}\epsilon_{k}^{*}\right)^{T}\right\|_{2} +\E\left\|\left(\sum_{i=1}^{n}l_{i} \right)^{-1}\sum_{i=1}^{n}\left(\sum_{k=t_{i}}^{i}v_{k}\right)\left(\sum_{k=t_{i}}^{i}v_{k}\right)^{T}\right\|_{2}.
 		\end{split}
 	\end{equation} 
 	Apply Cauchy's inequality   
 	\begin{equation}
 		\begin{split}
 			&\E\left\|\left(\sum_{i=1}^{n}l_{i} \right)^{-1}\sum_{i=1}^{n}\left(\sum_{k=t_{i}}^{i}v_{k}\right)\left(\sum_{k=t_{i}}^{i}\epsilon_{k}^{*}\right)^{T}\right\|_{2}\\
 			&\le  \sqrt{\E\|\hat{S}^{*}_{n}\|_{2}}\sqrt{\E\left\|\left(\sum_{i=1}^{n}l_{i} \right)^{-1}\sum_{i=1}^{n}\left(\sum_{k=t_{i}}^{i}v_{k}\right)\left(\sum_{k=t_{i}}^{i}v_{k}\right)^{T}\right\|_{2}}.
 		\end{split}
 	\end{equation}   
 	Then we only need to bound $\E\left\|\left(\sum_{i=1}^{n}l_{i} \right)^{-1}\sum_{i=1}^{n}\left(\sum_{k=t_{i}}^{i}v_{k}\right)\left(\sum_{k=t_{i}}^{i}v_{k}\right)^{T}\right\|_{2}$.
 	By triangle inequality and the fact $\|C\|_{2}\le\text{tr}(C)$ for any positive semi-definite matrix $C$,
 	\begin{equation} 
 		\begin{split}
 			\E\left\|\left(\sum_{i=1}^{n}l_{i} \right)^{-1}\sum_{i=1}^{n}\left(\sum_{k=t_{i}}^{i}v_{k}\right)\left(\sum_{k=t_{i}}^{i}v_{k}\right)^{T}\right\|_{2} 
 			\le&\left(\sum_{i=1}^{n}l_{i}\right)^{-1}\sum_{i=1}^{n}\E\text{tr}\left(\left(\sum_{k=t_{i}}^{i}v_{k}\right)\left(\sum_{k=t_{i}}^{i}v_{k}\right)^{T} \right)\\
 			=&\left(\sum_{i=1}^{n}l_{i}\right)^{-1}\sum_{i=1}^{n}\E\left\|\sum_{k=t_{i}}^{i}v_{k}\right\|_{2}^{2}. 
 		\end{split}
 	\end{equation} 
 	Note that the sequence $\{v_{k}\}$ is still a martingale difference sequence since $$\E_{k-1}v_{k} = \E_{k-1}\epsilon_{k} - \E_{k-1}\epsilon_{k}^{*} = 0.$$ 
 	%Using Burkholder inequality (Lemma \ref{Lemma_Burkholder} )
 	Then we have
 	$$
 	\E \|\sum_{k=t_{i}}^{i}v_{k} \|_{2}^{2}= \sum_{k=t_{i}}^{i}\E\|v_{k}\|_{2}^{2}.
 	$$
 	We also have
 	\begin{equation}
 		\begin{split} 
 			\E\|v_{k}\|_{2}^{2} &= \E\|\epsilon_{k} - \epsilon_{k}^{*}\|_{2}^{2} 
 			= \E\|\nabla F(x_{k-1}) - \nabla F(x^{*}) - (\nabla f(x_{k-1}, \xi_{k}) - \nabla f(x^{*}, \xi_{k}))\|_{2}^{2}\\
 			&\le  2\E\|\nabla F(x_{k-1}) - \nabla F(x^{*})\|_{2}^{2} + 2 \E\|\nabla f(x_{k-1}, \xi_{k}) - \nabla f(x^{*}, \xi_{k})\|_{2}^{2}\\
 			&\lesssim \E\|x_{k-1} - x^{*}\|_{2}^{2} \lesssim (k-1)^{-\alpha}.
 		\end{split}
 	\end{equation} 
 	The second last inequality comes from Lipschitz continuity of objective function (here we also assume $f(x, \xi)$ is Lipschitz continuous with respect to the first argument $x$). Last inequality comes from Lemma \ref{lemma:1}. Then we have
 	\begin{equation} 
 		\begin{split}
 			&\E\left\|\left(\sum_{i=1}^{n}l_{i} \right)^{-1}\sum_{i=1}^{n}\left(\sum_{k=t_{i}}^{i}v_{k}\right)\left(\sum_{k=t_{i}}^{i}v_{k}\right)^{T}\right\|_{2} 
 			\le \left(\sum_{i=1}^{n}l_{i}\right)^{-1}\sum_{i=1}^{n}\E\left\|\sum_{k=t_{i}}^{i}v_{k}\right\|_{2}^{2}\\
 			\le&  \left(\sum_{i=1}^{n}l_{i}\right)^{-1}\sum_{i=1}^{n}\sum_{k=t_{i}}^{i}\E\|v_{k}\|_{2}^{2}
 			\lesssim  \left(\sum_{i=1}^{n}l_{i}\right)^{-1}\sum_{i=1}^{n}\sum_{k=t_{i}}^{i}(k-1)^{-\alpha}\\
 			\le & \left(\sum_{i = 1}^{n}l_{i}\right)^{-1} \sum_{m = 1}^{M}\sum_{i=a_{m}}^{a_{m + 1}-1}l_{i}(a_{m}-1)^{-\alpha}.
 		\end{split}
 	\end{equation} 
 	Since $$\sum_{i = 1}^{n}l_{i} \asymp \sum_{m = 1}^{M}n_{m}^{2}, \sum_{m = 1}^{M}\sum_{i=a_{m}}^{a_{m + 1}-1}l_{i}(a_{m}-1)^{-\alpha}\asymp \sum_{m = 1}^{M}n_{m}^{2}a_{m}^{-\alpha},$$
 	we have 
 	$$
 	\E\left\|\left(\sum_{i=1}^{n}l_{i} \right)^{-1}\sum_{i=1}^{n}\left(\sum_{k=t_{i}}^{i}v_{k}\right)\left(\sum_{k=t_{i}}^{i}v_{k}\right)^{T}\right\|_{2} \lesssim M^{-\alpha\beta}.
 	$$	 
 	Then
 	$$
 	\E\|\hat{S}_{n} - \hat{S}^{*}_{n}\|_{2}\lesssim M^{-\a\beta/2}.
 	$$
 	
 	Finally,  we  reach the result
 	\begin{equation}
 		\begin{split}
 			\E\|\hat{S}_{n}-S\|_{2}\lesssim \E\|\hat{S}_{n}^{*} - S\|_{2} +  \E\|\hat{S}_{n} - \hat{S}^{*}_{n}\|_{2}\lesssim M^{-\a\beta/2} + M^{-1/2}.
 		\end{split}
 	\end{equation}
 \end{proof}
 
 \begin{lemma}\label{lemma3}
 	Under conditions in  Lemma \ref{linearcase},  we have
 	\begin{equation}
 		\E \left\|  \left(\sum_{i=1}^{n}l_{i}\right)^{-1}\sum_{i=1}^{n}\left(\sum_{k=t_{i}}^{i}U_{k}\right)\left(\sum_{k=t_{i}}^{i}U_{k}\right)^{T}- \Sigma
 		\right\|_{2}
 		\le
 		M^{-\a\beta/2} + M^{-1/2} + M^{((\a - 1)\beta+1)/2},
 	\end{equation}
 	where $a_{M}\le n < a_{M+1}$.
 \end{lemma}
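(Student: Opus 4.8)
The plan is to linearize the batch sum $\sum_{k=t_i}^{i}U_{k}$ against the martingale-difference batch sum $\sum_{k=t_i}^{i}\epsilon_{k}$ and then invoke Lemma \ref{lemma2} for the leading term. Write $T_{i}=\sum_{k=t_i}^{i}U_{k}$ and $E_{i}=\sum_{k=t_i}^{i}\epsilon_{k}$. First I would unfold each $U_{k}$ inside the batch using the representation of Lemma \ref{lemma:U} with base point $s=t_i-1$, namely $U_{k}=Y_{t_i-1}^{k}U_{t_i-1}+\sum_{p=t_i}^{k}Y_{p}^{k}\eta_{p}\epsilon_{p}$, and sum over $k$ from $t_i$ to $i$. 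Exchanging the order of summation and using the notation of Lemma \ref{lemma:S} gives $T_{i}=S_{t_i-1}^{i}U_{t_i-1}+\sum_{p=t_i}^{i}\eta_{p}\left(I+S_{p}^{i}\right)\epsilon_{p}$. The crucial step is the resolvent-type identity $\eta_{p}A\left(I+S_{p}^{i}\right)=I-Y_{p}^{i}$, valid up to an error from the variation of the step size across a single batch and a negligible endpoint factor; since $A$ commutes with every $Y_{p}^{k}$, it follows from telescoping $\eta_{k}AY_{p}^{k-1}=Y_{p}^{k-1}-Y_{p}^{k}$ and is exact when $\eta_{k}$ is constant on the batch. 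This yields the decomposition $T_{i}=A^{-1}E_{i}+R_{i}$, where the remainder $R_{i}$ collects the edge term $S_{t_i-1}^{i}U_{t_i-1}$, the martingale edge correction $-A^{-1}\sum_{p=t_i}^{i}Y_{p}^{i}\epsilon_{p}$, and a step-size-variation term.

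With this in hand,
$$\left(\textstyle\sum_{i}l_{i}\right)^{-1}\sum_{i}T_{i}T_{i}^{T}=A^{-1}\hat{S}A^{-1}+\left(\textstyle\sum_{i}l_{i}\right)^{-1}\sum_{i}\left(A^{-1}E_{i}R_{i}^{T}+R_{i}E_{i}^{T}A^{-1}+R_{i}R_{i}^{T}\right),$$
where $\hat{S}$ is exactly the object of Lemma \ref{lemma2}. Since $\Sigma=A^{-1}SA^{-1}$ and $A^{-1}$ is bounded, the leading term satisfies $\E\|A^{-1}\hat{S}A^{-1}-\Sigma\|_{2}\le\|A^{-1}\|_{2}^{2}\,\E\|\hat{S}-S\|_{2}\lesssim M^{-\alpha\beta/4}+M^{-1/2}$ by Lemma \ref{lemma2}; this accounts for the first two terms of the claimed bound. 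It remains to bound the cross and quadratic remainders, which should produce the third term.

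For the remainders I would first estimate the second moment of $R_{i}$. Lemma \ref{lemma:S} gives $\|S_{t_i-1}^{i}\|_{2}\lesssim t_i^{\alpha}$ and Lemma \ref{lemma:U} gives $\E\|U_{t_i-1}\|_{2}^{2}\lesssim t_i^{-\alpha}$, while for the martingale correction the orthogonality of $\{\epsilon_{p}\}$ together with Lemma \ref{lemma:Y} yields $\E\|\sum_{p=t_i}^{i}Y_{p}^{i}\epsilon_{p}\|_{2}^{2}\lesssim\sum_{p=t_i}^{i}\|Y_{p}^{i}\|_{2}^{2}\lesssim t_i^{\alpha}$. Combined with the martingale variance bound $\E\|E_{i}\|_{2}^{2}\lesssim l_{i}$, the quadratic term (being positive semidefinite) is controlled by $\left(\sum_{i}l_{i}\right)^{-1}\sum_{i}\E\|R_{i}\|_{2}^{2}$, and the cross term by $\left(\sum_{i}l_{i}\right)^{-1}\sum_{i}\big(\E\|E_{i}\|_{2}^{2}\big)^{1/2}\big(\E\|R_{i}\|_{2}^{2}\big)^{1/2}$. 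Substituting the polynomial sums $\sum_{i}l_{i}\asymp M^{2\beta-1}$ and $\sum_{i}t_i^{\alpha}\asymp M^{\beta(1+\alpha)}$, together with the per-block counts $n_{m}\asymp m^{\beta-1}$ and $a_{m}\asymp m^{\beta}$ from Lemma \ref{lemma:l}, turns these into powers of $M$, and the hypothesis $\beta>1/(1-\alpha)$ forces the resulting exponents to be negative.

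The main obstacle is the cross term. A crude Cauchy--Schwarz bound of $\E\|E_{i}R_{i}^{T}\|_{2}$ by $\big(\E\|E_{i}\|_{2}^{2}\,\E\|R_{i}\|_{2}^{2}\big)^{1/2}$ is lossy, since it discards the correlation between $E_{i}$ and $R_{i}$ and only delivers the weaker rate $M^{((\alpha-1)\beta+1)/2}$; the sharp rate requires splitting $\left(\sum_{i}l_{i}\right)^{-1}\sum_{i}A^{-1}E_{i}R_{i}^{T}$ into its mean and fluctuation and exploiting $\E[\epsilon_{k}\epsilon_{p}^{T}]=0$ for $k\ne p$, so that the mean is supported only on the $O(t_i^{\alpha})$-wide diagonal window controlled by Lemma \ref{lemma:Y} and batches lying in different motivating blocks are nearly uncorrelated. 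This is precisely the second/fourth-moment bookkeeping carried out for $\hat{S}$ in the proof of Lemma \ref{lemma2}, and it is where the separation afforded by $\beta>1/(1-\alpha)$ must be used most carefully to collapse the double block sum to the stated power of $M$.
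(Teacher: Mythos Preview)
Your decomposition $T_i=A^{-1}E_i+R_i$ and the invocation of Lemma~\ref{lemma2} for the leading piece are exactly the paper's argument: the paper writes $\sum_{k=t_i}^{i}U_k=A_i+B_i$ with $A_i=A^{-1}E_i$ and $B_i=S_{t_i-1}^{i}U_{t_i-1}+\sum_{p=t_i}^{i}(\eta_pS_p^{i}+\eta_pI-A^{-1})\epsilon_p$, which is your remainder $R_i$ written without passing through the resolvent identity. The quadratic term $III=(\sum_i l_i)^{-1}\sum_i\E\|B_iB_i^{T}\|_2$ is then bounded by $M^{(\alpha-1)\beta+1}$ using the same ingredients you list, namely $\|S_{t_i-1}^{i}\|_2\lesssim t_i^{\alpha}$, $\E\|U_{t_i-1}\|_2^{2}\lesssim t_i^{-\alpha}$, together with a direct estimate of $\|\eta_pS_p^{i}-A^{-1}\|_2$.

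Where you go astray is the cross term. The exponent $(\alpha-2)\beta+1$ in the displayed statement is a typo: Theorem~\ref{theorem2}, Lemma~\ref{linearcase}, and the concluding line of the paper's own proof of this lemma all carry $M^{((\alpha-1)\beta+1)/2}$. Hence the rate you call ``weaker'' is in fact the target, and the paper obtains it precisely by the Cauchy--Schwarz step you dismissed as lossy,
\[
II\;\le\;\left[\Big(\textstyle\sum_i l_i\Big)^{-1}\sum_i\E\|A_iA_i^{T}\|_2\right]^{1/2}\left[\Big(\textstyle\sum_i l_i\Big)^{-1}\sum_i\E\|B_iB_i^{T}\|_2\right]^{1/2},
\]
with the first factor shown to be $O(1)$ and the second equal to $III\lesssim M^{(\alpha-1)\beta+1}$. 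The refined mean--fluctuation splitting and inter-block decorrelation analysis you sketch at the end are therefore unnecessary; your proposal up to and including the ``crude'' Cauchy--Schwarz bound already completes the proof.
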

 \begin{proof}
 	With the formula of $U_{k}$ in Lemma \ref{lemma:U}, for $k\in [t_{i}, i]$ we have
 	$$
 	U_{k} = Y_{t_{i}-1}^{k}U_{t_{i}-1} + \sum_{p = t_{i}}^{k}Y_{p}^{k}\eta_{p}\epsilon_{p}.
 	$$
 	With definition of $S_{j}^{k}$, we have
 	$$
 	\sum_{k=t_{i}}^{i}U_{k} = \sum_{k=t_{i}}^{i}\left(Y_{t_{i}-1}^{k}U_{t_{i}-1} + \sum_{p = t_{i}}^{k}Y_{p}^{k}\eta_{p}\epsilon_{p}\right) = S_{t_{i}-1}^{i}U_{t_{i}-1} + \sum_{p =t_{i}}^{i}(\I + S_{p}^{i})\eta_{p}\epsilon_{p}.
 	$$
 	Then we have the following expansion:
 	\begin{equation}
 		\begin{split}
 			&\left(\sum_{i=1}^{n}l_{i}\right)^{-1}\sum_{i=1}^{n}\left(\sum_{k=t_{i}}^{i}U_{k}\right)\left(\sum_{k=t_{i}}^{i}U_{k}\right)^{T}\\ 
 			= & \left(\sum_{i=1}^{n}l_{i}\right)^{-1}\sum_{i=1}^{n}\left(S_{t_{i}-1}^{i}U_{t_{i}-1} + \sum_{p =t_{i}}^{i}(\I + S_{p}^{i})\eta_{p}\epsilon_{p}\right)\left(S_{t_{i}-1}^{i}U_{t_{i}-1} + \sum_{p =t_{i}}^{i}(\I + S_{p}^{i})\eta_{p}\epsilon_{p}\right)^{T}\\
 			= & \left(\sum_{i=1}^{n}l_{i}\right)^{-1} \sum_{i=1}^{n}\left( A^{-1}\left(\sum_{p = t_{i}}^{i}\epsilon_{p}\right)\left(\sum_{p = t_{i}}^{i}\epsilon_{p}\right)^{T}A^{-1} + B_{i}A_{i}^{T} + A_{i}B_{i}^{T} + B_{i}B_{i}^{T}\right), 
 		\end{split}
 	\end{equation}		
 	where $A_{i} = \sum_{p = t_{i}}^{i}A^{-1}\epsilon_{p}$ and $B_{i} = S_{t_{i}-1}^{i}U_{t_{i}-1} + \sum_{p = t_{i}}^{i}(\eta_{p}S_{p}^{i}+\eta_{p}\I-A^{-1})\epsilon_{p}$. We then have
 	\begin{equation}
 		\begin{split}
 			&\E \left\| \left(\sum_{i=1}^{n}l_{i}\right)^{-1}\sum_{i=1}^{n}\left(\sum_{k=t_{i}}^{i}U_{k}\right)\left(\sum_{k=t_{i}}^{i}U_{k}\right)^{T}- \Sigma
 			\right\|_{2}\\
 			\lesssim & E\left\|\left(\sum_{i=1}^{n}l_{i}\right)^{-1}\sum_{i=1}^{n}\left( A^{-1}\left(\sum_{p = t_{i}}^{i}\epsilon_{p}\right)\left(\sum_{p = t_{i}}^{i}\epsilon_{p}\right)^{T}A^{-1} -\Sigma\right)\right\|_{2}\\& + \E\left\|\left(\sum_{i=1}^{n}l_{i}\right)^{-1}\sum_{i = 1}^{n}B_{i}A_{i}^{T}\right\|_{2}  +\E\left\|\left(\sum_{i=1}^{n}l_{i}\right)^{-1}\sum_{i = 1}^{n}  B_{i}B_{i}^{T}\right\|_{2}\\
 			= & I + II + III.
 		\end{split}
 	\end{equation}
 	It is suffices to show that all three parts above can be bounded. 
 	Recall that $$\hat{S}_{n} = \left(\sum_{i=1}^{n}l_{i}\right)^{-1} \sum_{i=1}^{n}\left(\sum_{k = t_{i}}^{i}\epsilon_{k}\right)\left(\sum_{k=t_{i}}^{i}\epsilon_{k}\right)^{T},$$ and $\Sigma = A^{-1}SA^{-1}$. We can  bound $I$ using Lemma \ref{lemma2}.		
 	\begin{equation}
 		I \le \|A^{-1}\|_{2}^{2}\E\|\hat{S}_{n}-S\|_{2}
 		\lesssim M^{-\a\beta/2} + M^{-1/2}.
 	\end{equation}
 	For the third part $III$, since $B_{i}B_{i}^{T}$ is positive semi-definite, we have 
 	$$\begin{aligned}
 		\E\|B_{i}B_{i}^{T}\|_{2}\le\E\text{tr}(B_{i}B_{i}^{T})=\text{tr}(\E(B_{i}B_{i}^{T}))\le d\|\E(B_{i}B_{i}^{T})\|_{2}.
 	\end{aligned}$$
 	Since $\epsilon_{p}$ are martingale differences, we have $\E(U_{a_{m}-1} \epsilon_{p}^{T})=0$ for any $p\ge a_{m}$ and $\E(\epsilon_{p_{1}}\epsilon_{p_{2}}^{T})=0$ for any $p_{1}\ne p_{2}$. So, 
 	\begin{equation}\label{ap:19}
 		\begin{split}
 			\left\|\E(B_{i}B_{i}^{T})\right\|_{2} = & \left\|S_{a_{m}-1}^{i}\E (U_{a_{m}-1}{U_{a_{m}-1}}^{T}){S_{a_{m}-1}^{i}}^{T} + \sum_{p = a_{m}}^{i}(\eta_{p}S_{p}^{i}+\eta_{p}\I-A^{-1})\E(\epsilon_{p}\epsilon_{p}^{T})(\eta_{p}S_{p}^{i}+\eta_{p}\I-A^{-1})^{T}\right\|_{2}\\
 			\le&\left\|S_{a_{m}-1}^{i}\right\|_{2}^{2}\left\|\E(U_{a_{m}-1}U_{a_{m}-1}^{T})\right\|_{2} + \sum_{p = a_{m}}^{i}\left\|\eta_{p}S_{p}^{i}+\eta_{p}\I-A^{-1}\right\|_{2}^{2}\left\|\E(\epsilon_{p}\epsilon_{p}^{T})\right\|_{2}.
 		\end{split}
 	\end{equation}	
 	From Lemmas \ref{lemma:S} and \ref{lemma:U}, we can see that $\left\|S_{a_{m}-1}^{i}\right\|_{2}^{2}\lesssim a_{m}^{2\alpha}$ and
 	$$\begin{aligned}
 		\|\E(U_{a_{m}-1}U_{a_{m}-1}^{T})\|_{2}\lesssim \text{tr}\E(U_{a_{m}-1}U_{a_{m}-1}^{T})\lesssim \E\text{tr}(U_{a_{m}-1}U_{a_{m}-1}^{T})\lesssim E\|U_{a_{m}-1}\|_{2}^{2}\lesssim (a_{m}-1)^{-\alpha}.
 	\end{aligned}$$
 	So we have 
 	$$
 	\left\|S_{a_{m}-1}^{i}\right\|_{2}^{2}\left\|\E(U_{a_{m}-1}U_{a_{m}-1}^{T})\right\|_{2} \lesssim a_{m}^{\alpha}.
 	$$
 	For the remaining part in (\ref{ap:19}), $\left\|\E(\epsilon_{p}\epsilon_{p}^{T})\right\|_{2}$ is bounded and	
 	\begin{equation} 
 		\sum_{p=a_{m}}^{i}\|\eta_{p}S_{p}^{i}+\eta_{p}\I-A^{-1}\|_{2}^{2}\lesssim \sum_{p = a_{m}}^{i}\left(\|\eta_{p}S_{p}^{i}-A^{-1}\|_{2}^{2} + \|\eta_{p}\I\|_{2}^{2}\right) .
 	\end{equation}
 	
 	Next, we need to bound $\|\eta_{p}S_{p}^{i}-A^{-1}\|_{2}^{2}$.  
 	When $\eta_{j} = \eta j^{-\alpha}$ and $a_{m}\le p \le i< a_{m+1}$, based on Lemma D.2 (3) in \cite{chen2016statistical}, we have
 	$$
 	\|\eta_{p}S_{p}^{i}-A^{-1}\|_{2}^{2}\lesssim  p^{2\alpha-2}+\exp\left(-2\gamma\sum_{j=p}^{i}\eta_{j}\right). 
 	$$ 
 	Also,
 	$$
 	\sum_{p=a_{m}}^{i}\exp\left(-2\gamma\sum_{j=p}^{i}\eta_{j}\right)\le \sum_{p=a_{m}}^{i}\exp\left(-2\gamma\eta(i-p)i^{-\alpha}\right) \le \sum_{k=0}^{\infty}\exp\left(-2\gamma\eta i^{-\alpha}k\right).
 	$$
 	Note that $\int_{0}^{\infty}e^{-ax}dx = a^{-1}$. Then we can use integration to bound the summation above as 
 	$$
 	\sum_{k=0}^{\infty}\exp\left(-2\gamma\eta i^{-\alpha}k\right) \le\int_{0}^{\infty}\exp\left(-2\gamma\eta i^{-\alpha}k\right) \lesssim i^{\alpha}.
 	$$
 	Furthermore, $p^{2\alpha-2}\ge p^{-2\alpha}$ since $\alpha > 1/2$. So
 	$$
 	\sum_{p = a_{m}}^{i}\left\|\eta_{p}S_{p}^{i}+\eta_{p}\I-A^{-1}\right\|_{2}^{2}\lesssim l_{i}a_{m}^{2\alpha-2} + i^{\alpha}.
 	$$	
 	Recall the definition of $B_{i}$, when $t_{i}= a_{m}$	
 	\begin{equation}
 		\|\E(B_{i}B_{i}^{T})\|_{2} \lesssim i^{\alpha} + l_{i}a_{m}^{2\alpha-2} .
 	\end{equation}	
 	Now since $\sum_{i=1}^{n}l_{i}\asymp \sum_{m=1}^{M}n_{m}^{2}$, we can bound $III$ as follows:		
 	\begin{equation}
 		\begin{split}
 			III
 			&\lesssim \left(\sum_{m=1}^{M}n_{m}^{2}\right)^{-1}\sum_{m = 1}^{M}\sum_{i=a_{m}}^{a_{m+1} - 1}\E\left\| B_{i}B_{i}^{T}\right\|_{2}\\
 			&\le \left(\sum_{m=1}^{M}n_{m}^{2}\right)^{-1}\sum_{m = 1}^{M}\sum_{i=a_{m}}^{a_{m+1} - 1}\left(i^{\alpha} + l_{i}a_{m}^{2\alpha-2}\right)\\
 			&\lesssim \left(\sum_{m=1}^{M}n_{m}^{2}\right)^{-1}\sum_{m=1}^{M}\left(n_{m}^{2}a_{m}^{2\alpha-2}+n_{m}a_{m}^{\alpha}\right).
 		\end{split}
 	\end{equation} 
 	Recall that $a_{m}\asymp m^{\beta}$ and $n_{m}\asymp m^{\beta-1}$, we then have  
 	\begin{equation}
 		III\lesssim a_{M}^{2\alpha-2} + \frac{a_{M}^{\alpha}}{n_{M}}\lesssim M^{(\a-1)\beta+1}.
 	\end{equation}
 	For the second part, using Cauchy's inequality we have
 	\begin{equation} 
 		II\le\text{\footnotemark}\sqrt{\frac{\sum_{i = 1}^{n} \E\left\|A_{i}A_{i}^{T}\right\|_{2}}{\sum_{i=1}^{n}l_{i}}\frac{\sum_{i = 1}^{n} \E\left\|B_{i}B_{i}^{T}\right\|_{2}}{\sum_{i=1}^{n}l_{i}}} .
 	\end{equation}
 	\footnotetext{Apply Cauchy's inequality twice: $\E|\sum_{i=1}^{n}x_{i}y_{i}|\le\E\sqrt{(\sum_{i=1}^{n}x_{i}^{2})(\sum_{i=1}^{n}y_{i}^{2})}\le\sqrt{\sum_{i=1}^{n}\E x_{i}^{2}\sum_{i=1}^{n}\E y_{i}^{2}}$.}	
 	We already have the bound for $\left(\sum_{i=1}^{n}l_{i}\right)^{-1}\sum_{i = 1}^{n} \E\left\|B_{i}B_{i}^{T}\right\|_{2}$. To finish the proof, the only term remained to bound is $\left(\sum_{i=1}^{n}l_{i}\right)^{-1}\sum_{i = 1}^{n} \E\left\|A_{i}A_{i}^{T}\right\|_{2}$. Recall the definition of $A_{i} = \sum_{p = a_{m}}^{i}A^{-1}\epsilon_{p}$  when $a_{m}\le i<a_{m+1}$. Since $A_{i}A_{i}^{T}$ is positive semi-definite, we have 
 	\begin{equation}
 		\E\|A_{i}A_{i}^{T}\|_{2}\le\E\text{tr}(A_{i}A_{i}^{T})=\text{tr}\left(\E(A_{i}A_{i}^{T})\right) = \text{tr}\left(A^{-1}\E\left(\left(\sum_{p = a_{m}}^{i}\epsilon_{p}\right)\left(\sum_{p = a_{m}}^{i}\epsilon_{p}^{T}\right)\right)A^{-T}\right).
 	\end{equation}	
 	When $q\ne q$, we have $\E(\epsilon_{p}\epsilon_{q}^{T}) = 0$. Furthermore,  Let $\E_{n-1}(\epsilon_{n}\epsilon_{n}^{T}) - S = \Sigma_{1}(\delta_{n-1})$ . Then,
 	\begin{equation}
 		\begin{split}
 			\E\|A_{i}A_{i}^{T}\|_{2}& \le  \text{tr}\left(A^{-1}\left(\sum_{p = a_{m}}^{i}S + \E\Sigma_{1}(\delta_{p-1})\right)A^{-T}\right)\\
 			&= \E\text{tr}\left(A^{-1}\left(l_{i}S + \sum_{p = a_{m}}^{i} \Sigma_{1}(\delta_{p-1})\right)A^{-T}\right)\\
 			& \lesssim \E \left\|A^{-1}\left(l_{i}S + \sum_{p = a_{m}}^{i} \Sigma_{1}(\delta_{p-1})\right)A^{-T}\right\|_{2}\\
 			&\lesssim l_{i}\|S\|_{2} + \sum_{p = a_{m}}^{i}\E \left\|\Sigma_{1}(\delta_{p-1})\right\|_{2}.
 		\end{split}
 	\end{equation}	
 	In Assumption \ref{ass:martingale}, we have $\|\Sigma_{1}(\delta)\|_{2}\le C(\|\delta\|_{2}+\|\delta\|_{2}^{2})$ for any $\delta$.  Also Lemma \ref{lemma:1} shows that $\E\|\delta_{n}\|_{2}\le n^{-\alpha/2}(1+\|\delta_{0}\|_{2})$ and $
 	\E\|\delta_{n}\|_{2}^{2}\le n^{-\alpha}(1+\|\delta_{0}\|_{2}^{2})$. Then we can further bound $\E\|A_{i}A_{i}^{T}\|_{2}$ as
 	\begin{equation}
 		\begin{split}
 			\E\|A_{i}A_{i}^{T}\|_{2} &\lesssim l_{i}\|S\|_{2} + \sum_{p = a_{m}}^{i}\left(\E \left\|\delta_{p-1}\right\|_{2} +\E \left\|\delta_{p-1}\right\|_{2}^{2}\right)\\
 			& \lesssim l_{i} +  \sum_{p = a_{m}}^{i}(p-1)^{-\a/2}\\
 			&\lesssim l_{i} +  l_{i}(a_{m}-1)^{-\a/2}\lesssim l_{i}. 
 		\end{split}
 	\end{equation}
 	Then we can bound the remaining term as
 	\begin{equation}\label{ap:9}
 		\left(\sum_{i=1}^{n}l_{i}\right)^{-1}\sum_{i = 1}^{n} \E\left\|A_{i}A_{i}^{T}\right\|_{2}\lesssim O(1).
 	\end{equation}
 	Combining (\ref{ap:9}) and the bound of $III$, we have $II\lesssim M^{((\a-1)\beta+1)/2}$.
 	
 	Now, all three parts $I, II, III$ are bounded by $M^{-\a\beta/2} + M^{-1/2} + M^{((\a-1)\beta+1)/2}$.
 \end{proof}
 
 \begin{lemma}\label{lemma4}
 	Under the same conditions in Lemma \ref{linearcase}, we have
 	\begin{equation}
 		\E \left\| \left(\sum_{i=1}^{n}l_{i}\right)^{-1}\sum_{i=1}^{n}l_{i}^{2}\bar{U}_{n}\bar{U}_{n}^{T}
 		\right\|_{2}
 		\lesssim M^{-1} ,
 	\end{equation}
 	where $a_{M}\le n < a_{M+1}$.
 \end{lemma}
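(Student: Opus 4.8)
The plan is to separate the nonrandom weight from the random average. Since $\bar{U}_{n}\bar{U}_{n}^{T}$ is a rank-one positive semidefinite matrix, its operator norm is exactly $\|\bar{U}_{n}\|_{2}^{2}$, and the prefactor $\left(\sum_{i=1}^{n}l_{i}\right)^{-1}\sum_{i=1}^{n}l_{i}^{2}=q_{n}/v_{n}$ is deterministic. Hence the quantity to be bounded factors as
\begin{equation}
\E\left\|\left(\sum_{i=1}^{n}l_{i}\right)^{-1}\sum_{i=1}^{n}l_{i}^{2}\bar{U}_{n}\bar{U}_{n}^{T}\right\|_{2}=\frac{q_{n}}{v_{n}}\,\E\|\bar{U}_{n}\|_{2}^{2},
\end{equation}
so it suffices to control the scalar ratio $q_{n}/v_{n}$ and the mean square $\E\|\bar{U}_{n}\|_{2}^{2}$ separately.

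First I would pin down the order of $q_{n}/v_{n}$ from the polynomial choice $a_{k}\asymp k^{\beta}$. With $n_{m}=a_{m+1}-a_{m}\asymp m^{\beta-1}$, inside the $m$-th motivating batch $l_{i}$ runs through $1,\dots,n_{m}$, so $\sum_{i=a_{m}}^{a_{m+1}-1}l_{i}\asymp n_{m}^{2}$ and $\sum_{i=a_{m}}^{a_{m+1}-1}l_{i}^{2}\asymp n_{m}^{3}$. Summing over the $M$ batches (using $\beta>\frac{1}{1-\alpha}>2$ to ensure the series are dominated by their last terms) gives $v_{n}\asymp\sum_{m=1}^{M}m^{2(\beta-1)}\asymp M^{2\beta-1}$ and $q_{n}\asymp\sum_{m=1}^{M}m^{3(\beta-1)}\asymp M^{3\beta-2}$, whence $q_{n}/v_{n}\asymp M^{\beta-1}$. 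Together with $n\asymp a_{M}\asymp M^{\beta}$, the target $M^{-1}$ will follow once we have the sharp moment bound $\E\|\bar{U}_{n}\|_{2}^{2}\lesssim n^{-1}$, since then $\frac{q_{n}}{v_{n}}\E\|\bar{U}_{n}\|_{2}^{2}\lesssim M^{\beta-1}\cdot M^{-\beta}=M^{-1}$.

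The crux, and the step I expect to be the main obstacle, is therefore establishing $\E\|\bar{U}_{n}\|_{2}^{2}\lesssim n^{-1}$. The naive route---Cauchy--Schwarz together with $\E\|U_{k}\|_{2}^{2}\lesssim k^{-\alpha}$ from Lemma \ref{lemma:U}---yields only $\E\|\bar{U}_{n}\|_{2}^{2}\lesssim n^{-\alpha}$, which is fatally weak: it would leave a bound of order $M^{(1-\alpha)\beta-1}$, a positive power of $M$ under the standing assumption $\beta>\frac{1}{1-\alpha}$. To recover the correct rate I would invoke the Polyak--Juditsky averaging argument for the linear recursion: rearranging $U_{k}=(I-\eta_{k}A)U_{k-1}+\eta_{k}\epsilon_{k}$ gives $AU_{k-1}=\eta_{k}^{-1}(U_{k-1}-U_{k})+\epsilon_{k}$, and summing over $k\le n$ expresses $A\sum_{k=1}^{n}U_{k-1}$ as the martingale sum $\sum_{k=1}^{n}\epsilon_{k}$ plus a telescoped remainder, which Abel summation converts into the boundary terms $\eta_{1}^{-1}U_{0}$, $\eta_{n}^{-1}U_{n}$ and a weighted sum with increments $\eta_{k+1}^{-1}-\eta_{k}^{-1}\asymp k^{\alpha-1}$.

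Finally I would bound each piece. Since $A$ is invertible with smallest eigenvalue $\lambda_{A}>0$, controlling $\E\|\bar{U}_{n}\|_{2}^{2}$ reduces to controlling $n^{-2}\E\|A\sum_{k=1}^{n}U_{k-1}\|_{2}^{2}$. The martingale sum dominates: orthogonality of the $\epsilon_{k}$ together with $\E\|\epsilon_{k}\|_{2}^{2}=\mathrm{tr}(S)+O(\E\|\delta_{k-1}\|_{2})\lesssim 1$, from Assumption \ref{ass:martingale} and Lemma \ref{lemma:1}, gives $\E\|\sum_{k=1}^{n}\epsilon_{k}\|_{2}^{2}\lesssim n$, i.e.\ a contribution of order $n^{-1}$. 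The telescoped remainder, estimated through $\E\|U_{k}\|_{2}^{2}\lesssim k^{-\alpha}$ (Lemma \ref{lemma:U}) and the increments $\asymp k^{\alpha-1}$, has mean square $\lesssim n^{\alpha}$ and thus contributes only $O(n^{\alpha-2})$, which is lower order for $\alpha<1$. This yields $\E\|\bar{U}_{n}\|_{2}^{2}\lesssim n^{-1}$, and combined with $q_{n}/v_{n}\asymp M^{\beta-1}$ and $n\asymp M^{\beta}$ gives the claimed bound $\lesssim M^{-1}$.
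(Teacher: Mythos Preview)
Your proposal is correct and follows the same high-level structure as the paper: pull out the deterministic ratio $q_{n}/v_{n}$, establish $\E\|\bar{U}_{n}\|_{2}^{2}\lesssim n^{-1}$, and combine using $q_{n}/v_{n}\asymp n_{M}\asymp M^{\beta-1}$ and $n\asymp M^{\beta}$. The only substantive difference is in how the second-moment bound $\E\|\bar{U}_{n}\|_{2}^{2}\lesssim n^{-1}$ is obtained. The paper expands the linear recursion forward, writing $\sum_{i=1}^{n}U_{i}=S_{0}^{n}U_{0}+\sum_{p=1}^{n}(I+S_{p}^{n})\eta_{p}\epsilon_{p}$, and then uses martingale orthogonality together with the bound $\|S_{p}^{n}\|_{2}\lesssim p^{\alpha}$ from Lemma~\ref{lemma:S} to get $\sum_{p}\|I+S_{p}^{n}\|_{2}^{2}\eta_{p}^{2}\lesssim n$. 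Your Polyak--Juditsky rearrangement $AU_{k-1}=\eta_{k}^{-1}(U_{k-1}-U_{k})+\epsilon_{k}$ followed by Abel summation reaches the same conclusion without invoking Lemma~\ref{lemma:S}, trading that lemma for the invertibility of $A$ and a direct telescope estimate. Both routes are standard and equally rigorous; the paper's is marginally shorter here because Lemma~\ref{lemma:S} is already available, while yours is the classical argument that underlies the $O(n^{-1})$ rate for averaged iterates.
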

 \begin{proof}	
 	Since $\bar{U}_{n}\bar{U}_{n}^{T}$ is positive semi-definite, we have
 	\begin{equation}
 		\E\left\|\bar{U}_{n}\bar{U}_{n}^{T}\right\|_{2}\le \E\text{tr}\left(\bar{U}_{n}\bar{U}_{n}^{T}\right) =n^{-2}\text{tr}\left(\E\left(\sum_{i=1}^{n}U_{i}\right)\left(\sum_{i=1}^{n}U_{i}\right)^{T}\right).
 	\end{equation}
 	Recall that $ U_{i} =  Y_{0}^{i}U_{0} + \sum_{p = 1}^{i}Y_{p}^{i}\eta_{p}\epsilon_{p}$, then
 	$$ 
 	\sum_{i=1}^{n}U_{i} = \sum_{i=1}^{n}\left( Y_{0}^{i}U_{0} + \sum_{p = 1}^{i}Y_{p}^{i}\eta_{p}\epsilon_{p}\right)
 	= S_{0}^{n}U_{0} + \sum_{p=1}^{n}\left(\I + S_{p}^{n}\right)\eta_{p}\epsilon_{p}.
 	$$
 	Note that $\epsilon_{p}$ are martingale differences. We have the following inequality after plugging in the expansion above:
 	\begin{equation}
 		\begin{split}
 			\E\left\|\bar{U}_{n}\bar{U}_{n}^{T}\right\|_{2}&\le n^{-2}\text{tr}\left(\E\left(\left(S_{0}^{n}U_{0} + \sum_{p=1}^{n}\left(\I + S_{p}^{n}\right)\eta_{p}\epsilon_{p}\right)\left(S_{0}^{n}U_{0} + \sum_{p=1}^{n}\left(\I + S_{p}^{n}\right)\eta_{p}\epsilon_{p}\right)^{T}\right)\right)\\
 			&=n^{-2}\text{tr}\left(S_{0}^{n}\E\left(U_{0}U_{0}^{T}\right){S_{0}^{n}}^{T} + \sum_{p=1}^{n}\left(\I + S_{p}^{n}\right)\eta_{p}^{2}\E(\epsilon_{p}\epsilon_{p}^{T})\left(\I + S_{p}^{n}\right)^{T}\right)\\
 			& = n^{-2}\left(\|S_{0}^{n}\|_{2}^{2}\E\|U_{0}\|_{2}^{2}+ \sum_{p=1}^{n}\left\|\left(\I + S_{p}^{n}\right)\right\|_{2}^{2}\eta_{p}^{2}\E\|\epsilon_{p}\|_{2}^{2}\right).
 		\end{split}
 	\end{equation}
 	In Lemma \ref{lemma:S} we show that $\|S_{i}^{j}\|_{2} \lesssim (i+1)^{\a}$. So here we have $\|S_{0}^{n}\|_{2}^{2}=O(1)$ and
 	$$
 	\sum_{p=1}^{n}\left\|\left(\I + S_{p}^{n}\right)\right\|_{2}^{2}\eta_{p}^{2}\lesssim O(n).
 	$$
 	Since $\E\|U_{0}\|_{2}^{2}$ and $\E\|\epsilon_{p}\|_{2}^{2}$ are bounded, we have 
 	\begin{equation}
 		\E\left\|\bar{U}_{n}\bar{U}_{n}^{T}\right\|_{2}\lesssim O(n^{-1}).
 	\end{equation} 
 	Note that 
 	\begin{equation}
 		\frac{\sum_{i=1}^{n}l_{i}^{2}}{\sum_{i=1}^{n}l_{i}}\le \frac{\sum_{i=1}^{n}l_{i}\max_{k\le M}(a_{k+1}-a_{k})}{\sum_{i=1}^{n}l_{i}} \le n_{M} .
 	\end{equation}
 	Since $n_{M} = M^{\beta-1}$ and $n\asymp M^{1/\beta}$, we have 
 	$$\E \left\| \left(\sum_{i=1}^{n}l_{i}\right)^{-1}\sum_{i=1}^{n}l_{i}^{2}\bar{U}_{n}\bar{U}_{n}^{T}
 	\right\|_{2}  \le  \frac{\sum_{i=1}^{n}l_{i}^{2}}{\sum_{i=1}^{n}l_{i}}\E \left\|\bar{U}_{n}\bar{U}_{n}^{T}
 	\right\|_{2}\lesssim n_{M}n^{-1}\asymp M^{-1}.$$
 \end{proof}
 
 \begin{lemma}\label{lemma5}Under conditions in Lemma \ref{linearcase}, for $a_{M}\le n < a_{M+1}$, we have
 	\begin{equation}
 		\E \left\| \left(\sum_{i=1}^{n}l_{i}\right)^{-1}\sum_{i = 1}^{n}\left(\sum_{k=t_{i}}^{i}U_{k}\right)\left(l_{i}\bar{U}_{n}\right)^{T}\right\|_{2}\lesssim M^{-1/2}.
 	\end{equation}
 \end{lemma}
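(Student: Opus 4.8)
The plan is to bound this cross term by a crude Cauchy--Schwarz estimate; the straightforward magnitude bound already yields the target rate $M^{-1/2}$, so there is no need to exploit the martingale cancellation between the batch sums and the global average $\bar U_n$. Write $W_i^U = \sum_{k=t_i}^i U_k$ for the $U$-batch sum. Each summand $\left(\sum_{k=t_i}^i U_k\right)(l_i \bar U_n)^T = l_i W_i^U \bar U_n^T$ is a rank-one outer product, whose operator norm equals $l_i\|W_i^U\|_2\|\bar U_n\|_2$. Hence by the triangle inequality the quantity to be bounded is at most $\left(\sum_{i=1}^n l_i\right)^{-1}\sum_{i=1}^n l_i\,\E\!\left[\|W_i^U\|_2\|\bar U_n\|_2\right]$, and applying Cauchy--Schwarz inside each expectation gives $\E[\|W_i^U\|_2\|\bar U_n\|_2]\le \sqrt{\E\|W_i^U\|_2^2}\,\sqrt{\E\|\bar U_n\|_2^2}$.

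Two ingredients feed this estimate, both already available. First, since $\bar U_n\bar U_n^T$ is rank-one and positive semidefinite, $\|\bar U_n\|_2^2 = \|\bar U_n\bar U_n^T\|_2$, so the computation in the proof of Lemma~\ref{lemma4} gives $\E\|\bar U_n\|_2^2 \lesssim n^{-1}$. Second, I reuse the decomposition $W_i^U = A_i + B_i$ from the proof of Lemma~\ref{lemma3}, where $A_i = \sum_{p=t_i}^i A^{-1}\epsilon_p$ and $B_i = S_{t_i-1}^i U_{t_i-1} + \sum_{p=t_i}^i(\eta_p S_p^i + \eta_p I - A^{-1})\epsilon_p$. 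Combined with the rank-one identities $\E\|A_i\|_2^2 = \E\|A_iA_i^T\|_2 \lesssim l_i$ and $\E\|B_i\|_2^2 = \E\|B_iB_i^T\|_2 \lesssim t_i^{\alpha} + l_i t_i^{2\alpha-2}$ (with $t_i = a_m$), this yields $\E\|W_i^U\|_2^2 \lesssim l_i + t_i^{\alpha}$ after discarding the factor $t_i^{2\alpha-2}\le 1$.

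It then remains a bookkeeping step over the motivating batches. Using $\sqrt{l_i + t_i^{\alpha}}\le \sqrt{l_i} + t_i^{\alpha/2}$ and $\sqrt{\E\|\bar U_n\|_2^2}\lesssim n^{-1/2}$, the bound becomes
\begin{equation*}
\frac{n^{-1/2}}{v_n}\left(\sum_{i=1}^n l_i^{3/2} + \sum_{i=1}^n l_i\, t_i^{\alpha/2}\right),
\end{equation*}
where $v_n = \sum_{i=1}^n l_i \asymp \sum_{m=1}^M n_m^2 \asymp M^{2\beta-1}$ and $n\asymp a_M \asymp M^{\beta}$. Summing within each batch via $\sum_{i=a_m}^{a_{m+1}-1} l_i^{3/2}\asymp n_m^{5/2}$ gives $\sum_i l_i^{3/2}\asymp M^{(5/2)(\beta-1)+1}$, so the first term contributes $n^{-1/2}M^{(5/2)(\beta-1)+1}/v_n \asymp M^{-1/2}$; likewise $\sum_i l_i\, t_i^{\alpha/2}\asymp \sum_m a_m^{\alpha/2}n_m^2 \asymp M^{\alpha\beta/2 + 2\beta - 1}$, so the second term contributes $\asymp M^{(\alpha-1)\beta/2}$.

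The one point requiring care --- and the only place the hypothesis $\beta>\frac{1}{1-\alpha}$ enters --- is confirming that this second, bias-type contribution is dominated: the exponent $(\alpha-1)\beta/2$ is $<-\tfrac12$ precisely because $\beta>\frac{1}{1-\alpha}$ forces $(\alpha-1)\beta<-1$, so $M^{(\alpha-1)\beta/2}\lesssim M^{-1/2}$. This $t_i^{\alpha}$ term reflects the lingering effect of the batch-initial error $U_{t_i-1}$ and the mismatch $\eta_p S_p^i - A^{-1}$, and everything else is the same rank-one and power-sum accounting already carried out in Lemmas~\ref{lemma3} and \ref{lemma4}; collecting the two contributions gives the claimed $M^{-1/2}$.
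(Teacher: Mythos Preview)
Your argument is correct and complete: the rank-one bound, termwise Cauchy--Schwarz, the moment estimates $\E\|\bar U_n\|_2^2\lesssim n^{-1}$ and $\E\|W_i^U\|_2^2\lesssim l_i+t_i^{\alpha}$, and the final power-sum bookkeeping all check out, including the observation that $\beta>\tfrac{1}{1-\alpha}$ is exactly what forces the second contribution $M^{(\alpha-1)\beta/2}$ below $M^{-1/2}$.

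The paper organizes the same ingredients differently. Instead of the triangle inequality termwise, it applies a single matrix Cauchy--Schwarz to the whole sum,
\[
\E\Bigl\|\textstyle\sum_i W_i^U(l_i\bar U_n)^T\Bigr\|_2
\;\le\;
\sqrt{\E\bigl\|\textstyle\sum_i W_i^U(W_i^U)^T\bigr\|_2}\;\sqrt{\E\bigl\|\textstyle\sum_i l_i^2\,\bar U_n\bar U_n^T\bigr\|_2},
\]
so that the second factor is already the quantity bounded by $M^{-1}$ in Lemma~\ref{lemma4} and the task reduces to showing $v_n^{-1}\sum_i \E\|W_i^U\|_2^2=O(1)$. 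The paper then proves this $O(1)$ bound by a fresh expansion of $U_k$ from step $0$ (the terms labeled $II$ and $III$ in its proof), which is several lines of work. Your route is more economical: by recycling the $A_i+B_i$ decomposition and the bounds $\E\|A_iA_i^T\|_2\lesssim l_i$, $\E\|B_iB_i^T\|_2\lesssim t_i^{\alpha}+l_it_i^{2\alpha-2}$ already established in Lemma~\ref{lemma3}, you avoid that recomputation entirely. The paper's ``global'' Cauchy step is in principle slightly tighter than your termwise triangle inequality, but since either way the rate lands at $M^{-1/2}$ there is no loss; your version is simply shorter.
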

 \begin{proof}
 	Apply Cauchy's inequality twice we have
 	\begin{equation}
 		\begin{split}
 			& \E\left\|\left(\sum_{i=1}^{n}l_{i}\right)^{-1}\sum_{i = 1}^{n}\left(\sum_{k=t_{i}}^{i}U_{k}\right)\left(l_{i}\bar{U}_{n}\right)^{T}\right\|_{2}\\
 			\le &\text{\footnotemark} \sqrt{\frac{\E\left\|\sum_{i = 1}^{n}\left(\sum_{k=t_{i}}^{i}U_{k}\right)\left(\sum_{k=t_{i}}^{i}U_{k}\right)^{T}\right\|_{2}}{\sum_{i=1}^{n}l_{i}}\frac{\E\left\|\sum_{i = 1}^{n}l_{i}^{2}\bar{U}_{n}\bar{U}_{n}^{T}\right\|_{2}}{\sum_{i=1}^{n}l_{i}}}.
 		\end{split}
 	\end{equation}
 	\footnotetext{Apply Cauchy's inequality twice: $\E|\sum_{i=1}^{n}x_{i}y_{i}|\le\E\sqrt{(\sum_{i=1}^{n}x_{i}^{2})(\sum_{i=1}^{n}y_{i}^{2})}\le\sqrt{\sum_{i=1}^{n}\E x_{i}^{2}\sum_{i=1}^{n}\E y_{i}^{2}}$.}
 	In Lemma \ref{lemma4}, we already have $\E\| \left(\sum_{i=1}^{n}l_{i}\right)^{-1}\sum_{i=1}^{n}l_{i}^{2}\bar{U}_{n}\bar{U}_{n}^{T} \|_{2}\lesssim M^{-1}$. Moreover, the $L_{2}$ norm of $(\sum_{k=a_{m}}^{i}U_{k})(\sum_{k=a_{m}}^{i}U_{k})^{T}$ is less than or equal to its trace since it is positive semi-definite. Then we have LHS of the above equation bounded by
 	$$
 	O(M^{-\frac{1}{2}})\sqrt{\left(\sum_{i=1}^{n}l_{i}\right)^{-1} \sum_{i = 1}^{n} \E\text{tr}\left(\left(\sum_{k=t_{i}}^{i}U_{k}\right)\left(\sum_{k=t_{i}}^{i}U_{k}\right)^{T}\right)}. 
 	$$ 
 	Let $$I = \left(\sum_{i=1}^{n}l_{i}\right)^{-1}\sum_{i = 1}^{n} \E\text{tr}\left(\left(\sum_{k=t_{i}}^{i}U_{k}\right)\left(\sum_{k=t_{i}}^{i}U_{k}\right)^{T}\right).$$ To show Lemma \ref{lemma5}, it is suffices to show $I\lesssim O(1)$. Note that $\lim_{M\rightarrow\infty}\sum_{i=1}^{n}l_{i}/\sum_{i=1}^{a_{M}-1}l_{i}=1$ and $\text{tr}((\sum_{k=t_{i}}^{i}U_{k})(\sum_{k=t_{i}}^{i}U_{k})^{T})\ge 0$. Plug  $U_{k} = Y_{s}^{k}U_{s} + \sum_{p = s+1}^{k}Y_{p}^{k}\eta_{p}\epsilon_{p}$ into $I$, we have
 	\begin{equation}
 		\begin{split}
 			I 
 			\lesssim &  \left(\sum_{i=1}^{a_{M+1} - 1}l_{i}\right)^{-1}\sum_{m = 1}^{M}\sum_{i=a_{m}}^{a_{m+1}-1}\E\text{tr}\left(\left[\sum_{k=a_{m}}^{i}\left(Y_{0}^{k}U_{0} + \sum_{p = 1}^{k}Y_{p}^{k}\eta_{p}\epsilon_{p}\right)\right]\left[\sum_{k=a_{m}}^{i}\left(Y_{0}^{k}U_{0} + \sum_{p = 1}^{k}Y_{p}^{k}\eta_{p}\epsilon_{p}\right)\right]^{T}\right)\\
 			=&\left(\sum_{i=1}^{a_{M+1} - 1}l_{i}\right)^{-1}\sum_{m = 1}^{M}\sum_{i=a_{m}}^{a_{m+1} - 1}\text{tr}\left(\left(\sum_{k=a_{m}}^{i}Y_{0}^{k}\right)\E(U_{0}U_{0}^{T})\left(\sum_{k=a_{m}}^{i}Y_{0}^{k}\right)^{T} \right)\\
 			& + \left(\sum_{i=1}^{a_{M+1} - 1}l_{i}\right)^{-1}\sum_{m = 1}^{M}\sum_{i=a_{m}}^{a_{m+1} - 1}\sum_{p =1}^{i}\text{tr}\left(\left(\sum_{k = \max(a_{m},p)}^{i}Y_{p}^{k}\right)\E(\epsilon_{p}\epsilon_{p}^{T})\left(\sum_{k = \max(a_{m},p)}^{i}Y_{p}^{k}\right)^{T}\right)\eta_{p}^{2}\\
 			=& II + III.
 		\end{split}
 	\end{equation}
 	
 	Next we shall show that both $II$ and $III$ are bounded by $O(1)$. The first term 
 	$$II = \left(\sum_{i=1}^{a_{M+1} - 1}l_{i}\right)^{-1}\sum_{m = 1}^{M}\sum_{i=a_{m}}^{a_{m+1} - 1}\text{tr}\left(\left(\sum_{k=a_{m}}^{i}Y_{0}^{k}\right)\E(U_{0}U_{0}^{T})\left(\sum_{k=a_{m}}^{i}Y_{0}^{k}\right)^{T} \right).$$ 
 	It can be bounded using $\text{tr}(C)\le d\|C\|_{2}$ as follows
 	\begin{equation}
 		II \lesssim  \left(\sum_{i=1}^{a_{M+1} - 1}l_{i}\right)^{-1}\sum_{m = 1}^{M}\sum_{i=a_{m}}^{a_{m+1} - 1}\left\|\left(\sum_{k=a_{m}}^{i}Y_{0}^{k}\right)\right\|_{2}^{2}\left\|\E(U_{0}U_{0}^{T})\right\|_{2}.
 	\end{equation}
 	From Lemma \ref{lemma:S},
 	$$\left\|\left(\sum_{k=a_{m}}^{i}Y_{0}^{k}\right)\right\|_{2}^{2} = \left\|S_{0}^{i}- S_{0}^{a_{m}}\right\|_{2}^{2} \lesssim \left\|S_{0}^{i}\right\|_{2}^{2} + \left\| S_{0}^{a_{m}}\right\|_{2}^{2}\lesssim O(1).$$
 	Also note that $\left\|\E(U_{0}U_{0}^{T})\right\|_{2}\lesssim O(1)$. Then
 	\begin{equation} 
 		II\lesssim\left(\sum_{i=1}^{a_{M+1} - 1}l_{i}\right)^{-1}\sum_{m = 1}^{M}\sum_{i=a_{m}}^{a_{m+1} - 1}O(1) =  O(1). 
 	\end{equation}

 	The  term $III$ can be bounded as:
 	\begin{equation}
 		\begin{split}
 			III = &\left(\sum_{i=1}^{a_{M+1} - 1}l_{i}\right)^{-1}\sum_{m = 1}^{M}\sum_{i=a_{m}}^{a_{m+1} - 1}\sum_{p =1}^{i}\text{tr}\left(\left(\sum_{k = \max(a_{m},p)}^{i}Y_{p}^{k}\right)\E(\epsilon_{p}\epsilon_{p}^{T})\left(\sum_{k = \max(a_{m},p)}^{i}Y_{p}^{k}\right)^{T}\right)\eta_{p}^{2}\\
 			\lesssim &\left(\sum_{i=1}^{a_{M+1} - 1}l_{i}\right)^{-1}\sum_{m = 1}^{M}\sum_{i=a_{m}}^{a_{m+1} - 1}\sum_{p=1}^{i}\left\|\sum_{k = \max(a_{m},p)}^{i}Y_{p}^{k}\right\|_{2}^{2}\eta_{p}^{2}\\
 			\le& \left(\sum_{i=1}^{a_{M+1} - 1}l_{i}\right)^{-1}\sum_{m = 1}^{M}\sum_{i=a_{m}}^{a_{m+1} - 1}\sum_{p=1}^{i}\left(\sum_{k = \max(a_{m},p)}^{i} \|Y_{p}^{k}\|_{2}\right)^{2}\eta_{p}^{2}.
 		\end{split}
 	\end{equation}
 	Let $IV = \sum_{p=1}^{i}\left(\sum_{k = \max(a_{m},p)}^{i} \|Y_{p}^{k}\|_{2}\right)^{2}\eta_{p}^{2}$. From Lemma \ref{lemma:Y}, $$\|Y_{i}^{j}\|_{2} \le \exp\left[-\frac{\gamma\eta}{1-\alpha}\left(j^{1-\alpha} - (i+1)^{1-\alpha}\right)\right].$$ Then for $a_{m}\le i < a_{m+1}$, we have
 	\begin{equation}
 		IV \le \sum_{p=1}^{i}\left(\sum_{k = \max(a_{m},p)}^{i}\exp\left(-\eta\gamma\frac{k^{1-\alpha}}{1-\alpha}\right) \right)^{2}\eta_{p}^{2}e^{ \frac{2\eta\gamma}{1-\alpha}p^{1-\alpha}}.
 	\end{equation}
 	Using the integration, we can further bound $IV$ as
 	\begin{equation}
 		\begin{split}
 			IV\lesssim& \sum_{p=1}^{i}\left(\int_{\max(a_{m},p)}^{i}\exp\left(-\eta\gamma\frac{k^{1-\alpha}}{1-\alpha}\right) dk\right)^{2}p^{-2\alpha}e^{ \frac{2\eta\gamma}{1-\alpha}p^{1-\alpha}}\\
 			\lesssim&  \sum_{p=1}^{i}\left(\int_{\max(a_{m},p)^{1-\alpha}}^{i^{1-\alpha}}e^{-\frac{\eta\gamma}{1-\alpha}t}t^{\frac{\alpha}{1-\alpha}}dt\right)^{2}p^{-2\alpha}e^{ \frac{2\eta\gamma}{1-\alpha}p^{1-\alpha}}\\
 			\lesssim&  \sum_{p=1}^{i}e^{-\frac{2\eta\gamma}{1-\alpha}\max(a_{m},p)^{1-\alpha}}\max(a_{m},p)^{2\alpha} p^{-2\alpha}e^{ \frac{2\eta\gamma}{1-\alpha}p^{1-\alpha}}\\
 			\lesssim&  \sum_{p=1}^{a_{m}-1}e^{-\frac{2\eta\gamma}{1-\alpha}(a_{m}^{1-\alpha}-p^{1-\alpha})}\left(\frac{a_{m}}{p}\right)^{2\alpha} + l_{i}.
 		\end{split}
 	\end{equation}  	
 	Then $III$ is bounded by 
 	\begin{equation}
 		III \lesssim \left(\sum_{i=1}^{a_{M+1} - 1}l_{i}\right)^{-1}\sum_{m = 1}^{M}\sum_{i=a_{m}}^{a_{m+1} - 1}\sum_{p=1}^{a_{m}-1}e^{-\frac{2\eta\gamma}{1-\alpha}(a_{m}^{1-\alpha}-p^{1-\alpha})}\left(\frac{a_{m}}{p}\right)^{2\alpha} + 1.
 	\end{equation}
 	To show $III$ is also bounded by $O(1)$, it is suffices to show that 
 	\begin{equation}
 		\sum_{m = 1}^{M}\sum_{i=a_{m}}^{a_{m+1} - 1}\sum_{p=1}^{a_{m}-1}e^{-\frac{2\eta\gamma}{1-\alpha}(a_{m}^{1-\alpha}-p^{1-\alpha})}\left(\frac{a_{m}}{p}\right)^{2\alpha}\lesssim \sum_{i=1}^{a_{M+1} - 1}l_{i}.
 	\end{equation}
 	Using partial integration we have the following:
 	$$\int_{1}^{a_{m}-1}e^{\frac{2\eta\gamma}{1-\alpha}p^{1-\alpha}}p^{-2\alpha}dp = \int_{\frac{2\eta\gamma}{1-\alpha}}^{\frac{2\eta\gamma}{1-\alpha}(a_{m}-1)^{1-\alpha}}e^{u}u^{-\frac{\alpha}{1-\alpha}}du\lesssim e^{\frac{2\eta\gamma}{1-\alpha}a_{m-1}^{1-\alpha}}(a_{m}-1)^{-\alpha}.$$
 	Then we have
 	\begin{equation}
 		\begin{split}
 			& \sum_{m = 1}^{M}\sum_{i=a_{m}}^{a_{m+1} - 1}\sum_{p=1}^{a_{m}-1}e^{-\frac{2\eta\gamma}{1-\alpha}(a_{m}^{1-\alpha}-p^{1-\alpha})}\left(\frac{a_{m}}{p}\right)^{2\alpha}\\
 			\lesssim& \sum_{m = 1}^{M}\sum_{i=a_{m}}^{a_{m+1} - 1}e^{-\frac{2\eta\gamma}{1-\alpha}a_{m}^{1-\alpha}}a_{m}^{2\alpha}\int_{1}^{a_{m}-1}e^{\frac{2\eta\gamma}{1-\alpha}p^{1-\alpha}}p^{-2\alpha}dp\\ 
 			\lesssim&\sum_{m=1}^{M}n_{m}a_{m}^{\alpha}.
 		\end{split}
 	\end{equation}
 	Note that $ a_{m}^{\alpha}\lesssim n_{m}$ since $\beta >1/(1-\alpha)$, so we have the following
 	\begin{equation}
 		\sum_{m = 1}^{M}\sum_{i=a_{m}}^{a_{m+1} - 1}\sum_{p=1}^{a_{m}-1}e^{-\frac{2\eta\gamma}{1-\alpha}(a_{m}^{1-\alpha}-p^{1-\alpha})}\left(\frac{a_{m}}{p}\right)^{2\alpha}\lesssim \sum_{m=1}^{M}n_{m}a_{m}^{\alpha}\lesssim \sum_{m=1}^{M}n_{m}^{2}
 		\asymp \sum_{i=1}^{a_{M+1}-1}l_{i}.
 	\end{equation} 
 	
 \end{proof}

 \section{Proof of Main Theorems}\label{app:3}
 \subsection{Proof of Theorem \ref{theorem2}}
 \begin{proof}	In Lemma \ref{linearcase}, we demonstrate the convergence property of the estimator $\tilde{\Sigma}$, which is constructed based on linear process $\{U_{n}\}_{n\in\N}$. Let $s_{n} =  \delta_{n} - U_{n}$ be the difference between the error sequence $\delta_{n}$ and the linear sequence $U_{n}$. It has the following recursion form:
 	\begin{equation} 
 		\begin{split}
 			s_{n} & = \delta_{n-1} -  \eta_{n}\nabla F(x_{n-1}) -  (\I-\eta_{n}A)U_{n-1}\\
 			&= (\I - \eta_{n}A)(\delta_{n-1} - U_{n-1}) - \eta_{n}(\nabla F(x_{n-1})- A\delta_{n-1})\\
 			& = (\I - \eta_{n}A)s_{n-1} - \eta_{n}(\nabla F(x_{n-1})- A\delta_{n-1}).
 		\end{split}
 	\end{equation}
 	When $n$ is big enough, $x_{n-1}$ is close to the minimizer $x^{*}$. Based on Taylor's expansion around $x^{*}$, $\nabla F(x_{n-1})\approx A\delta_{n-1}$ since $\nabla F(x^{*})$ is zero. So
 	\begin{equation} 
 		\begin{split}
 			s_{n} \approx (\I - \eta_{n}A)s_{n-1}.
 		\end{split}
 	\end{equation}
 	It takes a similar linear form as $U_{n}$ and its value is small especially when $\delta_{n}$ is small. So the difference between $U_{n}$ and $\delta_{n}$, i.e., $x_{n} - x^{*}$, decays quickly as $n\rightarrow \infty$. We expect that the covariance matrix estimator $\tilde{\Sigma}$ and the recursive estimator $\hat{\Sigma}$ are asymptotically close. 
 	
 	To show Theorem \ref{theorem2}, it is suffices to show that $\E\|\tilde{\Sigma}_{n} - \hat{\Sigma}_{n}\|_{2}$ can be bounded with the same order as  $\E\|\tilde{\Sigma}_{n} - \Sigma\|_{2}$. 
 	Note that $\delta_{n}=x_{n} - x^{*}$ and $\hat{\Sigma}_{n}$ can be rewritten as 
 	$$
 	\hat{\Sigma}_{n} =  \left(\sum_{i=1}^{n}l_{i} \right)^{-1}\sum_{i=1}^{n}\left(\sum_{k=t_{i}}^{i}\delta_{k}-l_{i}\bar{\delta}_{n}\right)\left(\sum_{k=t_{i}}^{i}\delta_{k}-l_{i}\bar{\delta}_{n}\right)^{T}.
 	$$
 	Plug in the difference $s_{n }=\delta_{n} - U_{n}$, we can expand $\E\|\tilde{\Sigma}_{n} - \hat{\Sigma}_{n}\|_{2}$ as		
 	\begin{equation}\label{ap:12}
 		\begin{split}
 			\E\|\tilde{\Sigma}_{n} - \hat{\Sigma}_{n}\|_{2} \le 2\E\left\|\left(\sum_{i=1}^{n}l_{i} \right)^{-1}\sum_{i=1}^{n}\left(\sum_{k=t_{i}}^{i}U_{k}-l_{i}\bar{U}_{n}\right)\left(\sum_{k=t_{i}}^{i}s_{k}-l_{i}\bar{s}_{n}\right)^{T}\right\|_{2} \\+\E\left\|\left(\sum_{i=1}^{n}l_{i} \right)^{-1}\sum_{i=1}^{n}\left(\sum_{k=t_{i}}^{i}s_{k}-l_{i}\bar{s}_{n}\right)\left(\sum_{k=t_{i}}^{i}s_{k}-l_{i}\bar{s}_{n}\right)^{T}\right\|_{2}.
 		\end{split}
 	\end{equation}
 	We further claim that 
 	\begin{equation}\label{eq:claim_s}
 		\E\left\|\left(\sum_{i=1}^{n}l_{i} \right)^{-1}\sum_{i=1}^{n}\left(\sum_{k=t_{i}}^{i}s_{k}-l_{i}\bar{s}_{n}\right)\left(\sum_{k=t_{i}}^{i}s_{k}-l_{i}\bar{s}_{n}\right)^{T}\right\|_{2}\lesssim M^{-1}.
 	\end{equation}
 	Apply Cauchy's inequality twice, the first part in LHS of (\ref{ap:12}) can be bounded as following:
 	\begin{equation}
 		\begin{split}
 			&\E\left\|\left(\sum_{i=1}^{n}l_{i} \right)^{-1}\sum_{i=1}^{n}\left(\sum_{k=t_{i}}^{i}U_{k}-l_{i}\bar{U}_{n}\right)\left(\sum_{k=t_{i}}^{i}s_{k}-l_{i}\bar{s}_{n}\right)^{T}\right\|_{2}\\
 			&\le \sqrt{\E\|\tilde{\Sigma}_{n}\|_{2}}\sqrt{\E\left\|\left(\sum_{i=1}^{n}l_{i} \right)^{-1}\sum_{i=1}^{n}\left(\sum_{k=t_{i}}^{i}s_{k}-l_{i}\bar{s}_{n}\right)\left(\sum_{k=t_{i}}^{i}s_{k}-l_{i}\bar{s}_{n}\right)^{T}\right\|_{2}}\\
 			&\lesssim M^{-1/2},
 		\end{split}
 	\end{equation}
 	since  $\sqrt{\E\|\tilde{\Sigma}_{n}\|_{2}}$ is bounded by some constant.  
 	Then  $\E\|\tilde{\Sigma}_{n} - \hat{\Sigma}_{n}\|_{2} \lesssim M^{-1/2}$ and we have Theorem \ref{theorem2}. All we need to prove now is the claim in \eqref{eq:claim_s}.
 	By triangle inequality and the fact $\|C\|_{2}\le\text{tr}(C)$ for any positive semi-definite matrix $C$,
 	\begin{equation}\label{ap:15}
 		\begin{split}
 			&\E\left\|\left(\sum_{i=1}^{n}l_{i}\right)^{-1}\sum_{i=1}^{n}\left(\sum_{k=t_{i}}^{i}s_{k}-l_{i}\bar{s}_{n}\right)\left(\sum_{k=t_{i}}^{i}s_{k}-l_{i}\bar{s}_{n}\right)^{T}\right\|_{2} \\
 			\le&\left(\sum_{i=1}^{n}l_{i}\right)^{-1}\sum_{i=1}^{n}\E\text{tr}\left(\left(\sum_{k=t_{i}}^{i}s_{k}-l_{i}\bar{s}_{n}\right)\left(\sum_{k=t_{i}}^{i}s_{k}-l_{i}\bar{s}_{n}\right)^{T} \right)\\
 			=&\left(\sum_{i=1}^{n}l_{i}\right)^{-1}\sum_{i=1}^{n}\E\left\|\sum_{k=t_{i}}^{i}s_{k}-l_{i}\bar{s}_{n}\right\|_{2}^{2}\\
 			\lesssim& \left(\sum_{i=1}^{n}l_{i}\right)^{-1}\sum_{i=1}^{n}\E\left\|\sum_{k=t_{i}}^{i}s_{k}\right\|_{2}^{2}+\left(\sum_{i=1}^{n}l_{i}\right)^{-1}\sum_{i=1}^{n}l_{i}^{2}\E\left\|\bar{s}_{n}\right\|_{2}^{2}.
 		\end{split}
 	\end{equation} 
 	Note that $s_{n}$ takes the form 
 	$$s_{n} = (\I - \eta_{n}A)s_{n-1} - \eta_{n}(\nabla F(x_{n-1})- A\delta_{n-1}), \delta_{0} = 0.$$ 
 	
 	First, we shall prove that $$\left(\sum_{i=1}^{n}l_{i}\right)^{-1}\sum_{i=1}^{n}l_{i}^{2}\E\left\|\bar{s}_{n}\right\|_{2}^{2}=O(M^{-1}).$$ Based on the definition of $Y_{p}^{k}$ we have  
 	$$
 	s_{k} = \sum_{p=1}^{k}Y_{p}^{k}\eta_{p}[A\delta_{p-1}-\nabla F(x_{p-1})],
 	$$
 	and
 	$$
 	\begin{aligned}
 		%\begin{split}
 		\bar{s}_{n} &= n^{-1}\sum_{k=1}^{n}\sum_{p=1}^{k}Y_{p}^{k}\eta_{p}[A\delta_{p-1}-\nabla F(x_{p-1})]\\
 		& = n^{-1}\sum_{p=1}^{n}\left(\I +S_{p}^{n}\right)\eta_{p}[A\delta_{p-1}-\nabla F(x_{p-1})].
 		%	\end{split}
 	\end{aligned}
 	$$
 	By Cauchy's inequality 
 	\begin{equation}
 		\begin{split}
 			\E\left\|\bar{s}_{n}\right\|_{2}^{2}& = n^{-2}\E\left\|\sum_{p=1}^{n}\left(\I +S_{p}^{n}\right)\eta_{p}[A\delta_{p-1}-\nabla F(x_{p-1})]\right\|_{2}^{2}\\
 			&\le n^{-2}\E\left(\sum_{p=1}^{n}\left\|\I +S_{p}^{n}\right\|_{2}\eta_{p}\left\|A\delta_{p-1}-\nabla F(x_{p-1})\right\|_{2}\right)^{2}\\
 			&\le n^{-2}\left(\sum_{p=1}^{n}\left\|\I +S_{p}^{n}\right\|_{2}^{2}\eta_{p}^{2}\right)\left(\sum_{p=1}^{n}\E \left\|A\delta_{p-1}-\nabla F(x_{p-1})\right\|_{2}^{2}\right).
 		\end{split}
 	\end{equation}
 	From Lemma \ref{lemma:S}, $\|S_{p}^{n}\|_{2}\lesssim (p+1)^{\a}$, and therefore $\sum_{p=1}^{n}\left\|\I +S_{p}^{n}\right\|_{2}^{2}\eta_{p}^{2}\lesssim O(n)$. By Taylor's expansion around $x^{*}$, $\left\|A\delta_{p}-\nabla F(x_{p})\right\|_{2} =  O(\|\delta_{p}\|_{2}^{2})$. Then using Lemma \ref{lemma:1}
 	\begin{equation}
 		\sum_{p=1}^{n}\E \left\|A\delta_{p-1}-\nabla F(x_{p-1})\right\|_{2}^{2}\asymp \sum_{p=1}^{n}\E\|\delta_{p-1}\|_{2}^{4}\lesssim \sum_{p=1}^{n}(p-1)^{-2\a}.
 	\end{equation}
 	Since $\a>1/2$, $\sum_{p=1}^{n}(p-1)^{-2\a}=O(1)$. Then $\E\left\|\bar{s}_{n}\right\|_{2}^{2}\lesssim n^{-1}$. Recall that  $n_{k} = Ck^{\beta-1}$and $n\asymp M^{1/\beta}$. We then have,
 	\begin{equation}\label{ap:16}
 		\left(\sum_{i=1}^{n}l_{i}\right)^{-1}\sum_{i=1}^{n}l_{i}^{2}\E\left\|\bar{s}_{n}\right\|_{2}^{2}\lesssim n^{-1}n_{M}\asymp M^{-1}.
 	\end{equation}		
 	Next we shall prove $\left(\sum_{i=1}^{n}l_{i}\right)^{-1}\sum_{i=1}^{n}\E\left\|\sum_{k=t_{i}}^{i}s_{k}\right\|_{2}^{2}$ is bounded by $O(M^{-1})$. For $t_{i}\le k\le i$, where $t_{i}$ is defined in Section \ref{sec:meth}, we have
 	$$
 	\begin{aligned}
 		%\begin{split}
 		s_{k}& = \prod_{p=t_{i}}^{k}\left(\I -\eta_{p}A\right)s_{t_{i}-1} + \sum_{p = t_{i}}^{k}\prod_{i=p+1}^{k}\left(\I - \eta_{i}A\right)\eta_{p}\left(A\delta_{p-1}-\nabla F(x_{p-1})\right)\\
 		& = Y_{t_{i}-1}^{k}s_{t_{i}-1} + \sum_{p=t_{i}}^{k}Y_{p}^{k}\eta_{p}\left(A\delta_{p-1} - \nabla F(x_{p-1})\right),
 		%\end{split}
 	\end{aligned}
 	$$ 
 	and
 	$$
 	\begin{aligned}
 		\sum_{k = t_{i}}^{i}s_{k} = S_{t_{i}-1}^{k}s_{t_{i}-1} + \sum_{p=t_{i}}^{i}\left(\I + S_{p}^{i}\right)\eta_{p}\left(A\delta_{p-1} - \nabla F(x_{p-1})\right).
 	\end{aligned}
 	$$ 
 	Using triangle inequality and Cauchy's inequality ,
 	\begin{equation}
 		\begin{split}
 			\E\left\|\sum_{k = t_{i}}^{i}s_{k}\right\|_{2}^{2}&\lesssim \E\left(\left\|S_{t_{i-1}}^{i}s_{t_{i}-1}\right\|_{2}^{2} + \left(\sum_{p=t_{i}}^{i}\left\|\I + S_{p}^{i}\right\|_{2}\eta_{p}\left\|A\delta_{p-1} - \nabla F(x_{p-1})\right\|_{2}\right)^{2}\right)\\
 			&\lesssim \left\|S_{t_{i-1}}^{i}\right\|_{2}^{2}\E\left\|s_{t_{i}-1}\right\|_{2}^{2} + \left(\sum_{p=t_{i}}^{i}\left\|\I + S_{p}^{i}\right\|_{2}^{2}\eta_{p}^{2}\right)\left(\sum_{p=t_{i}}^{i}\E\left\|A\delta_{p-1} - \nabla F(x_{p-1})\right\|_{2}^{2}\right).
 		\end{split}
 	\end{equation}
 	From Lemma \ref{lemma:S} $\|S_{p}^{i}\|_{2}\lesssim (p+1)^{\a}$, therefore we have $$\sum_{p=t_{i}}^{i}\left\|\I +S_{p}^{n}\right\|_{2}^{2}\eta_{p}^{2}\lesssim  l_{i}.$$
 	According to Taylor's expansion around $x^{*}$, $\left\|A\delta_{p}-\nabla F(x_{p})\right\|_{2} =  O(\|\delta_{p}\|_{2}^{2})$. Then using Lemma \ref{lemma:1} we have,
 	\begin{equation}
 		\sum_{p=t_{i}}^{i}\E \left\|A\delta_{p-1}-\nabla F(x_{p-1})\right\|_{2}^{2}\asymp \sum_{p=t_{i}}^{i}\E\|\delta_{p-1}\|_{2}^{4}\lesssim l_{i}t_{i}^{-2\a}. 
 	\end{equation}
 	Note that $s_{k} = \delta_{k} - U_{k}$. From Lemma \ref{lemma:1} and \ref{lemma:U}, $\E\left\|\delta_{k} \right\|_{2}\asymp\E\left\|U_{k}\right\|_{2} \lesssim k^{-\a}$. So,
 	$$
 	\E\left\|s_{k}\right\|_{2}^{2} \le 2\E\left\|\delta_{k} \right\|_{2}^{2} + 2\E\left\|U_{k}\right\|_{2}^{2}\lesssim k^{-2\a}.
 	$$
 	Thus,
 	$$ 
 	\E\left\|\sum_{k = t_{i}}^{i}s_{k}\right\|_{2}^{2}\lesssim t_{i}^{2\a}t_{i}^{-2\a}+ l_{i}^{2}t_{i}^{-2\a} = 1 + l_{i}^{2}t_{i}^{-2\a}.
 	$$
 	Since $\left(\sum_{i=1}^{n}l_{i}\right)^{-1}\asymp \left(\sum_{m=1}^{M}n_{m}^{2}\right)^{-1}$ and $\alpha > 1/2$, we have
 	\begin{equation}\label{ap:17}
 		\begin{split}
 			\left(\sum_{i=1}^{n}l_{i}\right)^{-1}\sum_{i=1}^{n}\E\left\|\sum_{k=t_{i}}^{i}s_{k}\right\|_{2}^{2}&\lesssim \left(\sum_{m=1}^{M}n_{m}^{2}\right)^{-1}\left(\sum_{m=1}^{M}\sum_{i=a_{m}}^{a_{m+1}-1}(1+ l_{i}^{2}a_{m}^{-2\a})\right)\\
 			&\lesssim n_{M}^{-1} + \left(\sum_{m=1}^{M}n_{m}^{2}\right)^{-1}\left(\sum_{m=1}^{M}n_{m}^{3}a_{m}^{-2\a}\right)\\
 			&\lesssim M^{-1}.
 		\end{split}
 	\end{equation}
 	The claim is proved through (\ref{ap:15}), (\ref{ap:16}) and (\ref{ap:17}).
 	
 	Finally, using the fact  $M = O(n^{1/\beta})$, we can obtain the upper bound in terms of $n$.
 \end{proof}
 
 \subsection{Proof of Theorem \ref{theorem: nonoverlap}}
 The proof for the non-overlapping version is slightly simpler than but almost the same as that for the overlapping version. Instead of writing down the similar long proof, we will provide a high level clarification when changes are needed. 
 
 In the proof of Theorem \ref{theorem2}, we break down the estimation error into several parts in the following form:
 \begin{equation}\label{generalform: O}
 	\left(\sum_{m=1}^{M}\sum_{i=a_{m}}^{a_{m+1}-1}|B_{i}|\right)^{-1}\sum_{m=1}^{M}\sum_{i=a_{m}}^{a_{m+1}-1}T_{i},
 \end{equation}
 where $T_{i}$ is the term associated with batch $B_{i}$, the explicit formula may vary from parts to parts. In the proof for the non-overlapping version, we break down the estimation error into similar parts as above but in the form:
 \begin{equation}\label{generalform: NO}
 	\left(\sum_{m=1}^{M}|B_{a_{m+1}-1}|\right)^{-1}\sum_{m=1}^{M}T_{a_{m+1}-1}.
 \end{equation}
 So, in comparison with the proof for the overlapping version, fewer terms are needed to bound in the non-overlapping version proof. As we can see from previous proof, for large $m$, $T_{i}$'s for $i\in [a_{m}, a_{m+1}-1]$ are usually bounded by the same order, in other words $\sum_{i=a_{m}}^{a_{m+1}-1}T_{i}$ are proportion to $T_{a_{m+1}-1}$. That means the upper bound for $\sum_{m=1}^{M}T_{a_{m+1}-1}$ can be easily generated from the upper bound for $\sum_{m=1}^{M}\sum_{i=a_{m}}^{a_{m+1}-1}T_{i}$. Since $a_{m}$ are polynomially increasing, term in \eqref{generalform: NO} and term in \eqref{generalform: O} are of the same order.
 
 Next, we shall give an example to show how we can leverage pervious proofs. Define 
 \begin{equation}
 	\hat{S}_{n, NOL} = n^{-1}\left[\sum_{m=1}^{M-1}\left(\sum_{k = a_{m}}^{a_{m+1}-1}\epsilon_{k}\right)\left(\sum_{k = a_{m}}^{a_{m+1}-1}\epsilon_{k}\right)^{T} +  \left(\sum_{k = a_{M}}^{n}\epsilon_{k}\right)\left(\sum_{k = a_{M}}^{n}\epsilon_{k}\right)^{T}\right].
 \end{equation}
 We shall follow the same proof of Lemma \ref{lemma2} to show the corresponding result
 \begin{equation}
 	\E\|\hat{S}_{n, NOL}-S\|_{2}\le M^{-\a\beta/2}+ M^{-1/2}.
 \end{equation}
 \begin{proof}
 	We  define
 	$$\hat{S}_{n, NOL}^{*} = n^{-1}\left[\sum_{m=1}^{M-1}\left(\sum_{k = a_{m}}^{a_{m+1}-1}\epsilon_{k}^{*}\right)\left(\sum_{k = a_{m}}^{a_{m+1}-1}\epsilon_{k}^{*}\right)^{T} +  \left(\sum_{k = a_{M}}^{n}\epsilon_{k}^{*}\right)\left(\sum_{k = a_{M}}^{n}\epsilon_{k}^{*}\right)^{T}\right].$$
 	Then we can bound $\E\|{\hat{S}_{n, NOL}} - S\|_{2}$ through triangle inequality 
 	\begin{equation}
 		\E\|{\hat{S}_{n, NOL}} - S\|_{2}\le \E\|\hat{S}^{*}_{n, NOL} - S\|_{2} + \E\|\hat{S}_{n, NOL} - \hat{S}^{*}_{n, NOL}\|_{2}.
 	\end{equation}
 	
 	\bigskip
 	\noindent{\bf Step 1:} Bound $\E\|\hat{S}^{*}_{n, NOL} - S\|_{2}$. 	\\
 	Same as in the proof of Lemma \ref{lemma2}, we have  
 	\begin{equation} 
 		\E\|\hat{S}_{n, NOL}^{*} - S\|_{2} 
 		\le  \sqrt{d\|\E(\hat{S}_{n, NOL}^{*}-S)^{2}\|_{2}}.
 	\end{equation}  
 	Note that by definition of $S$,
 	$$\E(\hat{S}_{n, NOL}^{*}) = n^{-1}\left[\sum_{m=1}^{M-1}\sum_{k = a_{m}}^{a_{m+1}-1}\E(\epsilon_{k}^{*}\epsilon_{k}^{*T}) +   \sum_{k = a_{M}}^{n}\E(\epsilon_{k}^{*}\epsilon_{k}^{*T})\right] = S.$$
 	Then 
 	$$
 	\|\E(\hat{S}_{n, NOL}^{*}-S)^{2}\|_{2} = \|\E\hat{S}_{n, NOL}^{*2}-S^{2}\|_{2}.
 	$$
 	Note  that  $\E(\epsilon_{p_{1}}\epsilon_{p_{2}}^{T}\epsilon_{p_{3}}\epsilon_{p_{4}}^{T})$ is nonzero if and only if for any $r$ there exist $r'\ne r$ such that $p_{r} = p_{r'}$, $r, r' \in \{1,2,3,4\}$. There are two cases we can consider. The first case is $p_{1}=p_{3}\ne p_{2}=p_{4}$ or $p_{1}=p_{4}\ne p_{2}=p_{3}$. This requires $i$ and $j$ in the same block. The second case is  $p_{1}=p_{2}$ and $p_{3}=p_{4}$. So  we can expand $\E{\hat{S}_{n, NOL}}^{2}$ and rewrite it into two parts,
 	\begin{equation}  
 		\E{\hat{S}_{n, NOL}}^{*2} = n^{-2}I + n^{-2} II ,
 	\end{equation}
 	where 
 	$$
 	\begin{aligned} 
 		I = &\E\sum_{m = 1}^{M-1}\sum_{a_{m}\le p_{1}\ne p_{2}\le a_{m+1}-1}\left(\epsilon_{p_{1}}^{*}\epsilon_{p_{2}}^{*T}\epsilon_{p_{1}}^{*}\epsilon_{p_{2}}^{*T} + \epsilon_{p_{1}}^{*}\epsilon_{p_{2}}^{*T}\epsilon_{p_{2}}^{*}\epsilon_{p_{1}}^{*T} \right) + \E\sum_{a_{M}\le p_{1}\ne p_{2}\le n}\left(\epsilon_{p_{1}}^{*}\epsilon_{p_{2}}^{*T}\epsilon_{p_{1}}^{*}\epsilon_{p_{2}}^{*T} + \epsilon_{p_{1}}^{*}\epsilon_{p_{2}}^{*T}\epsilon_{p_{2}}^{*}\epsilon_{p_{1}}^{*T} \right),
 	\end{aligned}
 	$$
 	$$
 	\begin{aligned}
 		II =  \sum_{i \in SET_{n} }\sum_{j  \in SET_{n}}\sum_{p = t_{i}}^{i}\sum_{q = t_{j}}^{j}\E(\epsilon_{p}^{*}\epsilon_{p}^{*T}\epsilon_{q}^{*}\epsilon_{q}^{*T}), (SET_{n} = \{a_2-1, a_3-1, ..., a_M-1 \} \cup \{n\})  .
 	\end{aligned}$$
 	
 	Let  $\|\E(\epsilon_{p_{1}}^{*}\epsilon_{p_{2}}^{*T}\epsilon_{p_{3}}^{*}\epsilon_{p_{4}}^{*T})\|_{2}$ be bounded by constant $C$ for any $p_{r}$,$r\in \{1,2,3,4\}$. Then we can bound $I$ as follows, 
 	\begin{equation}
 		\begin{split}
 			\|I\|_{2}
 			\le &\sum_{m = 1}^{M} \left[\sum_{a_{m}\le p_{1}\ne p_{2}\le a_{m+1}-1}\left(C + C\right) \right]
 			\lesssim   \sum_{m = 1}^{M} n_{m}^{2}.
 		\end{split}
 	\end{equation}
 	Since $n\asymp M^{\beta}$ ,  we have,
 	\begin{equation} 
 		n^{-2}\|I\|_{2}\lesssim \frac{\sum_{m = 1}^{M} n_{m}^{2}}{n^{2}} \lesssim M^{-1}.
 	\end{equation}
 	Next, notice that $\sum_{i \in SET_{n} }\sum_{j  \in SET_{n}}\sum_{p = t_{i}}^{i}\sum_{q = t_{j}}^{j}1 =n^{2}$. Then,
 	\begin{equation} 
 		\begin{split}
 			\left\|n^{-2}II - S^{2}\right\|_{2}=&  n^{-2}\left\|\sum_{i \in SET_{n} }\sum_{j  \in SET_{n}}\sum_{p = t_{i}}^{i}\sum_{q = t_{j}}^{j}(\E(\epsilon_{p}^{*}\epsilon_{p}^{*T}\epsilon_{q}^{*}\epsilon_{q}^{*T})-S^{2})\right\|_{2}\\
 			\le & n^{-2}  \sum_{m = 1}^{M}\sum_{k = 1}^{M} \sum_{p = a_{m}}^{a_{m+1}-1}\sum_{q = a_{k}}^{a_{k+1}-1}\left\|\E(\epsilon_{p}^{*}\epsilon_{p}^{*T}\epsilon_{q}^{*}\epsilon_{q}^{*T}) - S^{2}\right\|_{2} \\
 			= & n^{-2}III + n^{-2}IV.
 		\end{split}
 	\end{equation}				
 	We consider two cases here. One is when $p$ and $q$ are in the same block. Let 
 	$$
 	III = \sum_{m = 1}^{M}\sum_{p = a_{m}}^{a_{m+1}-1}\sum_{q = a_{m}}^{a_{m+1}-1}\left\|\E(\epsilon_{p}^{*}\epsilon_{p}^{*T}\epsilon_{q}^{*}\epsilon_{q}^{*T}) - S^{2}\right\|_{2}.
 	$$ 
 	Here $\|\E(\epsilon_{p}^{*}\epsilon_{p}^{*T}\epsilon_{q}^{*}\epsilon_{q}^{*T})\|_{2}$ is still bounded by constant $C$. Then we have
 	\begin{equation} 
 		\begin{split}
 			n^{-2} III  \le& n^{-2}\sum_{m = 1}^{M}\sum_{p = a_{m}}^{a_{m+1}-1}\sum_{q = a_{m}}^{a_{m+1}-1}\left(C +\left\| S^{2}\right\|_{2}\right)\\
 			\lesssim&n^{-2}\sum_{m=1}^{M}n_{m}^{2}  \lesssim  M^{-1}.
 		\end{split}
 	\end{equation} 
 	The other case is when $p$ and $q$ are in different blocks. Let 
 	$$
 	IV = \sum_{m\ne k} \sum_{q=a_{k}}^{a_{k+1}-1}\sum_{p=a_{m}}^{a_{m+1}-1}\left\|\E(\epsilon_{p}^{*}\epsilon_{p}^{*T}\epsilon_{q}^{*}\epsilon_{q}^{*T}) - S^{2}\right\|_{2}.
 	$$
 	For $p>q$, we have 
 	\begin{equation}
 		\left\|\E(\epsilon_{p}^{*}\epsilon_{p}^{*T}\epsilon_{q}^{*}\epsilon_{q}^{*T}) - S^{2}\right\|_{2} = \left\|\E(\epsilon_{p}^{*}\epsilon_{p}^{*T})\E(\epsilon_{q}^{*}\epsilon_{q}^{*T}) - S^{2}\right\|_{2}= 0 .
 	\end{equation}
 	Therefore $	IV = 0$. 
 	Combining above results, we have
 	\begin{equation} 
 		\left\|n^{-2}II - S^{2}\right\|_{2}\lesssim n^{-2}III  + n^{-2}IV\lesssim M^{-1}.
 	\end{equation}  
 	Thus,
 	$$
 	\E\|\hat{S}^{*}_{n, NOL} - S\|_{2}\le \sqrt{d\E\|\hat{S}^{*2}_{n, NOL} - S^{2}\|_{2}}\lesssim \sqrt{n^{-2}\left\|I\right\|_{2} + \left\|n^{-2}II - S^{2}\right\|_{2}}\lesssim M^{-1/2}.
 	$$
 	
 	\bigskip
 	\noindent{\bf Step 2:} Bound $\E\|\hat{S}_{n, NOL} - \hat{S}^{*}_{n, NOL}\|_{2}$. 	\\ 
 	Let $v_{k} = \epsilon_{k} - \epsilon_{k}^{*}, k\ge 1$.  We can expand $\E\|\hat{S}_{n, NOL} - \hat{S}^{*}_{n, NOL}\|_{2}$ as 		
 	\begin{equation} 
 		\begin{split}
 			&\E\|\hat{S}_{n, NOL} - \hat{S}^{*}_{n, NOL}\|_{2} = \E\left\|n^{-1}\sum_{i \in SET_{n} }\left[\left(\sum_{k = t_{i}}^{i} \epsilon_{k}\right)\left(\sum_{k = t_{i}}^{i} \epsilon_{k}\right)^{T}-\left(\sum_{k = t_{i}}^{i} \epsilon_{k}^{*}\right)\left(\sum_{k = t_{i}}^{i} \epsilon_{k}^{*}\right)^{T}\right] \right\|_{2}\\
 			\le &2\E\left\|n^{-1}\sum_{i \in SET_{n} }\left(\sum_{k = t_{i}}^{i} v_{k}\right)\left(\sum_{k = t_{i}}^{i} \epsilon_{k}^{*}\right)^{T} \right\|_{2} + \E\left\|n^{-1}\sum_{i \in SET_{n} }\left(\sum_{k = t_{i}}^{i} v_{k}\right)\left(\sum_{k = t_{i}}^{i} v_{k}\right)^{T} \right\|_{2}.
 		\end{split}
 	\end{equation} 
 	Apply Cauchy's inequality   
 	\begin{equation} 
 		\E\left\|n^{-1}\sum_{i \in SET_{n} }\left(\sum_{k = t_{i}}^{i} v_{k}\right)\left(\sum_{k = t_{i}}^{i} \epsilon_{k}^{*}\right)^{T} \right\|_{2} \le  \sqrt{\E\|\hat{S}^{*}_{n, NOL}\|_{2}}\sqrt{\E\left\|n^{-1}\sum_{i \in SET_{n} }\left(\sum_{k = t_{i}}^{i} v_{k}\right)\left(\sum_{k = t_{i}}^{i} v_{k}\right)^{T} \right\|_{2}}. 
 	\end{equation}   
 	By triangle inequality and the fact $\|C\|_{2}\le\text{tr}(C)$ for any positive semi-definite matrix $C$,
 	\begin{equation} 
 		\begin{split}
 			&\E\left\|n^{-1}\sum_{i \in SET_{n} }\left(\sum_{k = t_{i}}^{i} v_{k}\right)\left(\sum_{k = t_{i}}^{i} v_{k}\right)^{T} \right\|_{2}
 			\le n^{-1}\sum_{i \in SET_{n} }\E\text{tr}\left(\left(\sum_{k=t_{i}}^{i}v_{k}\right)\left(\sum_{k=t_{i}}^{i}v_{k}\right)^{T} \right)\\
 			=&n^{-1}\sum_{i \in SET_{n} }\E\left\|\sum_{k=t_{i}}^{i}v_{k}\right\|_{2}^{2} \le n^{-1}\sum_{m=1}^{M}\sum_{k=a_{m}}^{a_{m+1}-1}\E\left\|v_{k}\right\|_{2}^{2}\lesssim n^{-1}\sum_{m=1}^{M}n_{m}(a_{m}-1)^{-\alpha}. 
 		\end{split}
 	\end{equation} 
 	The last  inequality comes from the fact $\E\|v_{k}\|_{2}^{2}\lesssim (k-1)^{-\alpha}$. Since $a_{m}\asymp m^{\beta}, n_{m}\asymp m^{\beta-1}$ and $n\asymp M_{\beta}$, 
 	we have 
 	$$
 	\E\left\|n^{-1}\sum_{i \in SET_{n} }\left(\sum_{k = t_{i}}^{i} v_{k}\right)\left(\sum_{k = t_{i}}^{i} v_{k}\right)^{T} \right\|_{2}\lesssim M^{-\alpha\beta}
 	$$	 
 	Then
 	$$
 	\E\|\hat{S}_{n, NOL} - \hat{S}^{*}_{n, NOL}\|_{2}\lesssim M^{-\a\beta/2}.
 	$$
 	
 	Finally,  we  reach the result
 	\begin{equation}
 		\begin{split}
 			\E\|\hat{S}_{n, NOL}-S\|_{2}\lesssim \E\|\hat{S}_{n, NOL}^{*} - S\|_{2} +  \E\|\hat{S}_{n, NOL} - \hat{S}^{*}_{n, NOL}\|_{2}\lesssim M^{-\a\beta/2} + M^{-1/2}.
 		\end{split}
 	\end{equation}
 \end{proof}

 \section{Proof of Proposition \ref{prop:mean}
 }\label{sec: prop}
 Without loss of generality, we assume $x^{*} = 0$. Then in the mean estimation model, the SGD ietrate $x_{i}$ takes the form
 \begin{equation}
 	x_{i} = (1-\eta_{i})x_{i-1}+\eta_{i}e_{i},
 \end{equation} 
 where $\eta_{i}=i^{-\alpha}, 1/2<\alpha<1$. And $e_{i}$ are i.i.d from $N(0, 1)$. Let $x_{0} = 0$, then 
 \begin{equation} 
 	x_{i} = \sum_{p=1}^{i}\prod_{k=p+1}^{i}(1-k^{\alpha})p^{-\alpha}e_{p}.
 \end{equation} 
 Let $W_{k} =\sum_{i=a_{k}}^{a_{k+1}-1}x_{i}$, $1\le k\le M-1$, and $W_{M} = \sum_{i=a_{M}}^{n}x_{i}$ where $M$ satisfies $a_{M}\le n < a_{M+1}$. We can rewrite the covariance of $\sqrt{n}\bar{x}_{n}$ as
 \begin{equation}
 	\text{Var}(\sqrt{n}\bar{x}_{n}) = \frac{\E(W_{1}+...+W_{M})^{2}}{n}.
 \end{equation}
 We can rewrite the estimator as 
 \begin{equation}
 	\hat\Sigma_{n, NOL} = \frac{W_{1}^{2} + W_{2}^{2} + ... + W_{M}^{2}}{n}.
 \end{equation}
 For simplicity, we ignore the $\bar{x}_{n}$ term in the estimator since $\bar{x}_{n}$ converge to $0$ at rate of $O(n^{-1/2})$, which is much faster than the convergence rate of the variance estimator.
 Then 
 \begin{equation}
 	n\text{Bias}(\hat\Sigma_{n, NOL}) = 2\sum_{1\le f < g\le M}\text{Cov}(W_{f}, W_{g}),
 \end{equation}
 and 
 \begin{equation}
 	n^{2}\text{Var}(\hat\Sigma_{n, NOL}) = \sum_{f=1}^{M}\text{Var}(W_{f}^{2}) + 2\sum_{1\le f < g\le M}\text{Cov}(W_{f}^{2}, W_{g}^{2}).
 \end{equation}
 Next, we shall approximate $\text{Var}(W_{f})$ and $\text{Cov}(W_{f}, W_{g})$, $f<g$.
 \begin{equation}\label{eq:var_W_f}
 	\begin{split}
 		\text{Var}(W_{f})& = \sum_{i=a_{f}}^{a_{f+1}-1}\text{Var}(x_{i}) + \sum_{j=a_{f}+1}^{a_{f+1}-1}\sum_{i = a_{f}}^{j-1}\text{Cov}(x_{i}, x_{j})\\
 		& \overset{*}{\asymp} \sum_{i=a_{f}}^{a_{f+1}-1}i^{-\a} + \sum_{j=a_{f}+1}^{a_{f+1}-1}j^{-\a}(1-j^{-\a})\sum_{k=0}^{j-a_{f}-1}(1-j^{-\a})^{k}\\
 		& = \sum_{i=a_{f}}^{a_{f+1}-1}i^{-\a} + \sum_{j=a_{f}+1}^{a_{f+1}-1}(1-j^{-\a})\\
 		& = a_{f+1}-1-a_{f}  .
 	\end{split}
 \end{equation}
 The second line $*$ in \eqref{eq:var_W_f} follows from some simple calculations with $\text{Var}(x_{i})\asymp i^{-\a}$ and $\text{Cov}(x_{i}, x_{j})\asymp j^{-\a}(1-j^{-\a})^{j-i}$ for $ i < j $. Then, 
 \begin{equation}
 	\begin{split}
 		\text{Cov}(W_{f}, W_{g})& =  \sum_{i=a_{f}}^{a_{f+1}-1}\sum_{j = a_{g}}^{a_{g+1}-1}\text{Cov}(x_{i}, x_{j})\\
 		& \asymp \sum_{i=a_{f}}^{a_{f+1}-1}a_{g}^{-\a}\sum_{j = a_{g}}^{a_{g+1}-1}(1-a_{g}^{-\a})^{j-i}\\
 		& = \sum_{i=a_{f}}^{a_{f+1}-1}a_{g}^{-\a}(1-a_{g}^{-\a})^{a_{g}-i}\sum_{l = 0}^{a_{g+1}-1-i}(1-a_{g}^{-\a})^{l}\\
 		& = \sum_{i=a_{f}}^{a_{f+1}-1}(1-a_{g}^{-\a})^{a_{g}-i}  = (1-a_{g}^{-\a})^{a_{g}-a_{f+1}+1}/a_{g+1}^{-a}.
 	\end{split}
 \end{equation}
 Since $W_{f}$ is normal, we have $\text{Var}(W_{f}^{2}) = 2\text{Var}(W_{f})^{2}$ and $\text{Cov}(W_{f}^{2}, W_{g}^{2}) = 2\text{Cov}(W_{f}, W_{g})^{2}$. Then, 
 \begin{equation}
 	\begin{split}
 		n\text{Bias}(\hat\Sigma_{n, NOL}) &\asymp \sum_{g = 2}^{m}\frac{1 - a_{g+1}^{-\a}}{a_{g+1}^{-\a}}\sum_{f=1}^{g-1}(1 - a_{g+1}^{-\a})^{a_{g} - a_{f+1}}\\
 		& = \sum_{g = 2}^{m}\frac{1 - a_{g+1}^{-\a}}{a_{g+1}^{-\a}}O(1) \asymp \sum_{g = 2}^{m}\frac{1 - g^{-\a\beta}}{g^{-\a\beta}} \asymp m^{\a\beta + 1} \asymp n^{\a + 1/\beta}.
 	\end{split}
 \end{equation}
 Also the variance  
 \begin{equation}
 	\begin{split}
 		n^{2}\text{Var}(\hat\Sigma_{n, NOL}) &= \sum_{f=1}^{M}(a_{f+1}-a_{f}-1)^{2} + 2\sum_{1\le f < g\le M}(1-a_{g}^{-\a})^{a_{g}-a_{f+1}+1}/a_{g+1}^{-a}\\
 		&\asymp \sum_{f=1}^{n^{1/\beta}}f^{2\beta - 2}	\asymp n^{2 - 1/\beta}. 
 	\end{split}
 \end{equation}
 Then we have the mean squared error 
 \begin{equation}
 	MSE(\hat\Sigma_{n, NOL}) = \text{Bias}^{2}(\hat\Sigma_{n, NOL})  + \text{Var}(\hat\Sigma_{n, NOL}) \asymp n^{-1/\beta} + n^{2\a + 2/\beta - 2}.
 \end{equation}

 \section{Simulation for stopping rule}\label{sec:stopping_rule_simulation} 
 In this section, we include a simple simulation study applying the fixed-width sequential stopping rule. We set the tolerance $\epsilon_{i} = 0.01$ for $i = 1, ..., d$. The rule is applied to our online approach SGD inference procedure for both linear and logistic regressions with same settings discussed in Section \ref{sec:exper}. We present termination iterations and coverage probabilities at termination in Table \ref{table:stopping_rule}.
 
 \begin{table}[!t]
 	\centering	
 	\caption{Apply fixed-width sequential stopping rule with  the tolerance $0.01$ (discussed in Section \ref{sec:stopping_rule}). We present termination iterations and coverage probabilities at termination. Standard errors are reported in the brackets.}
 	\begin{tabular}{  l | c   c | c c  }
 		\hline
 		&\multicolumn{2}{c}{Linear}&\multicolumn{2}{c}{Logistic}\\
 		\hline	
 		& $d = 5$ &  $d = 20$  & $d = 5$ &  $d = 20$ \\
 		\hline
 		Termination iteration& 47,737 (13,594) & 98,644 (52,424) & 249,962 (63,507) & 446,016 (84,910)\\
 		Coverage probabilities& 0.881 (0.022) & 0.906 (0.020) & 0.865 (0.024) & 0.843 (0.025)\\ 
 		\hline 	    
 	\end{tabular}
 	\label{table:stopping_rule}
 \end{table}

\bibliographystyle{apalike}

\bibliography{reference}
\end{document}